%% file: main.tex
\documentclass[table]{gtech}
\PassOptionsToPackage{table, usenames, dvipsnames}{xcolor}
\usepackage{bigdelim}
\usepackage{longtable}
\usepackage{tabularray}
\usepackage{float}
\usepackage{datatool}
\usepackage[justification=centering]{caption}

\RequirePackage{tgpagella} 
\RequirePackage{mathpazo}  
\usepackage{times}
\usepackage{latexsym}
\usepackage[T1]{fontenc}
\usepackage[utf8]{inputenc}
\usepackage{microtype}

\newcommand{\ignore}[1]{}

\renewcommand{\title}[1]{\newcommand{\titlelist}{{\huge\selectfont #1}}}

\usepackage{arydshln}
\definecolor{CQColor}{rgb}{0.0,0.0,1.0} 

\usepackage{pifont}
\usepackage{tabulary}
\usepackage{fontawesome5}
\usepackage{bbding}
\usepackage{multicol}

\newlength\savewidth

\usepackage{hyperref}
\usepackage{url}

\usepackage{makecell}  
\usepackage{graphicx}
\usepackage{multirow}
\usepackage{color}
\usepackage{array}
\usepackage{algorithm}
\usepackage{algpseudocode}
\usepackage{wrapfig}
\usepackage{amsmath}
\usepackage{makecell}
\usepackage{colortbl}
\usepackage{xcolor}
\usepackage{caption}
\usepackage{subcaption}  
\definecolor{darkred}{RGB}{162, 0, 0}

\definecolor{darkblue}{RGB}{4, 6, 173}

\definecolor{darkgreen}{RGB}{61, 134, 73}

\algnewcommand{\algorithmicinit}{\textbf{Initialize:}}
\algnewcommand{\Init}{\algorithmicinit}
\usepackage{amsmath,amsfonts,bm}
\usepackage{amsthm}
\usepackage{xspace}
\usepackage[most]{tcolorbox}
\usepackage{enumitem}
\usepackage{amssymb}
\usepackage{mathtools}
\usepackage{physics} 
\usepackage{thmtools}

\usepackage{amsfonts,bm}
\usepackage{dsfont}  
\usepackage{diagbox} 

\newtheorem{theorem}{Theorem}[section]
\newtheorem{lemma}[theorem]{Lemma}

\newtheorem{definition}[theorem]{Definition}

\usepackage{tabularx}
\tcbuselibrary{skins}

\usepackage{titletoc}   

\usepackage{booktabs}

\tcbuselibrary{breakable}
\definecolor{my_green}{RGB}{51,102,0}
\definecolor{my_purple}{RGB}{160, 43, 147}
\definecolor{my_blue}{RGB}{15, 158, 213}

\NewDocumentCommand{\ganqu}
{ mO{} }{\textcolor{blue}{\textsuperscript{\textit{ganqu}}\textsf{\textbf{\small[#1]}}}}
\NewDocumentCommand{\yafu}
{ mO{} }{\textcolor{cyan}{\textsuperscript{\textit{yafu}}\textsf{\textbf{\small[#1]}}}}

\NewDocumentCommand{\jianhao}
{ mO{} }{\textcolor{red}{\textsuperscript{\textit{jianhao}}\textsf{\textbf{\small[#1]}}}}

\definecolor{deltaBg}{RGB}{220,230,255} 

\newtcolorbox{remark}[1][]{enhanced,
  breakable,
  colback=violet!6,           
  colframe=violet!50!black,   
  coltitle=black,             
  colbacktitle=violet!18,     
  fonttitle=\bfseries,
  title=Remark,
  #1}

\definecolor{uclablue}{rgb}{0.15, 0.45, 0.68}
\hypersetup{
    breaklinks,
    citecolor=uclablue,
    colorlinks=true,
}

\definecolor{lightgreen}{RGB}{0,150,0}
\definecolor{myred}{RGB}{200,0,0} 

\newcommand{\cimp}[2]{%
  \ensuremath{%
    #1%
    \mathrlap{_{\scriptscriptstyle\textcolor{lightgreen}{#2}}}%
  }%
}

\newcommand{\cdec}[2]{%
  \ensuremath{%
    #1%
    \mathrlap{_{\scriptscriptstyle\textcolor{myred}{#2}}}%
  }%
}

\title{\textbf{Can LLMs Learn to Reason Robustly under Noisy Supervision?}}
\author[1,2]{Shenzhi Yang}
\author[1,2]{Guangcheng Zhu}
\author[2]{Bowen Song}
\author[3]{Sharon Li}
\author[1\ddag]{Haobo Wang}
\author[2]{Xing Zheng}
\author[2]{Yingfan Ma}
\author[2]{Zhongqi Chen}
\author[2]{Weiqiang Wang}
\author[1]{Gang Chen}

\affiliation[1]{Zhejiang University $^2$Ant Group $^3$University of Wisconsin-Madison \\[0.5em]}

\contribution[\ddag]{Corresponding authors.}

\abstract{\fontsize{11pt}{12pt} \textit{
Reinforcement Learning with Verifiable Rewards (RLVR) effectively trains reasoning models that rely on abundant perfect labels, but its vulnerability to unavoidable noisy labels from expert scarcity remains critically underexplored.
In this work, we take the first step toward a systematic analysis of noisy label mechanisms in RLVR.
In contrast to supervised classification, most RLVR algorithms incorporate a \textbf{rollout-based} condition: a label's influence on training is contingent on whether the current policy can generate rollouts that realize it, a property that naturally extends to noisy labels.
Based on this observation, we distinguish two types of noise: \textbf{inactive noisy labels}, which reduce data efficiency, and \textbf{active noisy labels}, which are reinforced and risk skewing the model toward incorrect distributions.
From experiments on training with noisy samples, we identify a \textbf{Early Correctness Coherence} phenomenon: \textit{although noisy samples begin to lag behind in later stages, accuracy on both clean and noisy samples increases similarly in early training.}
Motivated by this dynamic, we propose \textbf{Online Label Refinement (OLR)}, which progressively corrects potentially noisy labels with majority-voted answers when two conditions hold: a \textbf{{positive slope}} in the majority answer’s rollout pass rate and stable \textbf{{historical consistency}} across updates, enabling gradual self-correction as the policy improves.
We evaluate OLR on six in-distribution mathematical reasoning benchmarks (AIME24/25, AMC, MATH-500, Minerva, and Olympiad) and three out-of-distribution tasks (ARC-c, GPQA-diamond, and MMLU-pro). Across noise ratios from 0.1 to 0.9, OLR consistently improves robustness under both inactive and active noisy-label settings, achieving average gains of 3.6\%–3.9\% on in-distribution benchmarks and 3.3\%–4.6\% on out-of-distribution evaluations.
The code is available via \url{https://github.com/ShenzhiYang2000/OLR}.
}}

\addsecondlogo[0.95cm]{assets/zjulogo.png} 
\begin{document}
\maketitle

\input{sections/intro}

\input{sections/related_work}

\input{sections/method}

\input{sections/experiments}
\input{sections/conclusion}

\bibliographystyle{assets/plainnat}
\bibliography{main}

\newpage
\appendix

\input{sections/appendix}

\end{document}

%% file: sections/intro.tex
\section{Introduction}


\begin{wrapfigure}{!t}{0.6\textwidth}
	\centering
 \includegraphics[width=1.0\linewidth]{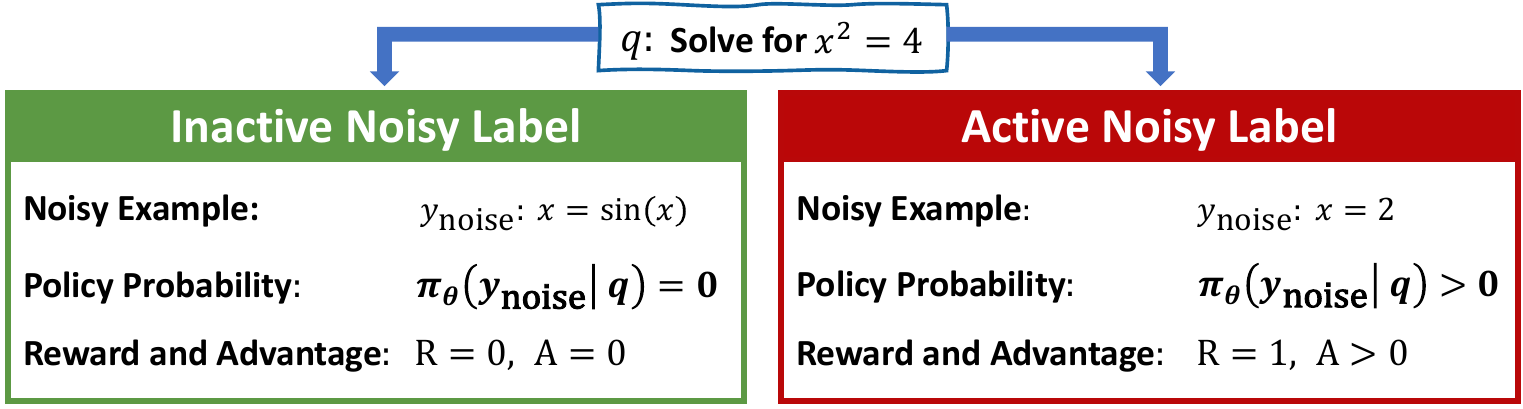}
  \caption{\textbf{Two types of noisy labels in RLVR.} \textit{Inactive noisy label} (left): an incorrect initial label that the model cannot trigger the corresponding reasoning path for and reinforce, so it remains inactive.  
\textit{Active noisy label} (right): an incorrect label that the model has a probability of triggering the corresponding reasoning path for and reinforcing, so it remains active.}
  \label{fig:NoisySample}
\end{wrapfigure}

Following the success of DeepSeek-R1 \citep{guo2025deepseek}, reinforcement learning with verifiable rewards (RLVR) has rapidly emerged as a powerful paradigm for training reasoning-oriented language models. While algorithms such as GRPO \citep{grpo} and its variants \citep{drgrpo,zheng2025group,yu2025dapo} eliminate the need for critic models via intra-group advantage estimation, this framework relies on large amounts of carefully curated data. In practice, however, expert scarcity and weak verifiers \citep{yan2025verifybench} make imperfect labeling unavoidable, resulting in noisy training labels.
Although noisy label learning \citep{natarajan2013learning,sukhbaatar2014training} has been extensively studied in traditional supervised classification \citep{hendrycks2018using,collier2021correlated,li2024noisy}, these studies predominantly focus on static datasets and limited output label spaces. 
However, this perspective overlooks a critical characteristic of most RLVR algorithms arising from its near-infinite label space and on-policy data generation: \textbf{a label's influence on training is contingent on whether the current policy can generate rollouts that realize it, a property that naturally extends to noisy labels.}
Unlike traditional classification, where all wrong labels contribute to loss, in RLVR, a noisy label is reinforced only if the model can generate it. If the model cannot generate the rollouts to realize the noisy label (\textbf{inactive noisy labels}), all generations have equal advantage. These do not actively mislead but waste rollouts and reduce data efficiency. More harmful are noisy labels the model can generate (\textbf{active noisy labels}), which receive positive advantage and steer the policy toward incorrect distributions (Fig.\ref{fig:NoisySample}).
To date, robust training of large language models (LLMs) for reasoning under such noisy supervision remains largely unexplored.

In this work, we take the first step toward a systematic analysis of
noisy-label learning in RLVR.
We observe a key training dynamic, which we term the \textbf{Early Correctness Coherence}: \textbf{\textit{although noisy samples begin to lag behind in later stages, accuracy on both clean and noisy samples increases similarly in early training.}} Interestingly, even with noisy supervision, the chance that the majority answer \citep{zuo2025ttrl} matches the correct answer increases. \textit{This indicates that the training process already produces real correct answers for noisy samples }(Fig.~\ref{fig:earlyfitting}).
Motivated by this observation, we propose \textbf{O}nline \textbf{L}abel \textbf{R}efinement (\textbf{OLR}). Instead of relying on externally provided labels, OLR selectively corrects potentially noisy labels, whether inactive or active, with the model’s own policy-generated majority answers, which are more likely to reflect these emerging correct answers.
Intuitively, this resembles a student learning: early coherence learning builds reasoning ability, and answers that repeatedly emerge from the student’s own practice with increasing confidence are likely to be correct.
In our method, we formalize this intuition by monitoring two signals:

\emph{\textbf{1.} The {\texttt{slope}} of the majority answer’s pass rate across rollouts. A positive slope indicates that as
   the model repeatedly attempts the problem, more rollouts converge to the same answer, suggesting it
   provides a positive expected advantage and meaningful learning signal as the policy improves.}
   
\emph{\textbf{2.} {\texttt{Historical consistency}} of the majority answer. By tracking whether the same majority answer remains dominant over consecutive updates, we filter out transient or accidental majority answers, preventing them from misleading the model.}

In summary, once a majority answer exhibits both a positive slope and historical consistency, we replace the original label with it. This replacement is guided by two complementary signals: the model's improving capability (evidenced by convergent rollouts) and reinforcement dynamics (the answer consistently yields higher expected advantage). As reasoning improves and correct majorities emerge, noisy labels are gradually overwritten, enabling the policy to continuously refine its own training targets. We evaluate OLR on six mathematical reasoning benchmarks (AIME24/25, AMC, MATH-500, Minerva, and Olympiad) and three out-of-distribution tasks (ARC-c, GPQA-diamond, and MMLU-Pro). Across noise ratios from 0.1 to 0.9, OLR achieves consistent gains. Under inactive noise, it yields average improvements of \textbf{3.6\%} on in-distribution benchmarks and \textbf{3.3\%} on out-of-distribution tasks. Under active noise, the gains increase to \textbf{3.9\%} and \textbf{4.6\%}, respectively. These results demonstrate that OLR robustly handles both noise types across diverse task distributions.

%% file: sections/related_work.tex
\section{Related Work}
\noindent\textbf{Reinforcement Learning with Verifiable Rewards (RLVR)} has proven highly effective for training reasoning models in domains with clear ground truth, such as math and code~\citep{orz, team2025kimi, mroueh2025reinforcement, li2025limr}. By replacing learned reward models with rule-based verifiers~\citep{jaech2024openai, guo2025deepseek}, RLVR avoids the complexities of human preference modeling~\citep{christiano2017deep, ouyang2022training}, enabling stable training pipelines that have produced powerful models like DeepSeek-R1~\citep{guo2025deepseek} via algorithms such as GRPO~\citep{grpo, drgrpo, yu2025dapo, zheng2025group}. Parallel work has investigated alternative unsupervised methods, including self-judgment~\citep{wu2024meta,yuan2024self,xiong2025self}, ensemble heads~\citep{wang2024cream,zhou2025self}, and heuristics like entropy and majority voting~\citep{agarwal2025unreasonable,li2025confidence,zuo2025ttrl}, to enable scalable online learning~\citep{zhang2025right,zhao2025learning}.
However, despite the growing body of RLVR research, the critical issue of learning under noisy labels remains largely unaddressed. This paper presents a first exploration of noisy label mitigation in RLVR.

\noindent \textbf{Noisy Label Learning} is well-established, but its techniques rarely accommodate RLVR's generation and on-policy dynamics. Existing methods fall into three categories. 1) \textit{Noise Transition Matrix Estimation} \citep{xiao2015learning,chen2015webly,srivastava2014dropout,sukhbaatar2014training,collier2021correlated,bucarelli2023leveraging} learns flipping probabilities between labels, but is unsuitable for RLVR, where noisy labels form an open set without clear class boundaries. 2) \textit{Loss Correction} \citep{goodfellow2014explaining,pereyra2017regularizing,zhang2017mixup,menon2020can,cheng2021mitigating} uses regularization to reduce overfitting to noise, but these classification-focused methods have limited applicability to RLVR's generative tasks. 3) \textit{Small-loss-based Sample Selection} \citep{malach2017decoupling,wang2018iterative,han2018co,jiang2018mentornet,wu2021ngc,song2021robust} treats low-loss samples as clean. However, in RLVR, small-loss samples are not necessarily beneficial \citep{zhan2025exgrpo}.

%% file: sections/method.tex
\begin{figure*}[!t]
  \centering

    \begin{subfigure}{0.24\textwidth}
    \centering
    \includegraphics[width=\linewidth]{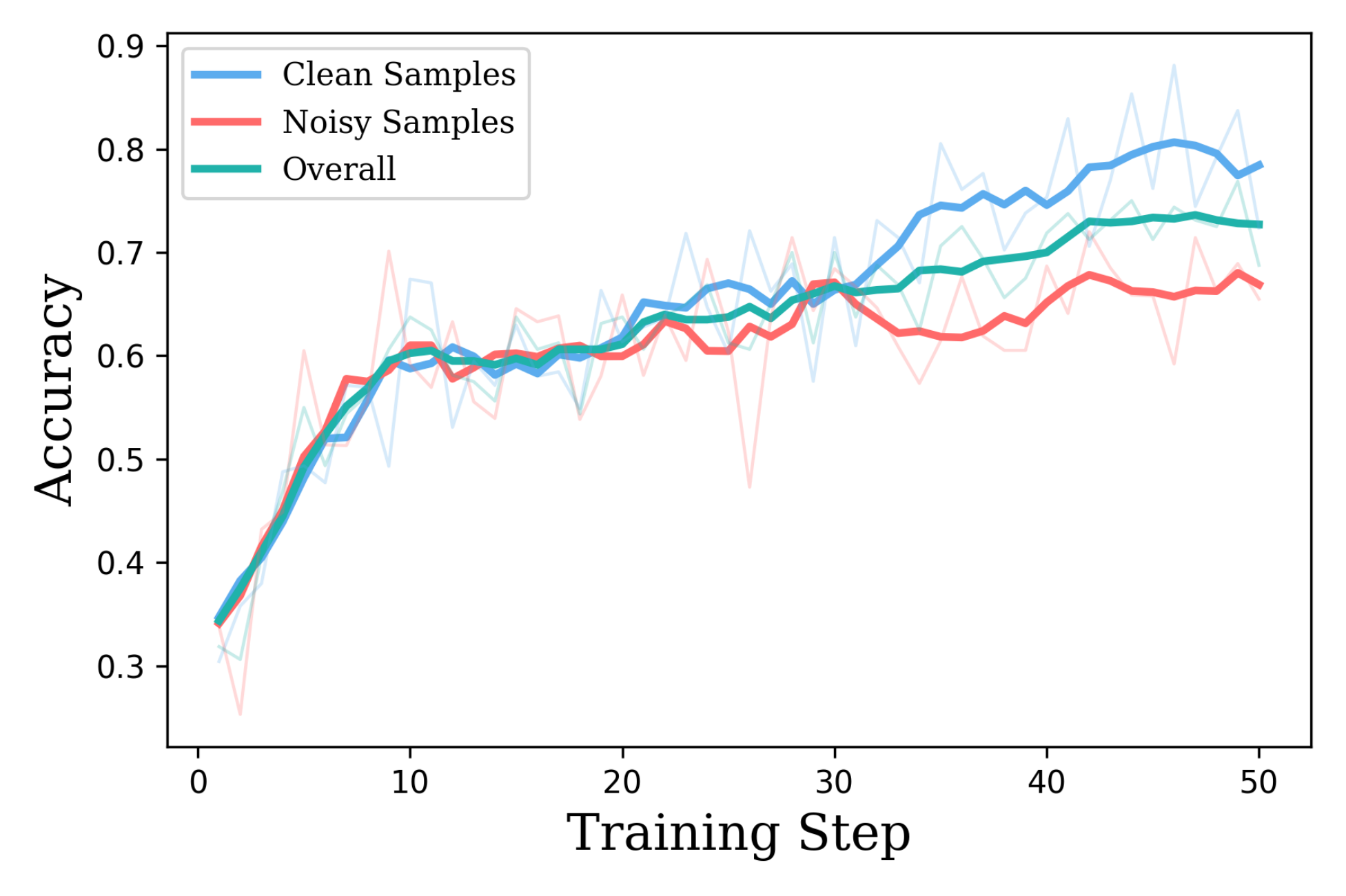}
    \caption{Inactive Noise w/o OLR}
  \end{subfigure}
  \hfill
    \begin{subfigure}{0.24\textwidth}
    \centering
    \includegraphics[width=\linewidth]{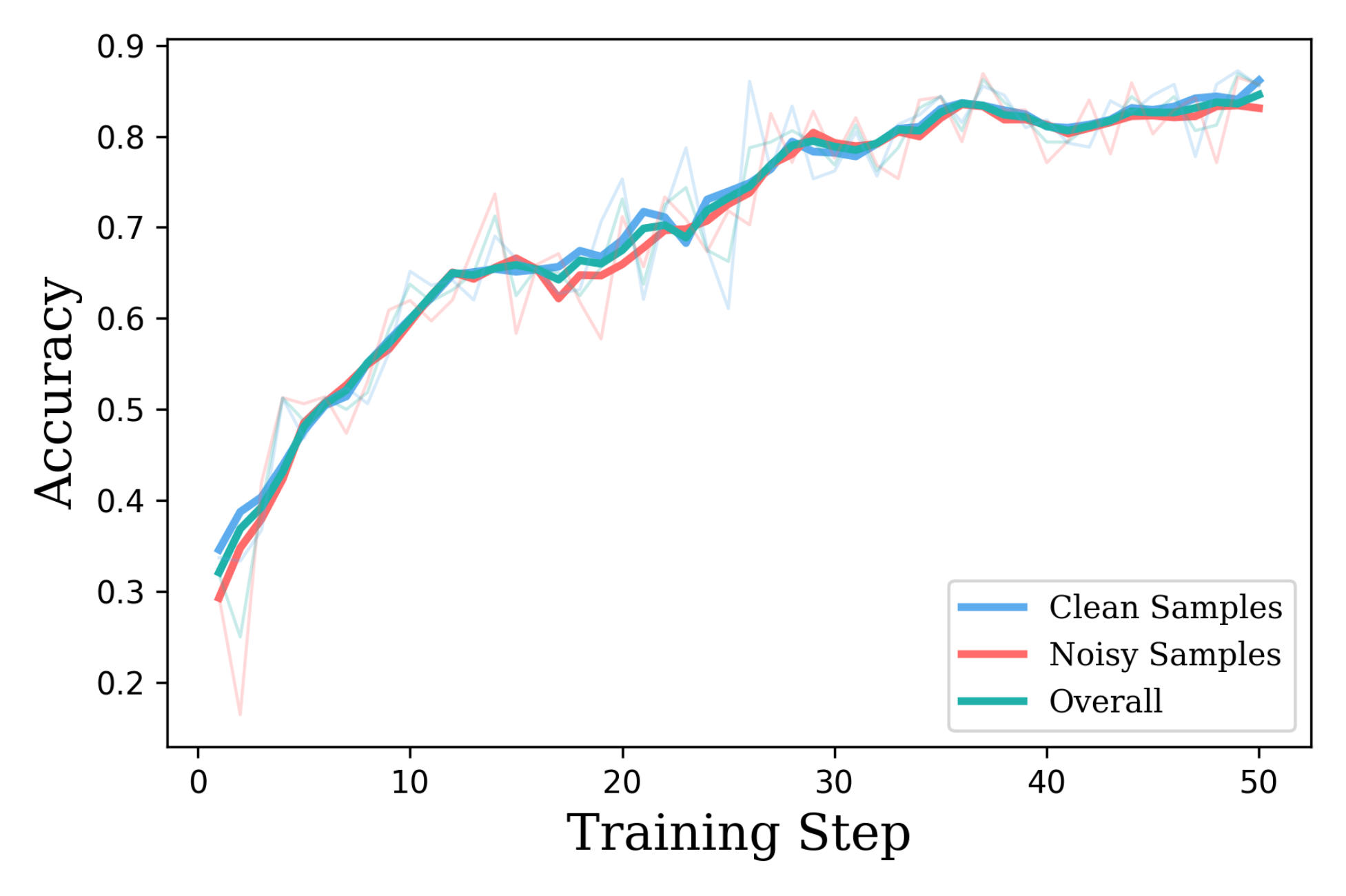}
    \caption{Inactive Noise w/ OLR}
  \end{subfigure}
  \hfill
  \begin{subfigure}{0.24\textwidth}
    \centering
    \includegraphics[width=\linewidth]{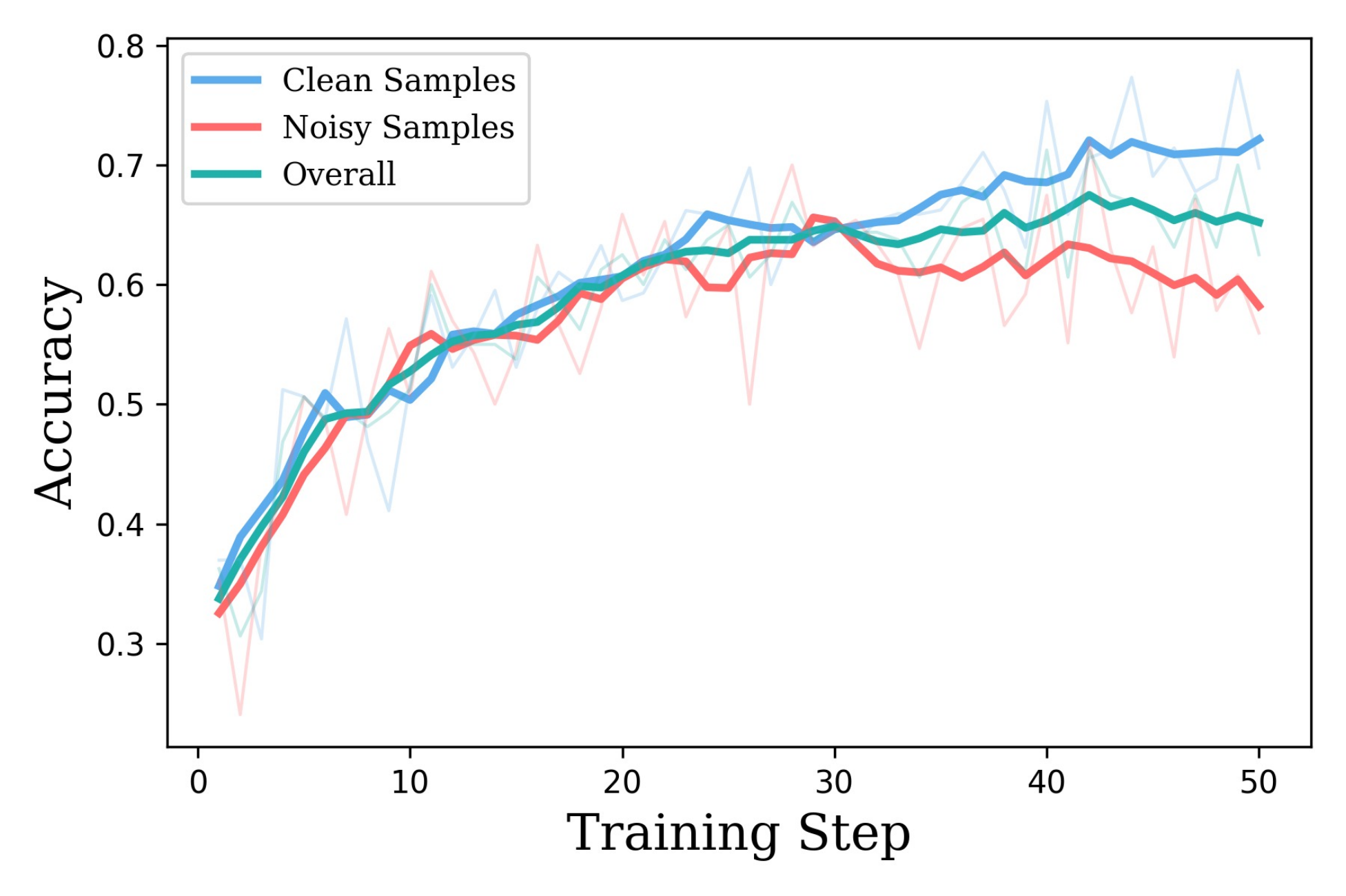}
    \caption{Active Noise w/o OLR}
  \end{subfigure}
  \hfill
  \begin{subfigure}{0.24\textwidth}
    \centering
    \includegraphics[width=\linewidth]{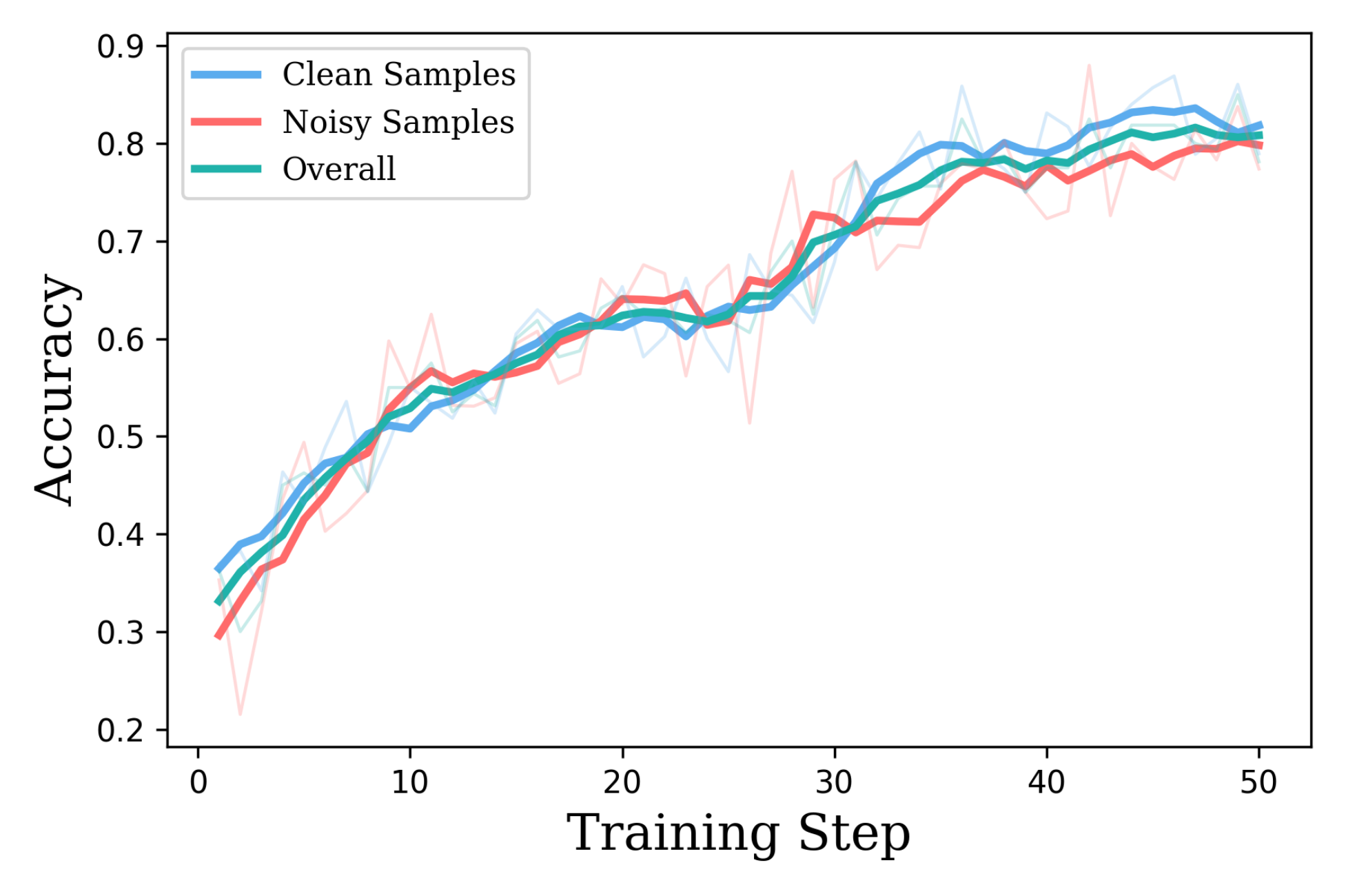}
    \caption{Active Noise w/ OLR}
  \end{subfigure}

  \caption{\textbf{Early Correctness Coherence.} Training accuracy of Qwen3-4B-Base \citep{yang2025qwen3} under noisy supervision (noise ratio 0.5). \textbf{For each sample, we take the majority answer as the model's prediction.} Clean and noisy samples exhibit similar learning dynamics early in training but gradually diverge as training progresses. \textbf{(i)} In the initial phase, the accuracy of correct answers from both groups \textbf{increases steadily}, suggesting that \textit{the model already contains latent correct answers for noisy samples that are not fully exploited}. \textbf{(ii)} In later stages, accuracy on clean samples continues to improve while performance on noisy samples lags behind. Our method, \textbf{O}nline \textbf{L}abel \textbf{R}efinement (\textbf{OLR}), utilizes this early coherence and significantly improves reasoning performance.
}
  \label{fig:earlyfitting}
\end{figure*}

\section{Method}\label{sec:Method}
Section \ref{subsec:problem_formulation} formulates the standard RLVR problem and provides the first definition of noisy labels in RLVR. Section \ref{subsec:Correctness-Early-theory} presents a theoretical analysis of the Early Correctness Coherence phenomenon observed in practice. Building on this analysis, Section \ref{subsec:OLR} introduces our proposed method, \textbf{O}nline \textbf{L}abel \textbf{R}efinement (\textbf{OLR}), the first noisy label mitigation approach for RLVR.

\subsection{Problem Formulation}\label{subsec:problem_formulation}

We consider reinforcement learning with verifiable rewards (RLVR), where a policy
$\pi_\theta(y \mid x)$ is trained to generate a solution $y$ for a given input prompt $x$.
Unlike supervised classification, training data are generated on-policy: for each prompt $x$,
the model samples $K$ rollouts
$
\mathcal{Y}(x) = \{y^{(1)}, \dots, y^{(K)}\} \sim \pi_\theta(\cdot \mid x).
$
A verifier provides a scalar reward signal
$
r(x, y) \in \mathbb{R},
$
which is typically binary, indicating whether the generated solution satisfies
a certain criterion (e.g., correctness or format adherence).
Among the relevant methods, the most popular is Group Relative Policy Optimization (GRPO) \citep{grpo}, which computes the advantage for each
rollout by normalizing rewards within the rollout group:
\begin{equation}
A(x, y^{(k)}) =
\frac{r(x, y^{(k)}) - \mu(x)}{\sigma(x) + \epsilon},
\end{equation}
where
$
\mu(x) = \frac{1}{K} \sum_{k=1}^K r(x, y^{(k)}), \quad
\sigma(x) = \sqrt{\frac{1}{K} \sum_{k=1}^K (r(x, y^{(k)}) - \mu(x))^2},
$ $\epsilon$ is the minimum value to avoid division by zero.
The policy is updated by maximizing the expected advantage-weighted log-likelihood:
\begin{equation}
\mathcal{L}_{\text{GRPO}}(\theta) = \mathbb{E}_{x} \mathbb{E}_{y \sim \pi_\theta(\cdot \mid x)} 
\left[ \text{CLIP}(A(x, y) \frac{\pi_{\theta}(y \mid x)}{\pi_{\theta_{\text{old}}}(y \mid x)}, \epsilon) - \beta\cdot\mathbb{D}_{\text{KL}}[\pi_\theta \| \pi_{\text{ref}}]\right]
\end{equation}
where $\text{CLIP}$ is the clipped surrogate objective, and $\mathbb{D}_{\text{KL}}$ 
denotes the KL divergence.

Although RLVR algorithms like GRPO have achieved great success, they typically require large amounts of perfectly annotated data. In practice, however, due to the scarcity of experts, noisy labels are unavoidable. Moreover, despite their prevalence, in-depth analysis of noisy labels in RLVR is almost nonexistent. Therefore, we take the first step toward understanding their mechanism. To establish a formal understanding, we first define noisy labels within the RLVR framework. Specifically, for a prompt $x$, let $y^\star(x)$ denote the ground-truth solution and $\tilde{y}(x)$ the potentially corrupted training label. Furthermore, whether a noisy label will be reinforced depends critically on whether the policy can roll it out, and we define this property as Rollout Feasibility as follows:
\begin{definition}[Rollout Feasibility]
A solution $y$ is \emph{rollout-feasible} under $\pi_\theta$ if $\pi_\theta(y\mid x) > 0$.
\end{definition}
Then, based on whether the noisy label is rollout-feasible, we classify it into the following two distinct types of noisy labels\footnote{It is worth noting that the correct labels for all samples should be rollout-feasible. If this condition is not met, the sample is invalid and requires external knowledge for the model to learn it, though this falls outside the scope of our research.}:
\begin{definition}[Inactive Noisy Label]
$\tilde y(x) \neq y^\star(x)$ is \emph{inactive} if it is not rollout-feasible:
$\pi_\theta(\tilde y \mid x) = 0$.
\end{definition}

\begin{definition}[Active Noisy Label]
$\tilde y(x) \neq y^\star(x)$ is \emph{active} if it is rollout-feasible:
$\pi_\theta(\tilde y \mid x) > 0$.
\end{definition}

With this formalization in place, a fundamental question arises: \textbf{can models learn to reason robustly under such noisy supervision?} To answer this question, we analyze the training behavior of noisy RLVR 
and the phenomenon, Early Correctness Coherence, observed in practice.

\subsection{Early Correctness Coherence Phenomenon in Noisy RLVR}
\label{subsec:Correctness-Early-theory}

As shown in Figure \ref{fig:earlyfitting}, we observe the phenomenon of \textbf{Early Correctness Coherence} in noisy RLVR training experiments: \textbf{\textit{although noisy samples begin to lag behind in later stages, accuracy on both clean and noisy samples increases similarly in early training.}} 
This intriguing phenomenon reveals that, despite noisy supervision, the training process already produces real correct answers for noisy samples.
We attribute this phenomenon to positive cross-sample coupling enabled by shared parameters, where updates from clean samples inadvertently benefit noisy ones. This mechanism explains why both groups improve in unison during early training, even under noisy supervision.
We formalize this mechanism below.

\begin{theorem}[Early Correctness Coherence in Noisy RLVR] (Informal)
\label{thm:early-main-final}
Let $\mathcal D=\mathcal D_{\rm clean}\cup\mathcal D_{\rm noise}$ with noise ratio
$\rho = |\mathcal D_{\rm noise}|/|\mathcal D|$.  
For each prompt $x$ at epoch $t$, let $p_t(y|x)=\pi_{\theta_t}(y|x)$ and define log-ratio
$
L_t(x)=\log\frac{p_t(y^\star(x)|x)}{p_t(\tilde y(x)|x)} .
$
Assume:  
(i) $p_0(y^\star)>p_0(\tilde y)$;  
(ii) positive cross-sample coupling between the clean sample $x_c$ and the noisy sample $x_n$:
\begin{equation}
\mathbb E_{x_c,x_n}\Big[
\nabla_\theta \log \pi(y^\star(x_c)|x_c)  
\cdot \nabla_\theta \log \pi(y^\star(x_n)|x_n)
\Big] \ge \gamma > 0 
\end{equation}
(iii) mean drift
$
\Delta_s = \gamma(1-\rho)G_c - \rho G_n
$, where $G_c$ and $G_n$ are the mean clean and noisy advantage magnitudes.
Let
$
\rho_c=\frac{\gamma G_c}{\gamma G_c+G_n}.
$
If $\rho<\rho_c$ and $K\gtrsim\log(T/\delta)$, then with probability at least
$1-\delta$, for all $t\le T$,
\begin{equation}
L_t(x)\ge
L_0(x)+
\eta t\!\left(
\Delta_s -
O\!\left(\sqrt{\frac{\log(T/\delta)}{K}}\right)
\right),
\end{equation}
implying that
$
p_t(y^\star(x)|x) \gg p_t(\tilde y(x)|x).
$
\end{theorem}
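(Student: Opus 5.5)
The plan is to study the one-step change $L_{t+1}(x)-L_t(x)$ through a first-order (gradient-flow) expansion of the GRPO update, and then sum it over $t\le T$. The decomposition of the update into clean and noisy contributions supplies the drift $\Delta_s$. Rollout sampling noise supplies the $O(\sqrt{\log(T/\delta)/K})$ correction.

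Throughout, I write $\theta_{t+1}=\theta_t+\eta g_t$. Here $g_t$ is the empirical GRPO gradient, computed in the regime where clipping and the KL term are inactive or absorbed into constants. Its population version is
\[
\bar g_t=(1-\rho)\,\mathbb E_{x_c}\!\big[G_c\,\nabla_\theta\log\pi(y^\star(x_c)|x_c)\big]+\rho\,\mathbb E_{x_n}\!\big[G_n\,\nabla_\theta\log\pi(\tilde y(x_n)|x_n)\big]+(\text{negative-advantage terms}).
\]
A Taylor expansion with an $L$-smoothness assumption on $\log\pi$ gives
\[
L_{t+1}(x)-L_t(x)=\eta\,\big(\nabla\log\pi(y^\star|x)-\nabla\log\pi(\tilde y|x)\big)\cdot g_t+O(\eta^2).
\]
The second-order term is absorbed for small $\eta$.

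\textbf{Step 1 (drift).} I substitute $\bar g_t$ into the inner product and bound the terms separately.

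\emph{Clean part.} Assumption (ii) gives a contribution of at least $\gamma(1-\rho)G_c$ to the $y^\star$-direction.

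\emph{Noisy part.} Bounded, normalized score vectors ($\|\nabla\log\pi\|\le1$ after rescaling) combined with Cauchy–Schwarz bound this contribution below by $-\rho G_n$.

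\emph{Remaining cross terms.} These are terms such as $-\nabla\log\pi(\tilde y|x)\cdot$(clean gradients). I will argue they are nonnegative or fold them into the definition of $G_c$ and $G_n$. Informally, pushing mass toward $y^\star$ on clean samples does not raise $\tilde y$ on $x$.

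Together this yields $\mathbb E[L_{t+1}-L_t\mid\theta_t]\ge\eta\Delta_s$. The condition $\rho<\rho_c$ is exactly the algebraic rearrangement of $\Delta_s>0$. Assumption (i) ensures $L_0>0$, so that the GRPO advantage for $y^\star$ on $x$ has the right sign at initialization.

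\textbf{Step 2 (concentration).} With $K$ rollouts per prompt, the group-normalized advantages are bounded: with binary rewards, $|A|\le\sqrt{K-1}$. For the finite-$K$ estimate of the relevant directional statistic, I will show $|\langle v,g_t-\bar g_t\rangle|\le C\sqrt{\log(1/\delta')/K}$ with probability $1-\delta'$, using Hoeffding's inequality on the sample mean of rewards, which enter through $\mu$ and $\sigma$. I then set $\delta'=\delta/T$ and take a union bound over $t\le T$. This produces the $\sqrt{\log(T/\delta)/K}$ term, and the condition $K\gtrsim\log(T/\delta)$ keeps $\sigma(x)$ bounded away from $0$ so the normalization is stable.

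\textbf{Step 3 (summation).} I telescope the one-step bounds over $t$ to obtain the claimed inequality. Since the bracket is positive for $K$ large, $L_t$ grows linearly in $t$, so $p_t(y^\star|x)/p_t(\tilde y|x)\ge e^{L_t}$ becomes large.

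The main obstacle is Step 2. The GRPO advantage is a ratio of empirical quantities, so the error is not simply a sample mean. My first approach is to linearize $(r-\mu)/(\sigma+\epsilon)$ around the population mean and standard deviation. I would control the deviations of $\mu$ and $\sigma$ separately by Hoeffding, using a lower bound on the population $\sigma$ that holds while $p_t(y^\star)$ stays away from $0$ and $1$. A secondary difficulty is that the iterates are dependent across $t$. A martingale (Azuma) version of the concentration step would handle this, and it gives the same rate up to constants, so the informal statement is unaffected.
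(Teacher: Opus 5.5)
Your overall architecture matches the paper's proof. You do a first-order expansion of $L_{t+1}(x)-L_t(x)$ into a conditional-mean drift plus a fluctuation. You control the fluctuation with Hoeffding on $\hat\mu$ and a Lipschitz/linearization argument for $\hat\sigma$ and the normalized advantage, which is the content of Lemma~\ref{lem:adv-concentration}. Then you sum over $t\le T$. Your per-step bound at confidence $\delta/T$, followed by a union bound, is actually the honest source of the $\log(T/\delta)$. The paper nominally invokes Azuma--Hoeffding, but setting $\epsilon$ to $t$ times the increment bound amounts to the triangle inequality. Since Hoeffding can be applied conditionally on $\mathcal F_t$ at each step, the union bound already handles the dependence across iterates, so you do not need the martingale version.

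Where you depart from the paper is Step 1, and as written it does not go through. The paper never derives the drift from (ii). It simply identifies $\mathbb E[\Delta L_t(x)\mid\mathcal F_t]$ with $\eta\Delta_s$, which is exactly hypothesis (iii). Your derivation has several holes:
\begin{itemize}
\item Cauchy--Schwarz against $v=\nabla\log\pi(y^\star|x)-\nabla\log\pi(\tilde y|x)$, whose norm can be $2$, yields only $-2\rho G_n$, not $-\rho G_n$.
\item The term $-\nabla\log\pi(\tilde y|x)\cdot\nabla\log\pi(y^\star(x_c)|x_c)$ has no sign under any hypothesis.
\item Assumption (ii) is an average over $x_n$ and gives nothing for a fixed $x$.
\item Rescaling the scores to unit norm changes the $\gamma$ of (ii).
\end{itemize}
There is also a sign problem on the noisy prompt $x$ itself. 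Rewards there are computed against $\tilde y(x)$, so rollouts equal to $y^\star(x)$ receive nonpositive advantage, and the ``negative-advantage terms'' you drop push $L_t(x)$ down. In particular, your remark that (i) gives $y^\star$ the right advantage sign at initialization is false; (i) is only needed at the end to get $L_0>0$.

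The fix is to replace Step 1 with a direct appeal to (iii); ``folding the cross terms into $G_c,G_n$'' amounts to exactly that. With that change, the rest of your argument stands.
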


See Appendix~\ref{app:technical} for the formal Theorem~\ref{thm:early-main-final-formal} and the full proof, and Appendix Figure~\ref{fig:assumption_verify} for experimental verification of Theorem~\ref{thm:early-main-final}'s assumptions.

\begin{remark}
Theorem~\ref{thm:early-main-final} suggests that in the early phase, \textit{\textbf{correct answers, with increasing probability, can gradually emerge even on noisy samples through cross-sample coupling.}}
\end{remark}
This indicates that correct answers may be recovered from the model’s rollouts as training progresses, motivating our method, which leverages these emerging answers for label refinement.

\subsection{Online Label Refinement (OLR)}\label{subsec:OLR}

The analysis of Early Correctness Coherence above suggests that correct solutions tend to emerge with progressively higher probability during early training, even on noisy samples. 
Thus, answers whose rollout probability consistently increases are likely to be correct; if such answers differ from the provided label, the original supervision is likely noisy and can be replaced, which forms the foundation of our \textbf{O}nline \textbf{L}abel \textbf{R}efinement (\textbf{OLR}).

Formally, 
at epoch $t$, the policy generates $K$ rollouts 
$\mathcal{Y}_t(x) = \{y^{(1)}_t, \dots, y^{(K)}_t\} \sim \pi_\theta(\cdot \mid x)$.
To estimate solution probabilities, we use the empirical pass rate from these rollouts. 
Since enumerating all possible answers is infeasible, we track only the \textit{majority answer}
$
y^{\text{maj}}_t(x) = \arg\max_{c} |\{y \in \mathcal{Y}_t(x) : y = c\}|,
$
with pass rate
$
p^{\text{maj}}_t(x) = \frac{1}{K}|\{y \in \mathcal{Y}_t(x) : y = y^{\text{maj}}_t(x)\}|.
$
We maintain a trajectory
$
\mathcal{H}_t(x) = \{(t', y^{\text{maj}}_{t'}(x), p^{\text{maj}}_{t'}(x)) : t' \in [1,t]\}
$
to evaluate the reliability of the candidate answer.

\textbf{Criterion 1: Positive Convergence Slope.}
To detect reliable improvement, we compute the linear regression slope of the pass-rate trajectory. Let
$\mathbf{p}_t(x) = [p^{\text{maj}}_{1}(x), \dots, p^{\text{maj}}_t(x)]^\top$
and $\mathbf{t} = [1, \dots, t]^\top$. The slope is
\begin{equation}\label{eq:slope}
S_t(x)=
\frac{(\mathbf{t}-\bar{t}\mathbf{1})^\top(\mathbf{p}_t(x)-\bar{p}\mathbf{1})}
{(\mathbf{t}-\bar{t}\mathbf{1})^\top(\mathbf{t}-\bar{t}\mathbf{1})},
\end{equation}
where $\bar{t}=\frac{1}{t}\sum_{i=1}^t i$ and $\bar{p}=\frac{1}{t}\sum_{i=1}^t p^{\text{maj}}_{i}(x)$. 
A positive slope $S_t(x)>\delta_{\text{slope}}$ indicates increasing model confidence.

\textbf{Criterion 2: Historical Consistency.}
To filter stochastic fluctuations, we check whether the current majority answer matches the historical majority
$
y^{\text{hist}}_t(x)=
\arg\max_{y}|\{(t',y^{\text{maj}}_{t'}(x))\in\mathcal{H}_t(x):y^{\text{maj}}_{t'}(x)=y\}|.
$
The consistency indicator is
\begin{equation} \label{eq:hist_consistency}
C_t(x)=\mathbb{I}\left(y^{\text{maj}}_t(x)=y^{\text{hist}}_t(x)\right).
\end{equation}

After an initial early learning phase of $T$ epochs to accumulate rollout statistics, the effective label $\hat{y}_t(x)$ used for reward computation is
\begin{equation}
\hat{y}_t(x)=
\begin{cases}
y^{\text{maj}}_t(x) & \text{if } S_t(x)>\delta_{\text{slope}} \text{ and } C_t(x),\\
\tilde{y}(x) & \text{otherwise}.
\end{cases}
\label{eq:effective_label}
\end{equation}

We adopt GRPO as the base optimizer; incorporating OLR enhances the model's tolerance to noisy labels during training. See below for theoretical analysis and Fig.~\ref{fig:sensitive} (f) for experimental validation.
\begin{theorem}[OLR Improves Label Noise Tolerance]
\label{thm:olr-tolerance}
Let $\Delta$ denote the probability that a noisy prompt satisfies the OLR replacement criteria. 
When OLR replaces a label, the selected label equals the correct solution with probability at least
$
\Pr(\hat y_t(x)=y^\star(x)) \ge 1-\epsilon,
$
where $\epsilon = O(\exp(-K\Delta_p^2))$ and $\Delta_p$ denotes the probability gap between the correct solution and competing answers.
Consequently, the effective noise ratio becomes
$
\rho_{\text{eff}} = \rho(1-\Delta) < \rho,
$
which increases the tolerable noise threshold from $\rho_c$ to
$
\rho_c^{\text{OLR}} = \frac{\rho_c}{1-\Delta} > \rho_c
$
\end{theorem}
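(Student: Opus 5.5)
The argument has two parts. First, a concentration bound shows that whenever OLR replaces a label, the majority answer is correct with high probability. Second, replacing labels lowers the effective noise ratio, and Theorem~\ref{thm:early-main-final} then gives the larger threshold.

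\textbf{Step 1: majority-vote correctness.} Fix a noisy prompt $x$ at an epoch $t$ after the warm-up phase. Theorem~\ref{thm:early-main-final} gives $L_t(x)>0$, so $y^\star(x)$ is the mode of $p_t(\cdot\mid x)$. Let $\Delta_p = p_t(y^\star\mid x) - \max_{c\neq y^\star} p_t(c\mid x) > 0$. For each competitor $c$, the indicator differences $\mathbb{I}(y^{(k)}=y^\star)-\mathbb{I}(y^{(k)}=c)$ are i.i.d., bounded in $[-1,1]$, and have mean at least $\Delta_p$. Hoeffding's inequality then gives
\[
\Pr\bigl(\#c \ge \#y^\star\bigr)\le \exp(-K\Delta_p^2/2).
\]
The competing answers are unbounded in number, so a naive union bound over them fails. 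I would first restrict to the answers that receive mass at least $\Delta_p$; there are at most $1/\Delta_p$ of them. The remaining answers can be absorbed using the fact that their total mass cannot concentrate on a single value beyond $p_t(y^\star)-\Delta_p$. This is the main technical obstacle. If it becomes messy, I would instead take the standing assumption that competitors form a finite set of size $M$, accept the constant $M$ inside the $O(\cdot)$, and obtain $\epsilon=O(\exp(-K\Delta_p^2))$ after adjusting constants.

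\textbf{Step 2: conditioning on the replacement event.} Replacement requires $S_t(x)>\delta_{\rm slope}$ and $C_t(x)=1$. Historical consistency means that the current majority agrees with the majority over the past epochs. Each of those past majorities is correct with probability at least $1-\epsilon$ by Step 1, so conditioning on consistency does not increase the error beyond $O(\epsilon)$ after adjusting constants. I would argue this through a union bound over the history, or simply by noting that conditioning on an event of probability at least $\Delta$ inflates the error by at most a factor $1/\Delta$, which is absorbed into the $O(\cdot)$. This yields $\Pr(\hat y_t(x)=y^\star(x))\ge 1-\epsilon$ on the replacement event.

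\textbf{Step 3: effective noise ratio and threshold.} A noisy prompt keeps a wrong label only if it is not replaced (probability $1-\Delta$) or if it is replaced incorrectly (probability at most $\Delta\epsilon$). Clean prompts have majority $y^\star$ with high probability, so replacing their labels keeps them clean up to $O(\epsilon)$. Neglecting the $O(\epsilon)$ terms, the effective noise ratio is
\[
\rho_{\rm eff}=\rho(1-\Delta)<\rho \quad \text{for } \Delta>0.
\]
Now apply Theorem~\ref{thm:early-main-final} to the refined dataset, with $\rho_{\rm eff}$ in place of $\rho$. The drift $\Delta_s=\gamma(1-\rho_{\rm eff})G_c-\rho_{\rm eff}G_n$ is positive iff $\rho_{\rm eff}<\rho_c$, that is, iff
\[
\rho<\frac{\rho_c}{1-\Delta}=\rho_c^{\rm OLR}>\rho_c.
\]
This last step treats $G_c,G_n,\gamma$ as unchanged by relabeling. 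That is justified because corrected prompts behave as clean ones, which if anything increases the clean contribution and so only strengthens the bound.
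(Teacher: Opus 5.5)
Your route is the paper's route. You use a Chernoff/Hoeffding plus union-bound argument to show the majority answer equals $y^\star(x)$, then substitute $\rho_{\text{eff}}=\rho(1-\Delta)$ into the drift condition $\gamma(1-\rho)G_c-\rho G_n>0$ to obtain $\rho_c/(1-\Delta)$.

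On two points you are more careful than the paper, which passes over both in silence:
\begin{enumerate}
\item The union bound over an unbounded set of wrong answers. Your finite-$M$ fallback works. So does packing the competitors into $O(1/p^\star)$ groups of total mass at most $p^\star-\Delta_p/2$ and applying Hoeffding to each group's aggregate count, which gives $\epsilon$ of order $(1/p^\star)e^{-K\Delta_p^2/8}$.
\item The passage from an unconditional majority-vote bound to one conditional on the replacement event. Your bound $\Pr(E\mid R)\le\Pr(E)/\Pr(R)\le\epsilon/\Delta$ is the right repair, read at the level of the noisy population, since $\Delta$ is an average over noisy prompts.
\end{enumerate}

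The step that does not hold as written is the opening of Step 1. Theorem~\ref{thm:early-main-final} gives only $L_t(x)>0$, i.e.\ $p_t(y^\star\mid x)>p_t(\tilde y\mid x)$. That compares $y^\star$ with the single noisy label and says nothing about other wrong answers. So it does not make $y^\star$ the mode and does not yield $\Delta_p>0$.

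The appeal is also circular here. That theorem needs $\rho<\rho_c$, but the point of the present result is the regime $\rho_c\le\rho<\rho_c^{\text{OLR}}$. There, before refinement, the drift $\gamma(1-\rho)G_c-\rho G_n$ is non-positive, and nothing guarantees that correct answers have emerged.

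The paper makes the same move: it attributes $\Delta_p>0$ to ``early correctness dominance''. So you are no worse off than the reference proof. The honest fix is the one the statement's wording already suggests: treat $\Delta_p>0$ at replacement time, and $\Delta>0$, as hypotheses rather than consequences of Theorem~\ref{thm:early-main-final}.

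Your closing claim that relabeling leaves $\gamma, G_c, G_n$ unchanged, or only helps, is likewise a modeling assumption shared with the paper rather than something proved. The surviving noisy set is a selected, harder subpopulation, so its coupling with clean prompts need not stay at $\gamma$.
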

For a detailed proof, please refer to Appendix \ref{proof_thm_noise_tolerance}.
Overall, OLR dynamically selects reliable majority answers to rewrite noisy labels by analyzing the slope of the pass rate trend for majority answers and their historical consistency, thereby achieving effective denoising. Compared to model inference and update, the additional time overhead introduced by OLR is almost negligible (see Appendix Table~\ref{tab:olr_time}). Below, we demonstrate OLR's effectiveness through experimental results.

%% file: sections/experiments.tex
\begin{table*}[!t]
\caption{Results on Qwen3-4B-Base under different noise ratios. Each numerical subscript indicates the absolute \textcolor{lightgreen}{\textbf{improvement}} or \textcolor{myred}{\textbf{degradation}} compared with naive GRPO without OLR. \textbf{Bold} indicates the better.}
\label{tab:qwen4b_trapo_results}
\centering
\small
\resizebox{\textwidth}{!}{%
\begin{tabular}{lccccccccccc}
\toprule
\multirow{2.5}{*}{\textbf{Method}} 
& \multicolumn{7}{c}{\textbf{In-Distribution}} 
& \multicolumn{4}{c}{\textbf{Out-of-Distribution}} \\
\cmidrule(lr){2-8}
\cmidrule(lr){9-12}
& AIME24 & AIME25 & AMC & MATH-500 & Minerva & Olympiad 
& \phantom{0}\phantom{0}\textbf{Avg.}\phantom{0}\phantom{0}
& ARC-c & GPQA$^{\star}$ & MMLU-Pro &  \phantom{0}\phantom{0}\textbf{Avg.}\phantom{0}\phantom{0}\\
\midrule

\makecell[l]{Qwen3-4B-Base} 
& 9.6 & 4.2 & 34.2 & 52.6 & 19.5 & 28.4 & 24.8 
& 35.8 & 14.1 & 33.3 & 27.7\\

\midrule
\multicolumn{12}{l}{\emph{\quad \textbf{Inactive Noisy Label}}} \\

$\rho = 0.1$
& 21.3 & 20.4 & 52.7 & 83.6 & 39.7 & 49.3 & 44.5 
& 86.3 & 37.4 & 59.0 & 60.9 \\

\rowcolor{cyan!10}
$\ \ \ \ \text{w/ OLR}$ 
& 22.5 & 19.6 & 55.0 & 83.0 & 41.2 & 48.3 
& \cimp{\textbf{44.9}}{\textbf{+0.4}} 
& 88.9 & 37.9 & 59.5 
& \cimp{\textbf{62.1}}{\textbf{+1.2}} \\

\midrule

$\rho = 0.3$
& 12.9 & 8.3 & 50.2 & 75.8 & 42.0 & 43.0 & 38.7 
& 88.1 & 29.8 & 56.2 & 58.0 \\

\rowcolor{cyan!10}
$\ \ \ \ \text{w/ OLR}$ 
& 23.8 & 16.7 & 53.5 & 82.6 & 40.8 & 48.7 
& \cimp{\textbf{44.4}}{\textbf{+5.7}} 
& 88.1 & 36.4 & 57.6 
& \cimp{\textbf{60.7}}{\textbf{+2.7}} \\

\midrule

$\rho = 0.5$
& 12.5 & 5.0 & 45.3 & 76.0 & 34.9 & 42.7 & 36.1 
& 86.7 & 28.8 & 56.0 & 57.2 \\

\rowcolor{cyan!10}
$\ \ \ \ \text{w/ OLR}$ 
& 21.7 & 18.3 & 53.9 & 84.2 & 42.3 & 48.9 
& \cimp{\textbf{44.9}}{\textbf{+8.8}} 
& 86.8 & 38.4 & 57.8 
& \cimp{\textbf{61.0}}{\textbf{+3.8}} \\

\midrule

$\rho = 0.7$
& 9.6 & 8.8 & 45.9 & 76.4 & 37.5 & 42.7 & 36.8 
& 83.4 & 26.3 & 54.5 & 54.7 \\

\rowcolor{cyan!10}
$\ \ \ \ \text{w/ OLR}$  
& 14.6 & 11.3 & 48.0 & 79.2 & 37.1 & 43.4 
& \cimp{\textbf{38.9}}{\textbf{+2.1}} 
& 87.4 & 29.8 & 57.3 
& \cimp{\textbf{58.2}}{\textbf{+3.5}} \\

\midrule

$\rho = 0.9$
& 12.5 & 9.6 & 44.3 & 73.4 & 37.5 & 40.0 & 36.2 
& 76.5 & 27.3 & 51.5 & 51.8 \\

\rowcolor{cyan!10}
$\ \ \ \ \text{w/ OLR}$ 
& 10.8 & 9.6 & 48.2 & 76.6 & 37.1 & 42.2 
& \cimp{\textbf{37.4}}{\textbf{+1.2}} 
& 85.9 & 30.3 & 55.3 
& \cimp{\textbf{57.2}}{\textbf{+5.4}} \\

\midrule
\multicolumn{11}{l}{\emph{\quad \textbf{Active Noisy Label}}} \\

$\rho = 0.1$
& 13.8 & 9.6 & 48.0 & 77.8 & 37.5 & 42.4 & 38.2 
& 84.6 & 30.3 & 55.5 & 56.8\\

\rowcolor{cyan!10}
$\ \ \ \ \text{w/ OLR}$ 
& 19.2 & 19.2 & 52.0 & 81.6 & 39.7 & 52.0 
& \cimp{\textbf{44.0}}{\textbf{+5.8}} 
& 86.8 & 35.4 & 59.1 &  \cimp{\textbf{60.4}}{\textbf{+3.6}} \\

\midrule

$\rho = 0.3$
& 13.8 & 9.6 & 47.1 & 73.6 & 36.0 & 41.8 & 37.0 
& 80.8 & 25.3 & 51.7 &52.6\\

\rowcolor{cyan!10}
$\ \ \ \ \text{w/ OLR}$  
& 13.8 & 12.9 & 50.3 & 77.4 & 36.4 & 41.3 
& \cimp{\textbf{38.7}}{\textbf{+1.7}} 
& 85.4 & 33.3 & 56.7 & \cimp{\textbf{58.5}}{\textbf{+5.9}}\\

\midrule

$\rho = 0.5$
& 10.4 & 8.8 & 45.8 & 74.6 & 33.1 & 40.4 & 35.5 
& 73.9 & 24.7 & 48.6 &49.1\\

\rowcolor{cyan!10}
$\ \ \ \ \text{w/ OLR}$ 
& 20.4 & 15.4 & 49.7 & 81.4 & 36.4 & 48.1 
& \cimp{\textbf{41.9}}{\textbf{+6.4}} 
& 78.5 & 28.3 & 54.8 & \cimp{\textbf{53.9}}{\textbf{+4.8}}\\

\midrule

$\rho = 0.7$
& 10.4 & 9.2 & 45.5 & 74.2 & 31.6 & 39.0 & 35.0 
& 66.2 & 24.7 & 49.2 &46.7\\

\rowcolor{cyan!10}
$\ \ \ \ \text{w/ OLR}$ 
& 10.8 & 10.0 & 47.1 & 76.2 & 37.5 & 41.5 
& \cimp{\textbf{37.2}}{\textbf{+2.2}} 
& 80.1 & 31.8 & 52.4 &  \cimp{\textbf{54.8}}{\textbf{+8.1}}\\

\midrule

$\rho = 0.9$
& 8.3 & 5.8 & 27.0 & 43.0 & 21.7 & 24.3 & 21.7 
& 34.2 & 12.6 & 30.5 &25.8\\

\rowcolor{cyan!10}
$\ \ \ \ \text{w/ OLR}$ 
& 7.1 & 5.8 & 35.7 & 47.8 & 25.7 & 28.6 
& \cimp{\textbf{25.1}}{\textbf{+3.4}} 
& 31.1 & 16.2 & 32.1 & \cimp{\textbf{26.5}}{\textbf{+0.7}}\\

\bottomrule
\end{tabular}
}
\end{table*}

\section{Experiment}\label{sec-exp}
In Section \ref{subsec:setup}, we introduce the dataset used in the experiments, the design for generating noise labels, and the baseline methods we compare. In Section \ref{subsec:experiment_results}, we analyze the performance gains of OLR under various noise ratios, compare it with baselines, examine training dynamics, present parameter sensitivity analysis, and ablation results. Appendix \ref{apdix:more_experiment} presents additional experiments on model sizes, LLM types, Early Correctness Coherence, and training dynamics under inactive noise.
\subsection{Setup}\label{subsec:setup}

\paragraph{Dataset and Benchmarks.}
To investigate how varying proportions of active and inactive noisy labels affect RLVR, we sample 800 instances from DAPO-Math \citep{yu2025dapo} (default setting) and inject noise at ratios of \{0.1, 0.3, 0.5, 0.7, 0.9\}. Additionally, to examine scalability, we sample 4,000 instances with a noise ratio of 0.5. For inactive noise, we replace correct labels with strings that the model is unlikely to output. However, static construction of active noise is impractical, as the probability that noisy labels actually appear during rollout depends on both the noise ratio and the model's output probability\footnote{For example, with 50\% active noise proportion (400/800 samples) and 50\% model confidence on these noisy labels, rolling out each sample 8 times gives only $(1 - 0.5^8)^{400} \approx 0.21$ probability that all noisy samples output noisy labels.}. Therefore, we dynamically construct active noise labels during inference by replacing correct labels with incorrect answers generated by the model in an on-policy manner.
For evaluation, we use six in-distribution math reasoning benchmarks: AIME 2024, AIME 2025, AMC \citep{li2024numinamath}, Minerva \citep{dataset_minerva}, OlympiadBench \citep{dataset_olympiad}, and MATH-500 \citep{dataset_math}. We report \texttt{avg@32} on AIME 2024/2025 and AMC (due to small test sets) and \texttt{pass@1} on the others. For out-of-distribution generalization, we evaluate on ARC-c \citep{arc}, GPQA-diamond \citep{gpqa} (denoted GPQA$^{*}$), and MMLU-Pro \citep{mmlu_pro}, covering open-domain reasoning, graduate-level science, and academic reasoning.

\vspace{-0.5\baselineskip}
\paragraph{Baseline Methods.}
We compare two categories of baseline methods.
The first category consists of unsupervised approaches that do not rely on ground-truth labels: (1) \textbf{TTRL} \citep{zuo2025ttrl}; (2) \textbf{Co-Reward} \citep{zhang2025co}; (3) \textbf{Self-Certainty} \citep{zhao2025learning}; (4) \textbf{Token-Level Entropy} \citep{agarwal2025unreasonable}; and (5) \textbf{Sentence-Level Entropy} \citep{agarwal2025unreasonable}.
The second category includes noise-robust learning methods and transfer-friendly regularization techniques: (6) \textbf{Confidence Penalty} \citep{pereyra2017regularizing}; (7) \textbf{Label Smoothing} \citep{lukasik2020does}; (8) \textbf{Small-loss Selection} \citep{gui2021towards}; and (9) \textbf{Random Selection}.
Detailed description is in Appendix \ref{subsec:baseline_details}.






\begin{table*}[!t]
\caption{Results on Qwen3-4B-Base under a 50\% noise ratio with different baselines. Each numerical subscript indicates the absolute \textcolor{lightgreen}{\textbf{improvement}} or \textcolor{myred}{\textbf{degradation}} compared with naive GRPO. \textbf{Bold} indicates the best in inactive noise scenarios, and \underline{underline} indicates the best in active noise scenarios.}
\label{tab:qwen4b_baseline_results}
\centering
\small
\resizebox{\textwidth}{!}{%
\begin{tabular}{lccccccccccc}
\toprule
\multirow{2.5}{*}{\textbf{Method}} 
& \multicolumn{7}{c}{\textbf{In-Distribution}} 
& \multicolumn{4}{c}{\textbf{Out-of-Distribution}} \\
\cmidrule(lr){2-8}
\cmidrule(lr){9-12}
& AIME24 & AIME25 & AMC & MATH-500 & Minerva & Olympiad 
& \phantom{0}\phantom{0}\textbf{Avg.}\phantom{0}\phantom{0}
& ARC-c & GPQA$^{\star}$ & MMLU-Pro &  \phantom{0}\phantom{0}\textbf{Avg.}\phantom{0}\phantom{0}\\
\midrule

\makecell[l]{Qwen3-4B-Base} 
& 9.6 & 4.2 & 34.2 & 52.6 & 19.5 & 28.4 & 24.8 
& 35.8 & 14.1 & 33.3 & 27.7\\

\midrule
\multicolumn{12}{l}{\emph{\quad \textbf{Unsupervised Methods}}} \\
TTRL	&12.1	&8.3	&48.2	&76.4	&36.0	&40.3	&{36.9}		&86.9	&30.3	&56.0	&\underline{57.7} \\
Co-reward	&12.1	&7.1	&44.9	&76.0	&38.9	&40.7	&{36.6}		&86.1	&29.3	&55.8	&{57.1} \\
Self-certainty	&5.8	&5.0	&28.9	&41.8	&21.7	&20.6	&{20.6}		&24.3	&12.1	&30.6	&{22.3} \\
Token-entropy	&3.8	&2.1	&22.3	&44.2	&15.8	&15.7	&{17.3}		&17.0	&7.6	&28.9	&{17.8} \\
Seq-entropy	&4.6	&2.1	&25.2	&41.0	&19.5	&19.4	&{18.6}		&18.6	&6.6	&28.4	&{17.9} \\
\midrule
\multicolumn{11}{l}{\emph{\quad \textbf{Active Noisy Label}}} \\

GRPO	&10.4	&8.8	&45.8	&74.6	&33.1	&40.4	&{35.5}		&73.9	&24.7	&48.6	&{49.1} \\
$\ \ \ \ \text{w/ Confidence Penalty}$	&12.5	&9.6	&47.0	&76.4	&34.2	&39.6	& \cimp{{36.6}}{\textbf{+1.1}} 		&82.9	&32.3	&53.3	&\cimp{{56.2}} {\textbf{+7.1}} \\
$\ \ \ \ \text{w/ Label Smoothing}$	&14.6	&7.1	&44.1	&73.2	&33.5	&38.1	&\cdec{{35.1}} {\textbf{-0.4}}		&55.5	&24.2	&47.4	&\cdec{{42.4}} {\textbf{-6.7}} \\
$\ \ \ \ \text{w/ Samll-loss Select}$	&6.7	&1.7	&22.4	&43.8	&16.5	&17.6	&\cdec{{18.1}} {\textbf{-17.4}}		&32.3	&9.6	&26.7	&\cdec{{22.9}} {\textbf{-26.2}} \\
$\ \ \ \ \text{w/ Random Select}$	&9.2	&8.8	&43.7	&74.6	&33.1	&39.6	&\cdec{{34.8}} {\textbf{-0.7}}		&45.6	&14.6	&46.3	&\cdec{{35.5}} {\textbf{-13.6}} \\
\rowcolor{cyan!10}
$\ \ \ \ \text{w/ OLR}$ 	&20.4	&15.4	&49.7	&81.4	&36.4	&48.1	&\cimp{\underline{41.9}} {\textbf{+6.4}}		&78.5	&28.3	&54.8	&\cimp{{53.9}} {\textbf{+4.8}} \\

\midrule
\multicolumn{11}{l}{\emph{\quad \textbf{Inactive Noisy Label}}} \\
GRPO	&12.5	&5.0	&45.3	&76.0	&34.9	&42.7	&{36.1}		&86.7	&28.8	&56.0	&{57.2} \\
$\ \ \ \ \text{w/ Confidence Penalty}$	&13.3	&7.9	&48.6	&76.6	&39.0	&43.1	&\cimp{{38.1}} {\textbf{+2.0}}		&87.4	&31.3	&56.6	&\cimp{{58.4}} {\textbf{+1.2}} \\
$\ \ \ \ \text{w/ Label Smoothing}$		&11.7	&7.9	&47.3	&78.8	&39.0	&42.8	&\cimp{{37.9}} {\textbf{+1.8}}		&86.5	&29.8	&57.4	&\cimp{{57.9}} {\textbf{+0.7}} \\
$\ \ \ \ \text{w/ Samll-loss Select}$	&12.1	&9.6	&46.4	&76.8	&36.4	&38.5	&\cimp{{36.6}} {\textbf{+0.5}}		&73.9	&30.3	&50.7	&\cdec{{51.6}} {\textbf{-5.6}} \\
$\ \ \ \ \text{w/ Random Select}$	&11.3	&12.1	&47.4	&77.8	&38.2	&41.3	&\cimp{{38.0}} {\textbf{+1.9}}		&78.1	&31.8	&55.0	&\cdec{{55.0}} {\textbf{-2.2}} \\
\rowcolor{cyan!10}
$\ \ \ \ \text{w/ OLR}$	&21.7	&18.3	&53.9	&84.2	&42.3	&48.9	&\cimp{\textbf{44.9}} {\textbf{+8.8}}		&86.8	&38.4	&57.8	&\cimp{\textbf{61.0}} {\textbf{+3.8}} \\

\bottomrule
\end{tabular}
}
\end{table*}

\begin{table*}[!t]
\caption{Results on Qwen3-4B-Base under a 50\% noise ratio with 4K samples. \textbf{Bold} indicates the better.}
\label{tab:large_4k}
\centering
\small
\resizebox{0.9\textwidth}{!}{%
\begin{tabular}{lccccccccccc}
\toprule
\multirow{2.5}{*}{\textbf{Method}} 
& \multicolumn{7}{c}{\textbf{In-Distribution}} 
& \multicolumn{4}{c}{\textbf{Out-of-Distribution}} \\
\cmidrule(lr){2-8}
\cmidrule(lr){9-12}
& AIME24 & AIME25 & AMC & MATH-500 & Minerva & Olympiad 
& \phantom{0}\phantom{0}\textbf{Avg.}\phantom{0}\phantom{0}
& ARC-c & GPQA$^{\star}$ & MMLU-Pro &  \phantom{0}\phantom{0}\textbf{Avg.}\phantom{0}\phantom{0}\\
\midrule
\multicolumn{11}{l}{\emph{\quad \textbf{Active Noisy Label}}} \\

w/o OLR	&12.9	&7.5	&49.4	&76.4	&39.0	&41.5	&{37.8}		&85.1	&26.8	&54.1	&{55.3} \\
w/ OLR	&20.0	&18.3	&50.0	&78.8	&37.1	&43.9	&\textbf{41.4}		&82.9	&28.8	&55.4	&\textbf{55.7} \\

\midrule
\multicolumn{11}{l}{\emph{\quad \textbf{Inactive Noisy Label}}} \\
w/o OLR	&17.9	&12.9	&50.2	&78.4	&41.5	&41.9	&{40.5}		&88.4	&33.3	&57.3	&{59.7} \\
w/ OLR	&21.7	&20.8	&55.6	&85.2	&44.9	&48.3	&\textbf{46.1}		&90.4	&38.9	&60.6	&\textbf{63.3} \\
\bottomrule
\end{tabular}
}
\end{table*}

\subsection{Experimental Results}\label{subsec:experiment_results}

\paragraph{How does OLR perform under different noise ratios?}
The results in Table \ref{tab:qwen4b_trapo_results} demonstrate that OLR consistently improves post-training model performance across a range of noise ratios (0.1, 0.3, 0.5, 0.7, and 0.9) under both active and inactive noise labeling scenarios. Under inactive noise, OLR achieves average gains of \textbf{3.6}\% on six in-distribution mathematical benchmarks and \textbf{3.3}\% on three out-of-distribution benchmarks. \textit{Notably, even at a high noise ratio of 50\%, OLR maintains performance comparable to that observed at 10\% noise}, highlighting its robustness in inactive noise settings. In contrast, active noise labeling leads to a sharper decline in performance as noise increases, with the model collapsing and final performance falling below that of the initial model. In this more challenging active noise scenario, OLR still yields substantial improvements: average gains of \textbf{3.9}\% on the mathematical benchmarks and \textbf{4.6}\% on the out-of-distribution benchmarks. These results demonstrate that OLR not only enhances robustness under inactive noise but also delivers significant gains in complex active noise environments.

\paragraph{How does OLR perform compared to baseline methods?}
We present OLR's performance relative to baseline methods in Table \ref{tab:qwen4b_baseline_results} under a 50\% noise-ratio scenario from two perspectives: unsupervised methods and traditional noisy label learning approaches.
\textbf{(i) Unsupervised methods.} Most unsupervised methods lead to severe model collapse. The best among them, TTRL and Co-Reward, only marginally outperform naive noisy training on in-distribution tasks, suggesting that abandoning labels entirely is suboptimal. On out-of-distribution tasks, TTRL shows stronger robustness to active noise. Nevertheless, OLR achieves superior overall performance: average gains of \textbf{5.0}\% and \textbf{8.0}\% over unsupervised SOTA in ID settings under active and inactive noise, respectively, and outperforms the best unsupervised method by \textbf{3.3}\% on OOD tasks.
\textbf{(ii) Traditional noisy label learning methods.} We also find most noisy label methods ineffective in RLVR. Among active noisy label settings, only Confidence Penalty improves over naive training; others harm performance. Notably, small-loss selection declines most sharply, underperforming both baseline and random selection. This is because in RLVR, small loss frequently signals uninformative samples (e.g., rollouts entirely correct or incorrect) rather than high label quality, offering little improvement signal. In ID benchmarks, OLR outperforms the best of these methods by an average of \textbf{6.1}\%.

\begin{figure*}[!t]
  \centering

    \begin{subfigure}{0.32\textwidth}
    \centering
    \includegraphics[width=\linewidth]{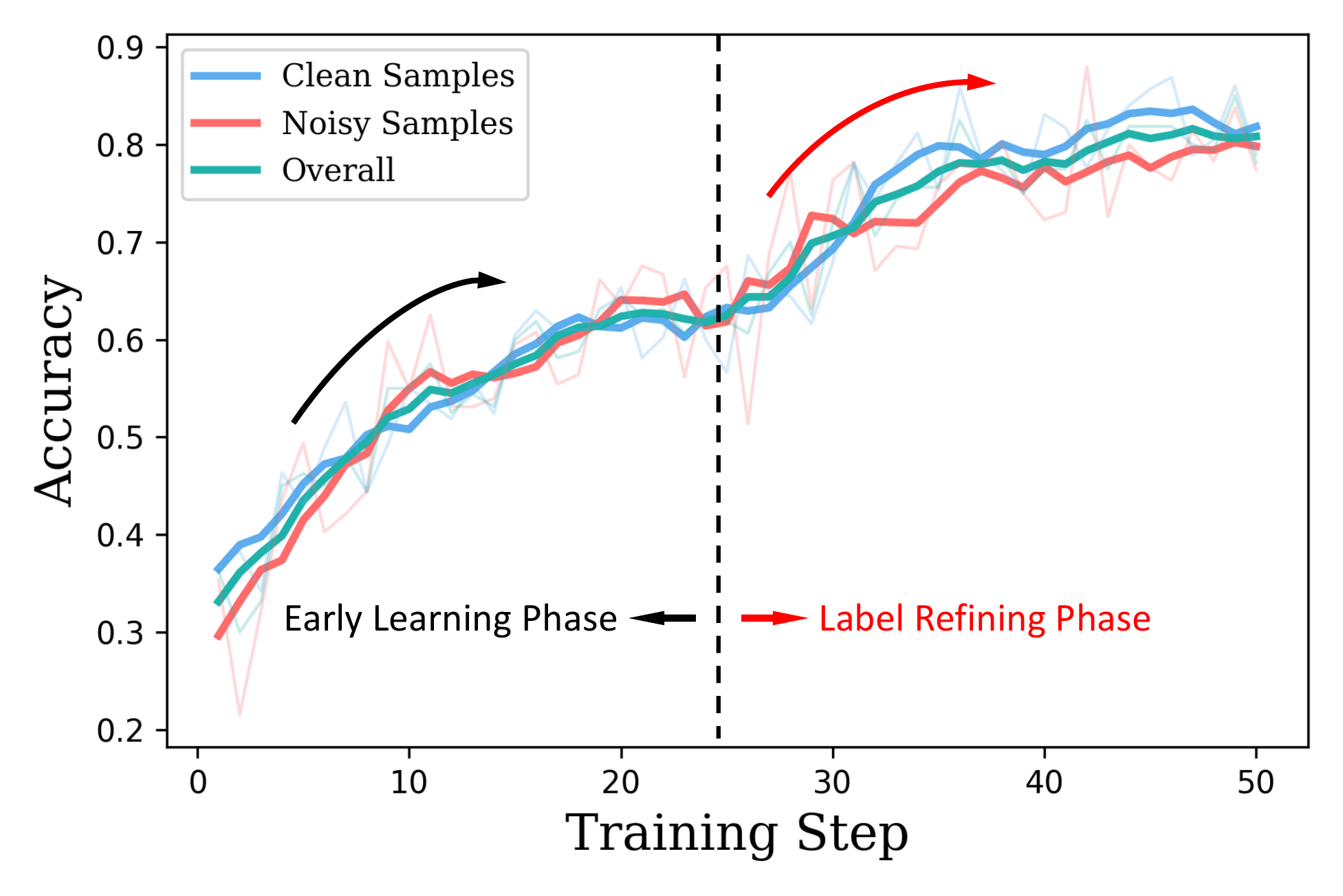}
    \caption{Training Accuracy}
  \end{subfigure}
  \hfill
    \begin{subfigure}{0.32\textwidth}
    \centering
    \includegraphics[width=\linewidth]{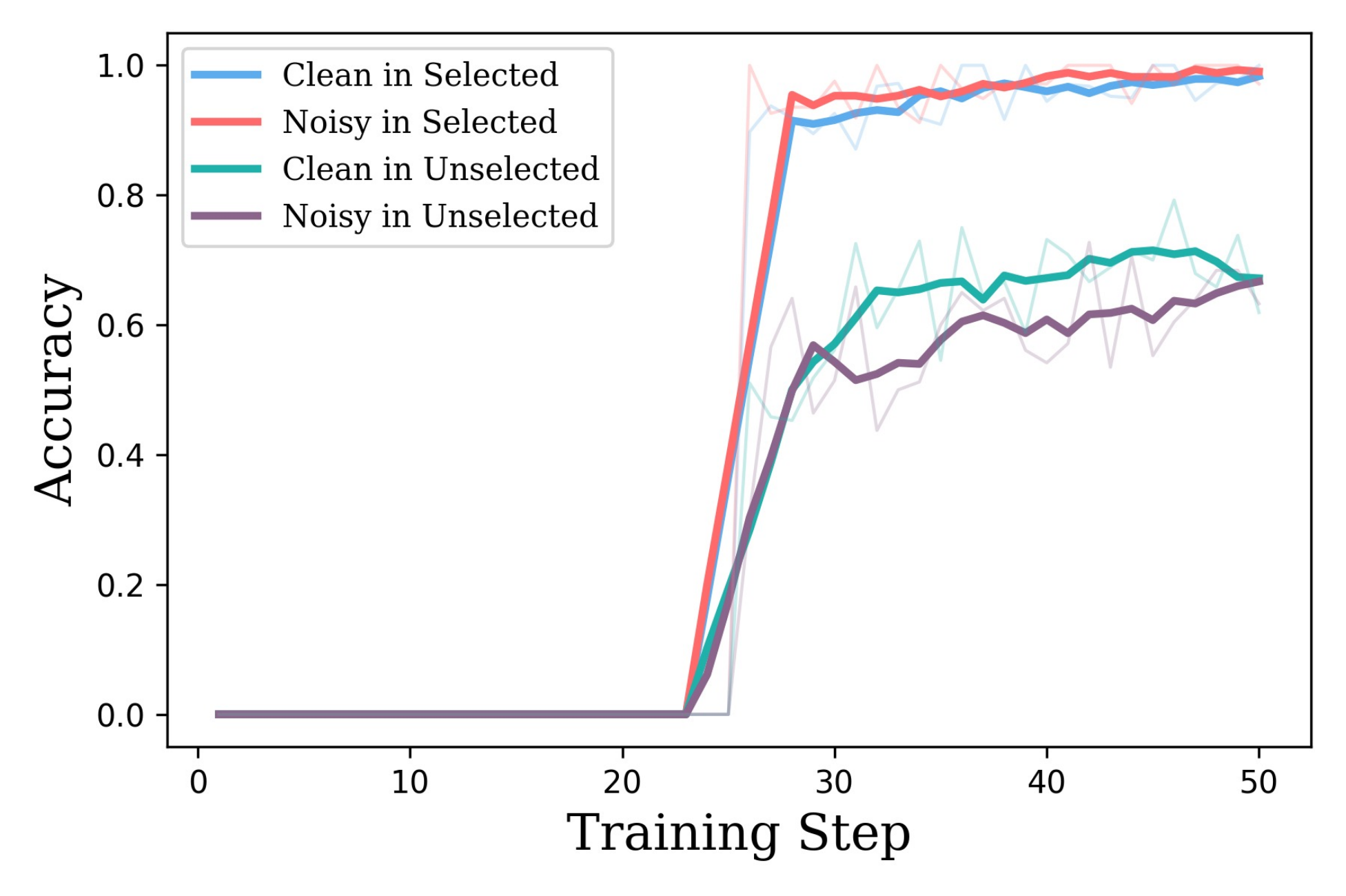}
    \caption{Selected vs. Unselected Accuracy}
  \end{subfigure}
  \hfill
  \begin{subfigure}{0.32\textwidth}
    \centering
    \includegraphics[width=\linewidth]{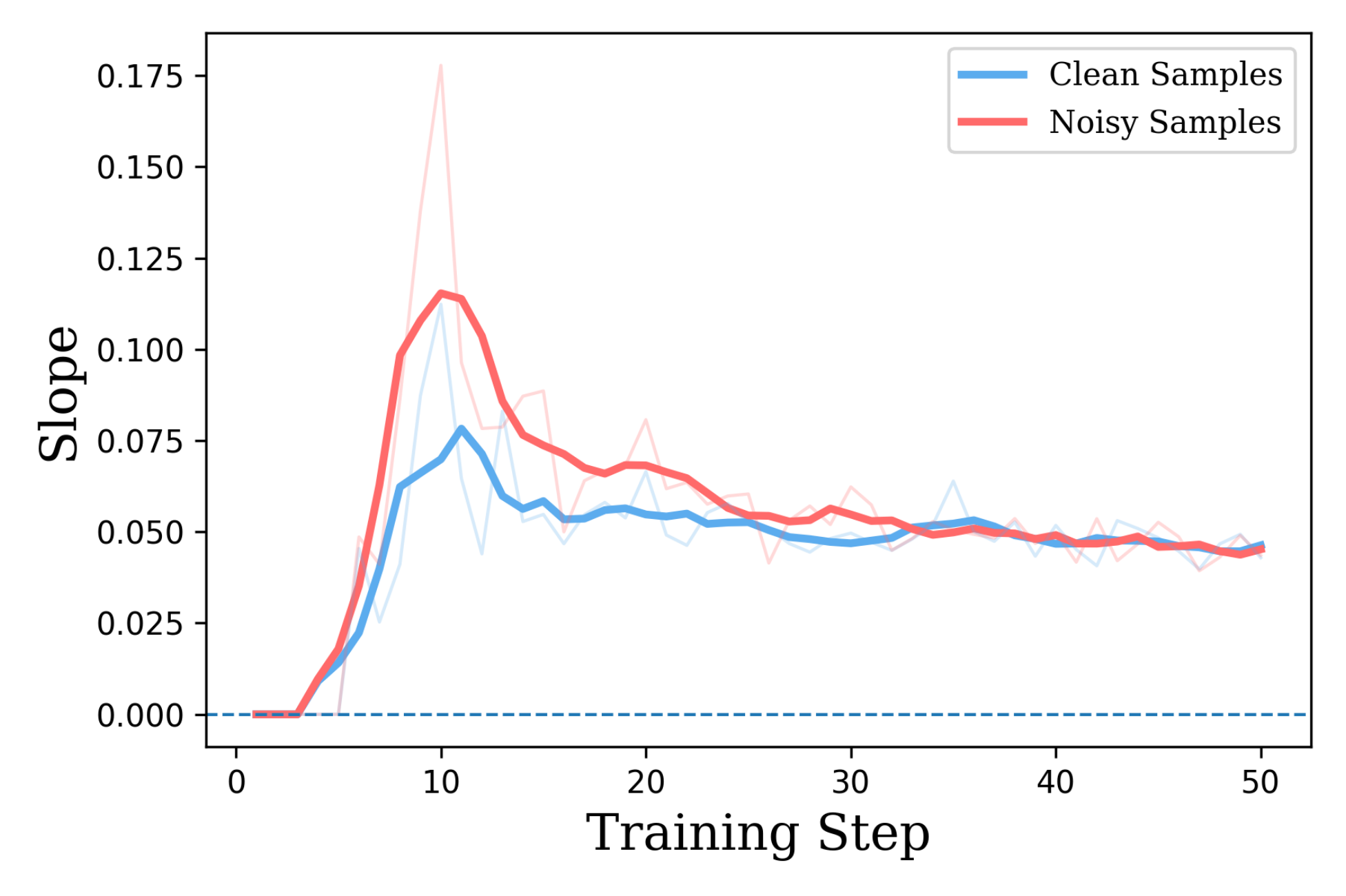}
    \caption{Passrate Slope}
  \end{subfigure}

  \vspace{2mm}

  \begin{subfigure}{0.32\textwidth}
    \centering
    \includegraphics[width=\linewidth]{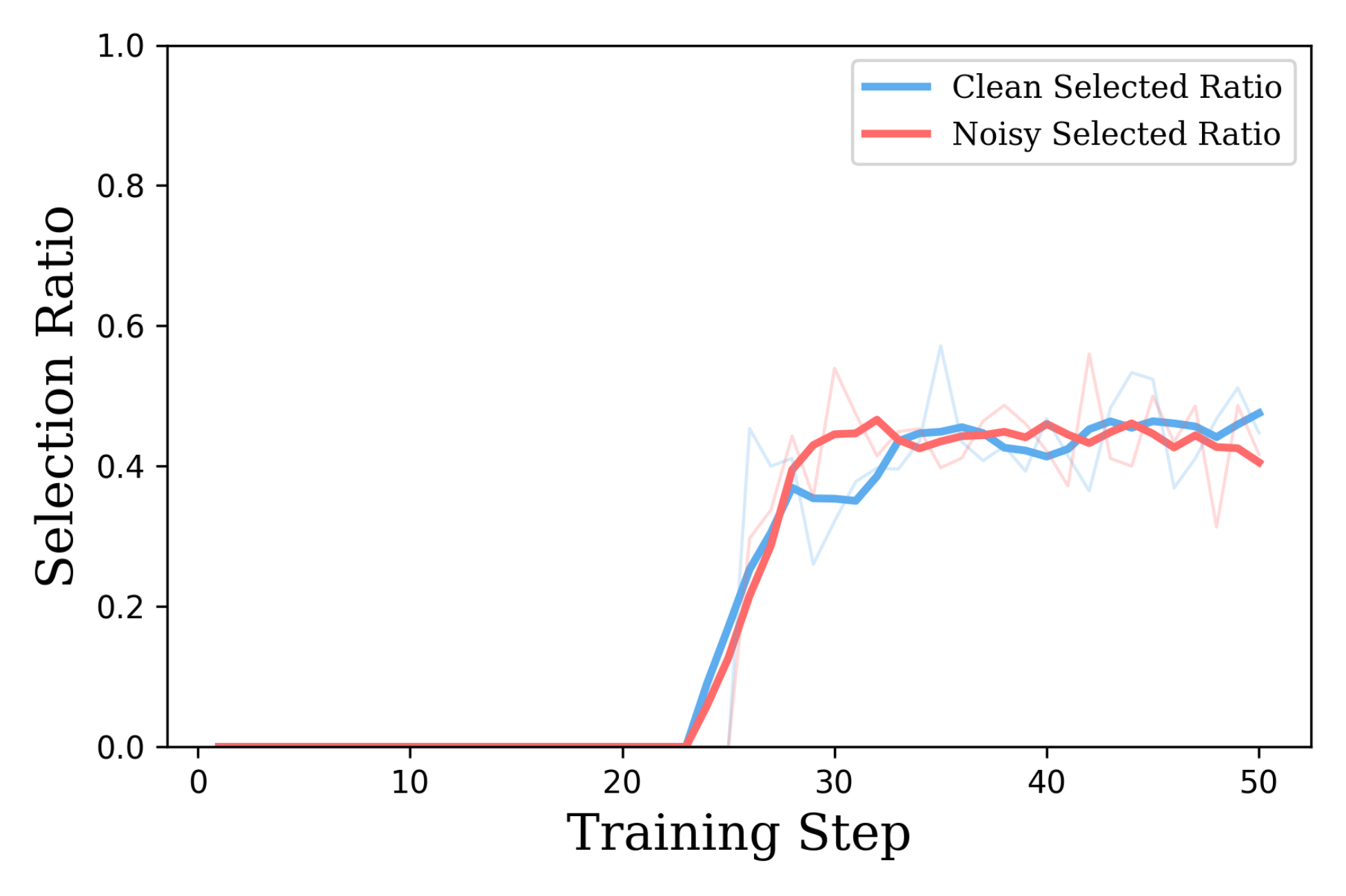}
    \caption{Selection Ratio}
  \end{subfigure}
  \hfill
  \begin{subfigure}{0.32\textwidth}
    \centering
    \includegraphics[width=\linewidth]{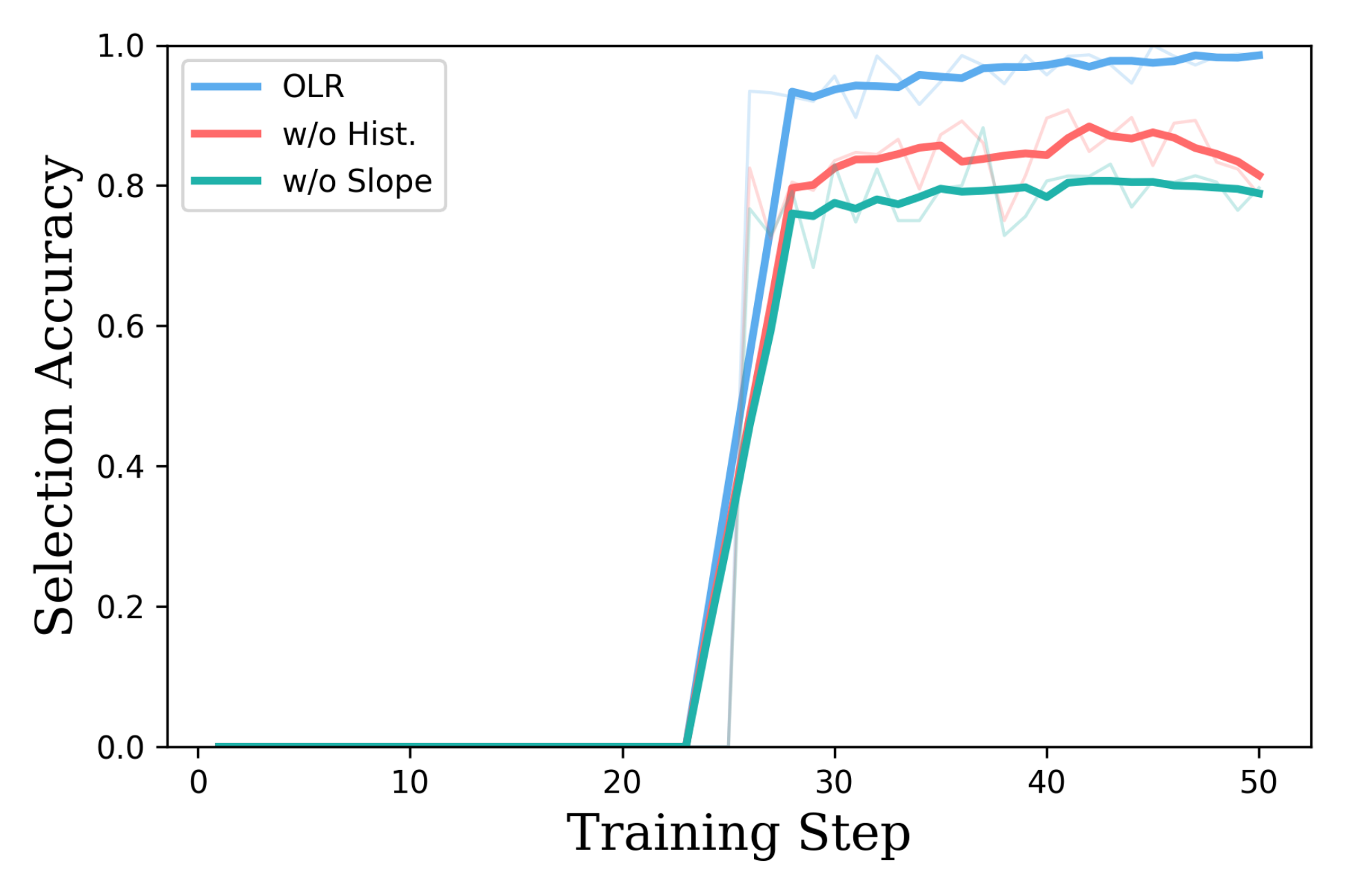}
    \caption{Ablation Accuracy}
  \end{subfigure}
  \hfill
  \begin{subfigure}{0.32\textwidth}
    \centering
    \includegraphics[width=\linewidth]{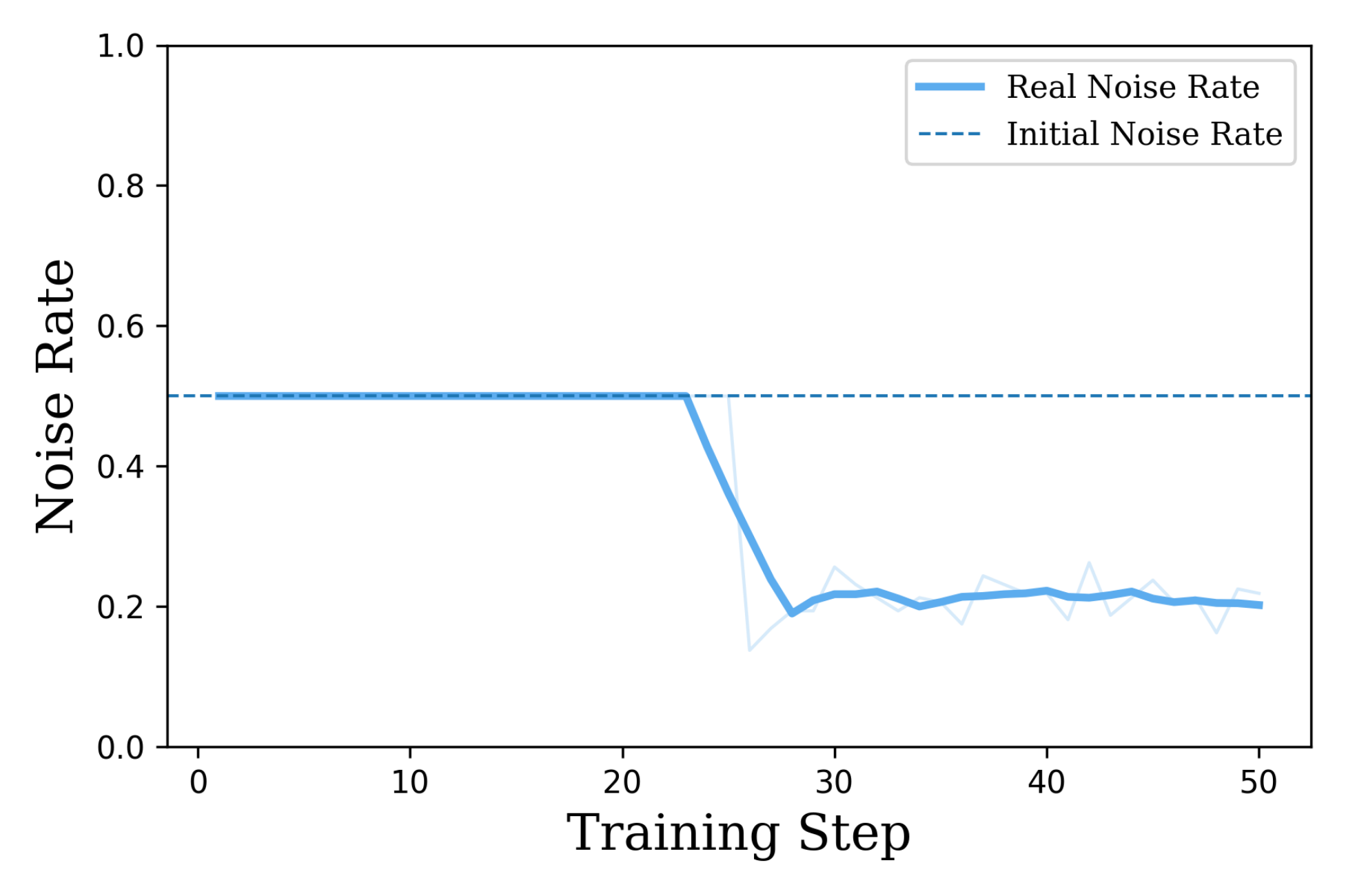}
    \caption{Real vs. Initial Noise}
  \end{subfigure}

  \caption{Training dynamics of OLR with Qwen3-4B-Base under an active noise setting (noise ratio = 0.5).}
  \label{fig:sensitive}
\end{figure*}

\paragraph{How does OLR behave during the training process?}
Figure \ref{fig:sensitive} illustrates training dynamics under 50\% active noise. As shown in \textbf{(a)}, during the early learning phase, the majority answer accuracy for both clean and noisy samples rises to over 60\%, thereby exceeding the initial 50\% correct label ratio. This early learning stage enables initial gains; however, accuracy plateaus around 60\% afterward. Once OLR is applied, accuracy further increases to over 80\%, achieving a \textbf{20\%} improvement, demonstrating that OLR builds upon the early learning stage.
In \textbf{(b)}, for samples selected by OLR, the majority answer accuracy exceeds \textbf{90\%} and approaches 100\%, whereas unselected samples remain below 70\%. Since unselected samples retain their original labels, the remaining noise consists only of original noisy labels (reduced by \textbf{30\%}) and a small fraction (below 5\%) of erroneous majority answers from selected samples.
Furthermore, \textbf{(c)} shows that the slope of the pass rate for the majority answers remains positive throughout training for both clean and noisy samples. This persistently positive slope, combined with the rising accuracy in (a), indicates that correct answers can gradually emerge with increasing probability even on noisy samples through cross-sample coupling.
\textbf{(d)} shows that OLR selects over \textbf{40\%} of samples from both clean and noisy sets, indicating it chooses sufficiently without favoring only simple samples. \textbf{(e)} demonstrates that combining both criteria achieves nearly 100\% accuracy in sample selection, while removing either reduces accuracy by about 20\%, highlighting the indispensability of both criteria.
Finally, \textbf{(f)} demonstrates that OLR reduces the proportion of complex active noise by approximately \textbf{30\%} during the refinement phase, thereby showcasing its robustness and effectiveness. Figure \ref{fig:weak_train_dynamic} in the Appendix illustrates the training dynamics in an inactive noise scenario.

\begin{figure}[!t]
\centering
\begin{subfigure}{0.48\linewidth}
    \centering
    \includegraphics[width=0.48\linewidth]{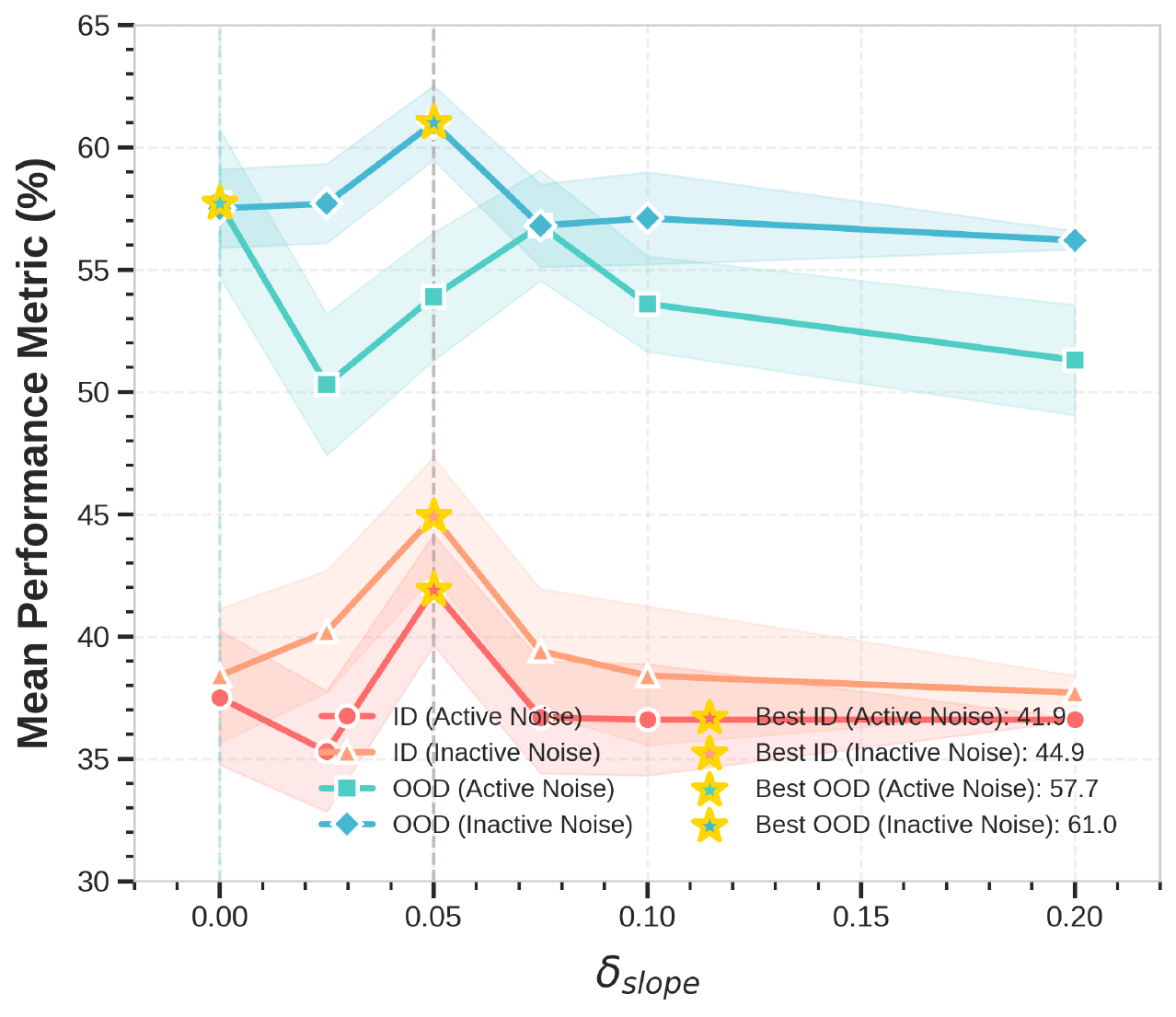}
    \includegraphics[width=0.48\linewidth]{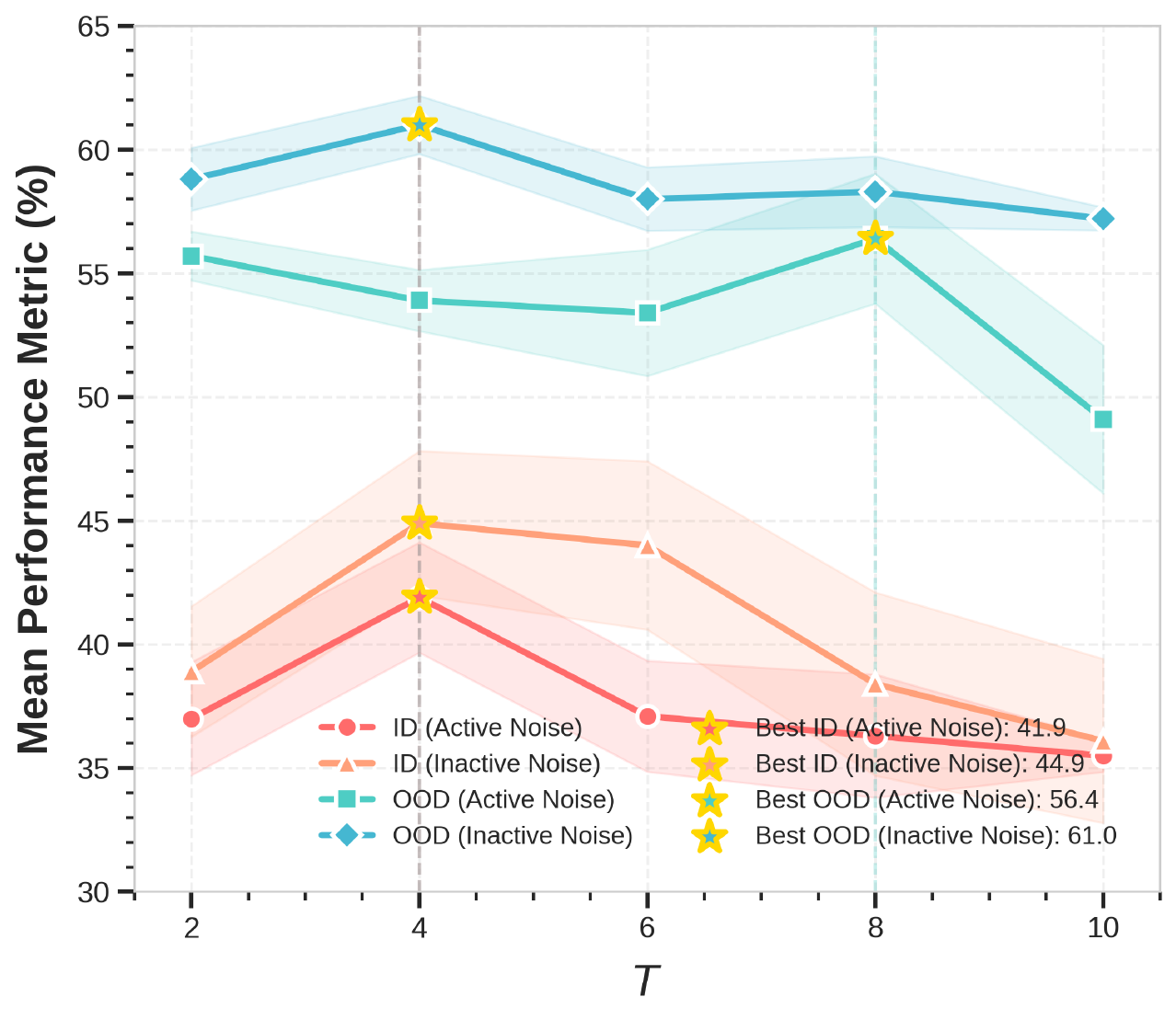}
    \caption{Sensitivity analysis}
    \label{F-sen}
\end{subfigure}
\hfill
\begin{subfigure}{0.48\linewidth}
    \centering
    \includegraphics[width=\linewidth]{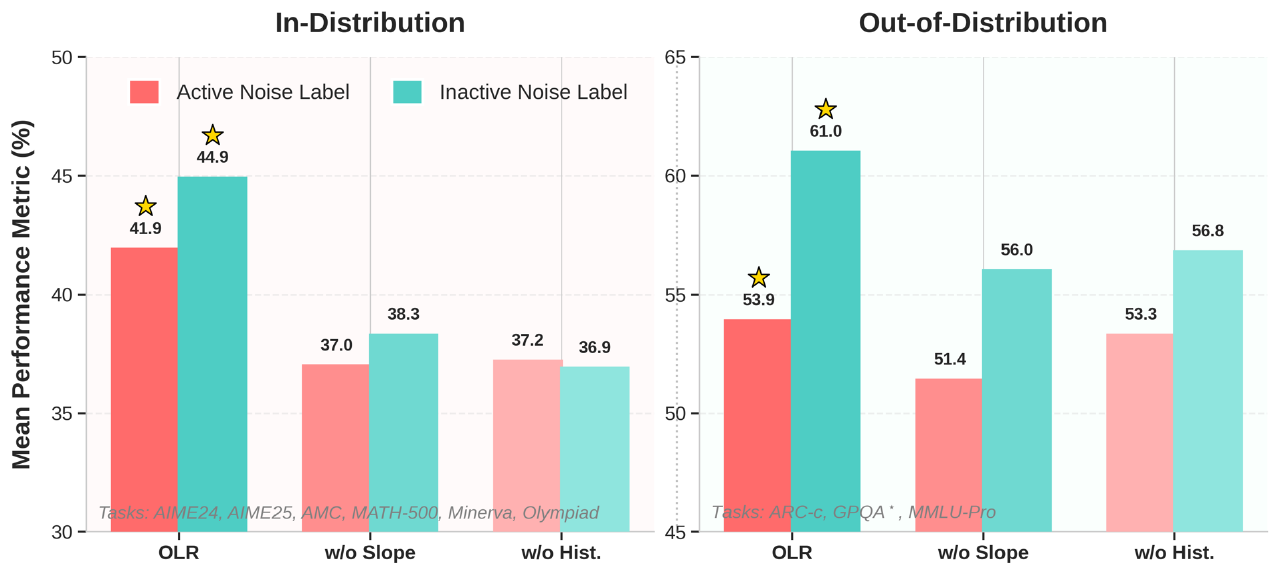}
    \caption{Ablation study}
    \label{F-aba}
\end{subfigure}
\caption{Results under a 50\% noise ratio on Qwen-3-4B-Base.}
\end{figure}



\paragraph{Does OLR's effectiveness scale with more data?} To evaluate OLR's performance with larger datasets, we conduct experiments using a training set of 4,000 samples with a 50\% noise ratio. As shown in Table \ref{tab:large_4k}, OLR continues to significantly enhance the model's reasoning performance, yielding average improvements of \textbf{4.6}\% on the ID benchmarks and \textbf{2.0}\% on the OOD benchmarks across both noise scenarios.

\paragraph{Parameter sensitivity analysis.}
We analyze OLR's sensitivity to slope threshold $\delta_{slope}$ and early learning phase duration $T$. Across all settings, OLR consistently outperforms naive noisy training (Figure~\ref{F-sen}).
Sweeping $\delta_{slope} \in \{0.0, 0.025, 0.05, 0.075, 0.1, 0.2\}$ shows optimal performance at $0.05$ for in-distribution (both noise types) and out-of-distribution under inactive noise. Too low $\delta_{slope}$ introduces erroneous majority answers; too high yields insufficient refinement samples. No clear trend emerges for out-of-distribution under active noise.
Varying $T \in \{2, 4, 6, 8, 10\}$ also yields optimal performance at $T=5$ for the same settings. A smaller $T$ causes inaccurate slope estimation; a larger $T$ prolongs noisy training. Again, no clear pattern is observed for out-of-distribution under active noise, which we aim to address in future research. Table \ref{tab:sen_slope} and \ref{tab:sen_start} in the Appendix show the specific experimental results.

\paragraph{Ablation experiments.}
We conduct ablation studies to evaluate the contribution of each criterion in OLR. As shown in Figure~\ref{F-aba}, removing either the slope or the historical consistency criterion consistently leads to a substantial performance drop, under both active and inactive noise and across ID and OOD benchmarks. Table \ref{tab:ablation} in the Appendix shows the specific experimental results. These results confirm that the two criteria are complementary and both essential to the effectiveness of OLR.


%% file: sections/conclusion.tex
\section{Conclusion}

In this paper, we take the first step toward a systematic analysis of noisy label mechanisms in RLVR. We first classify noisy labels based on rollout feasibility into active and inactive categories, establishing a taxonomy for future research. Second, we identify Early Correctness Coherence: clean and noisy samples improve similarly in early training, despite later divergence. Finally, building on this insight, we propose Online Label Refinement (OLR), the first method in RLVR that denoises labels during training. Experiments show OLR achieves significant denoising effectiveness and improves reasoning performance across various noise ratios. Beyond RLVR, our research holds promise for broader applications, including VLM and agent domains, where noisy label learning remains an underexplored direction, providing a foundation for future research across diverse paradigms.

%% file: sections/appendix.tex
\newpage

\part*{Appendix}

\section{Theoretical Proof}
\label{app:technical}

\begin{theorem}[Early Correctness Coherence in Noisy RLVR] (Formal)
\label{thm:early-main-final-formal}
Consider a dataset $\mathcal D = \mathcal D_{\rm clean} \cup \mathcal D_{\rm noise}$ with noise ratio 
$\rho = |\mathcal D_{\rm noise}| / |\mathcal D|$. 
For each prompt $x$, let   
the policy have rollout probability $p_t(y|x) = \pi_{\theta_t}(y|x)$ with $K$ rollouts per step. 
Assume small learning rate $\eta$ and early-phase dynamics (ignoring clipping effects).  
Suppose (i) initial rollout bias $p_0(y^\star) > p_0(\tilde y)$ creating a signal gap, 
(ii) positive cross-sample coupling
\begin{equation}
\Gamma(x_c,x_n) = 
\nabla_\theta \log \pi(y^\star(x_c)|x_c) \cdot \nabla_\theta \log \pi(y^\star(x_n)|x_n), 
 \quad
\mathbb E_{x_c\in \mathcal D_{\rm clean},\, x_n\in \mathcal D_{\rm noise}}[\Gamma(x_c,x_n)] \ge \gamma > 0,
\end{equation}
so that reinforcing clean samples increases correct probability on noisy samples, and 
(iii) average deterministic drift of the log-ratio
$
\Delta_s := \gamma (1-\rho) G_c - \rho G_n,
$
where $G_c$ and $G_n$ are the mean clean and noisy advantage magnitudes.
Define the log-ratio
\begin{equation}
L_t(x) = \log \frac{p_t(y^\star(x)|x)}{p_t(\tilde y(x)|x)} 
\quad \text{and} \quad
\rho_c = \frac{\gamma G_c}{\gamma G_c + G_n}.
\end{equation}
If $\rho < \rho_c$ and $K \gtrsim \log(T/\delta)$, then with probability at least $1-\delta$ simultaneously for all $t \le T$, where $T$ denotes the early convergence stage,
\begin{equation}
L_t(x) \ge L_0(x) + \eta t \Big(\Delta_s - O\!\big(\sqrt{\frac{\log(T/\delta)}{K}}\big)\Big),
\end{equation}
implying that 
$
p_t(y^\star(x)|x) \gg p_t(\tilde y(x)|x).
$
\end{theorem}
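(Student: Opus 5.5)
The plan is to linearize the log-ratio dynamics in the small-$\eta$ regime and split each per-step increment into a deterministic drift plus a martingale noise term. Write one GRPO step (no clipping) as $\theta_{t+1}=\theta_t+\eta\,\hat g_t$, where
\[
\hat g_t=\frac{1}{|\mathcal D|}\sum_{x'}\frac1K\sum_{k}A(x',y^{(k)}_t)\,\nabla_\theta\log\pi(y^{(k)}_t|x').
\]
A first-order Taylor expansion of $L$ at $\theta_t$ gives
\[
L_{t+1}(x)-L_t(x)=\eta\,\nabla_\theta L_t(x)\cdot\hat g_t+O(\eta^2),
\]
and the $O(\eta^2)$ term is absorbed into the early-phase and small-$\eta$ assumption. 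Here $\nabla_\theta L_t(x)=\nabla\log\pi(y^\star(x)|x)-\nabla\log\pi(\tilde y(x)|x)$. Next decompose $\hat g_t=\mathbb E_t[\hat g_t]+\xi_t$, where $\mathbb E_t$ conditions on $\theta_t$ and averages over the $K$ rollouts per prompt. The increment then becomes $\eta\bigl(\nabla L_t\cdot\mathbb E_t\hat g_t+\nabla L_t\cdot\xi_t\bigr)$.

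\textbf{Drift.} The expectation splits into contributions from clean and noisy prompts. On clean prompts the rewarded answer is $y^\star(x_c)$, so the expected advantage-weighted score is (mean magnitude $G_c$) times $\nabla\log\pi(y^\star(x_c)|x_c)$. Its inner product with $\nabla\log\pi(y^\star(x)|x)$ is at least $\gamma$ on average by assumption (ii). The cross term against $\nabla\log\pi(\tilde y(x)|x)$ I would treat as non-positive, or absorb it into the definition of $\gamma$; this reading is needed to match $\Delta_s$ exactly. On noisy prompts the rewarded answer is $\tilde y(x_n)$, and it can move $L_t(x)$ adversely by at most $G_n$ in normalized units. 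Weighting the two contributions by $1-\rho$ and $\rho$ gives
\[
\nabla L_t\cdot\mathbb E_t\hat g_t\ \ge\ \gamma(1-\rho)G_c-\rho G_n=\Delta_s.
\]
Assumption (i), $p_0(y^\star)>p_0(\tilde y)$, guarantees that $y^\star$ is actually sampled with nontrivial frequency, so that $G_c>0$. A direct computation shows that $\Delta_s>0$ is equivalent to $\rho<\rho_c=\gamma G_c/(\gamma G_c+G_n)$.

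\textbf{Noise.} Conditional on $\mathcal F_t$, the term $Z_t:=\nabla L_t\cdot\xi_t$ is an average of $K$ independent bounded terms. This holds because the GRPO advantages are normalized and hence bounded by $\sqrt K$, and because we assume bounded score norms in the early phase. Hoeffding's inequality then gives
\[
\Pr\Bigl(|Z_t|>c\sqrt{\log(2T/\delta)/K}\Bigr)\le\delta/T.
\]
A union bound over $t\le T$ yields the event of probability at least $1-\delta$ on which every step's noise is at most $O(\sqrt{\log(T/\delta)/K})$. Telescoping the increments from $0$ to $t$ then gives the claimed bound $L_t\ge L_0+\eta t(\Delta_s-O(\sqrt{\log(T/\delta)/K}))$. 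Alternatively, Azuma's inequality on the martingale $\sum_s Z_s$ gives an even better bound. The condition $K\gtrsim\log(T/\delta)$ is what makes the error smaller than $\Delta_s$. The statement $p_t(y^\star)\gg p_t(\tilde y)$ follows because $L_t$ grows linearly in $t$ and is exponentiated.

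\textbf{Main obstacle.} The hard step is the drift lower bound. The normalized GRPO advantage couples all rollouts within a group. The bound therefore requires a uniform-in-$t$ control of the mean magnitudes $G_c$ and $G_n$, and of the coupling $\gamma$, along the trajectory rather than only at $\theta_0$. My first attempt would be to impose these as early-phase assumptions holding for $t\le T$, with $T$ defined exactly as the horizon over which they remain valid. I would also state explicitly the sign convention under which the noisy-prompt contribution is bounded by $-G_n$.
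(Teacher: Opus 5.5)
Your skeleton matches the paper's proof. You use a first-order expansion of $L_t$ under an unclipped GRPO step and split each increment into a conditional drift plus a martingale fluctuation. You take the drift to be $\Delta_s=\gamma(1-\rho)G_c-\rho G_n$; the paper simply asserts this, and your remark on the cross term with $\nabla_\theta\log\pi(\tilde y|x)$ only makes that implicit assumption visible. You then bound the summed fluctuation over $t\le T$, and your per-step union bound at level $\delta/T$ is a fine way to do that.

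The gap is in where the $1/\sqrt K$ comes from. You treat $Z_t$ as an average of $K$ independent terms, bounded because normalized advantages are at most $\sqrt K$, and invoke Hoeffding. Neither half holds.
\begin{itemize}
\item The terms are not independent. $\hat A(x',y^{(k)})=(r_k-\hat\mu)/(\hat\sigma+\epsilon)$ depends on every reward in the group through $\hat\mu$ and $\hat\sigma$. This is the same within-group coupling you flag as the main obstacle for the drift.
\item Even granting independence, the range kills the rate. Let $B$ bound the projected scores, so each summand has range of order $\sqrt K\,B$. Hoeffding for an average of $K$ such terms gives $\Pr(|Z_t|>s)\le 2\exp(-s^2/(2B^2))$, with no $K$ in the exponent. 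The per-step error is then of order $B\sqrt{\log(T/\delta)}$, which never drops below $\Delta_s$. The hypothesis $K\gtrsim\log(T/\delta)$ therefore does no work in your argument.
\end{itemize}

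What is missing is the paper's Lemma~\ref{lem:adv-concentration}. Compare $\hat A$ with the population-normalized advantage $A=(r-\mu(x))/(\sigma(x)+\epsilon)$, whose size does not grow with $K$. Then $\frac1K\sum_k A(x',y^{(k)})\nabla_\theta\log\pi(y^{(k)}|x')$ is a genuine i.i.d.\ average with $K$-independent range, so Hoeffding yields $O(\sqrt{\log(T/\delta)/K})$. The remainder $\frac1K\sum_k(\hat A-A)\nabla_\theta\log\pi$ is controlled by Hoeffding for $\hat\mu$, concentration of $\hat\sigma$, and Lipschitz dependence of the normalized advantage on $(\hat\mu,\hat\sigma)$. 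This gives the same order, and its mean is absorbed into the drift or the error term. The cost is a constant depending on $\epsilon$, or on a floor $\sigma(x)\ge\sigma_{\min}$ that you would add to your early-phase assumptions; the statement's $O(\cdot)$ hides this.

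A minor point concerns assumption (i). It is what makes $L_0(x)>0$, which together with the lower bound keeps $L_t(x)>0$ and supports the final ``$\gg$''. It concerns the single prompt $x$ and is not what ensures $G_c>0$.
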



\subsection{Notation and Setup}

We consider a dataset $\mathcal D = \mathcal D_{\text{clean}}\cup\mathcal D_{\text{noise}}$ with noise ratio 
\[
\rho = \frac{|\mathcal D_{\text{noise}}|}{|\mathcal D|}.
\]

For each prompt $x\in \mathcal D$, let $y^\star(x)$ denote the correct solution and $\tilde y(x)$ denote a active noisy label. Both are assumed rollout-feasible. The policy is $\pi_\theta(y|x)$, and we define
\[
p_t(y|x) = \pi_{\theta_t}(y|x),
\]
with $K$ rollouts per step. Binary reward:
\[
r(x,y)\in\{0,1\},\quad A(x,y) = \frac{r(x,y)-\mu(x)}{\sigma(x)+\epsilon}.
\]


\subsection{Finite-Rollout Concentration Bounds}

\begin{lemma}[Advantage Concentration]
\label{lem:adv-concentration}
For any prompt $x$, let $\hat \mu(x)$ and $\hat \sigma(x)$ be the empirical mean and standard deviation computed from $K$ rollouts. Then, for any $\delta>0$, with probability at least $1-\delta$:
\begin{align}
|\hat \mu(x) - \mathbb E[\mu(x)]| &\le \sqrt{\frac{\log(2/\delta)}{2K}},\\
|\hat \sigma(x) - \sigma(x)| &\le C \sqrt{\frac{\log(2/\delta)}{K}},\\
|\hat A(x,y) - A(x,y)| &\le C' \sqrt{\frac{\log(1/\delta)}{K}},
\end{align}
where $C, C'$ are constants depending on reward bounds.
\end{lemma}

\begin{proof}
We start with the empirical mean of $K$ rollouts for a fixed prompt $x$:
\[
\hat \mu(x) = \frac{1}{K} \sum_{k=1}^{K} r(x, y^{(k)}),
\]
where each $r(x, y^{(k)}) \in [0,1]$ and $y^{(k)} \sim \pi_\theta(\cdot|x)$.

\paragraph{Step 1: Concentration of the empirical mean.}
Since the $r(x, y^{(k)})$ are independent conditioned on the policy, we can apply Hoeffding's inequality:
\[
\Pr\Big( |\hat \mu(x) - \mathbb E[\hat \mu(x)]| > \epsilon \Big) \le 2 \exp\Big(-2 K \epsilon^2 \Big),
\]
where $\mathbb E[\hat \mu(x)] = \mu(x)$ is the true expected reward for this prompt.  
Solving for $\epsilon$ with confidence level $1-\delta$ gives:
\begin{equation}
|\hat \mu(x) - \mu(x)| \le \sqrt{\frac{\log(2/\delta)}{2K}}
 \quad \text{with probability at least } 1-\delta.
\end{equation}

\paragraph{Step 2: Concentration of the empirical variance.}
The empirical variance is
\[
\hat \sigma^2(x) = \frac{1}{K} \sum_{k=1}^K (r(x, y^{(k)}) - \hat \mu(x))^2.
\]
Since $0 \le r(x, y^{(k)}) \le 1$, we have bounded deviations:
\begin{equation}
| (r(x, y^{(k)}) - \hat \mu(x))^2 - (r(x, y^{(k)}) - \mu(x))^2 |
\le | \hat \mu(x) - \mu(x) |.
\end{equation}
Thus, the empirical variance is a 1-Lipschitz function of the empirical mean.  
Combining this with Hoeffding's inequality for bounded $r$ gives
\[
|\hat \sigma(x) - \sigma(x)| \le C \sqrt{\frac{\log(2/\delta)}{K}}
\]
for some constant $C$.

\paragraph{Step 3: Concentration of normalized advantage.}
The normalized advantage is
\[
\hat A(x,y) = \frac{r(x,y) - \hat \mu(x)}{\hat \sigma(x) + \epsilon}.
\]
Consider the mapping $(r,\hat \mu, \hat \sigma) \mapsto \hat A(x,y)$.  
This mapping is Lipschitz in each argument: a deviation $\delta r$, $\delta \hat \mu$, $\delta \hat \sigma$ induces a change
\[
|\delta \hat A| \le \frac{|\delta r| + |\delta \hat \mu|}{\sigma+\epsilon} + \frac{|r-\mu|}{(\sigma+\epsilon)^2} |\delta \hat \sigma|.
\]
Since $r, \mu, \sigma$ are bounded in $[0,1]$, the total Lipschitz constant is finite.  
Therefore, combining the bounds from Steps 1--2 and applying union bound over numerator and denominator terms, we obtain
\[
|\hat A(x,y) - A(x,y)| \le C' \sqrt{\frac{\log(1/\delta)}{K}},
\]
with probability at least $1-\delta$, for some constant $C'$ depending on the reward bounds and $\epsilon$.

\end{proof}


\subsection{Log-Ratio Dynamics and Martingale Decomposition}

Define the log-ratio for a active-noise sample $x_n$:
\[
L_t(x_n) = \log \frac{p_t(y^\star(x_n)|x_n)}{p_t(\tilde y(x_n)|x_n)}.
\]

\begin{lemma}[Finite-Rollout Log-Ratio Dynamics with High-Probability Bound]
\label{lem:logratio}
Let $L_t(x_n) = \log \frac{p_t(y^\star(x_n)|x_n)}{p_t(\tilde y(x_n)|x_n)}$ be the log-ratio of clean vs active-noise solutions for sample $x_n$, and let the one-step update satisfy
\[
\Delta L_t(x_n) = \eta \big(\hat A_t(x_n,y^\star)-\hat A_t(x_n,\tilde y)\big) + \eta \Xi_t(x_n),
\]
where $\Xi_t(x_n)$ is the cross-sample coupling contribution. Define the martingale difference
\[
M_t(x_n) = \Delta L_t(x_n) - \mathbb E[\Delta L_t(x_n) \mid \mathcal F_{t-1}].
\]
Assume the finite-rollout advantage satisfies Lemma~\ref{lem:adv-concentration}. Then for rollout size $K$ sufficiently large and learning rate $\eta$ small, with probability at least $1-\delta$,
\begin{equation}
\left| L_t(x_n) - L_0(x_n) - \eta \sum_{s=0}^{t-1} \mathbb E[\Delta L_s(x_n)\mid \mathcal F_{s-1}] \right|
\le t \eta C \sqrt{\frac{\log(1/\delta)}{K}} \quad \text{for all } t\le T,
\end{equation}
where $C>0$ is a constant depending on reward bounds. In particular, $\{M_t(x_n), \mathcal F_t\}$ is a martingale difference sequence with bounded increments.
\end{lemma}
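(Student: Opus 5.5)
The plan is to telescope the one-step update and control the deviation from its conditional mean increment by increment, using the advantage concentration of Lemma~\ref{lem:adv-concentration}.

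\textbf{Step 1: telescoping and martingale decomposition.} Write
\[
L_t(x_n)-L_0(x_n)=\sum_{s=0}^{t-1}\Delta L_s(x_n)=\sum_{s=0}^{t-1}\mathbb E[\Delta L_s(x_n)\mid\mathcal F_{s-1}]+\sum_{s=0}^{t-1}M_s(x_n).
\]
By construction $\mathbb E[M_s\mid\mathcal F_{s-1}]=0$, so $\{M_s,\mathcal F_s\}$ is a martingale difference sequence. (The factor $\eta$ in the displayed statement is absorbed into the normalization of $\mathbb E[\Delta L_s\mid\mathcal F_{s-1}]$, since $\Delta L_s$ already carries $\eta$.)

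\textbf{Step 2: bounded increments.} The rewards lie in $[0,1]$ and the denominator is at least $\epsilon$, so $|\hat A_s|\le 1/\epsilon$. Under the small-$\eta$, no-clipping regime, I would also assume the coupling term $\Xi_s$ is bounded by a constant $B$. Together these give $|M_s|\le 2\eta(2/\epsilon+B)$, which settles the "bounded increments" claim.

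\textbf{Step 3: sharper per-step control.} Decompose
\[
M_s=\eta\bigl[(\hat A_s-A_s)-\mathbb E(\hat A_s-A_s\mid\mathcal F_{s-1})\bigr]+\eta\bigl[\Xi_s-\mathbb E(\Xi_s\mid\mathcal F_{s-1})\bigr],
\]
where $A_s$ is the population advantage at $\theta_s$.
\begin{itemize}
\item For the first bracket, apply Lemma~\ref{lem:adv-concentration} at confidence $\delta/(2T)$, separately for $y^\star$ and $\tilde y$. This gives $|\hat A_s-A_s|\le C'\sqrt{\log(2T/\delta)/K}$, and the conditional mean of this deviation is bounded by the same order.
\item The coupling term $\Xi_s$ is driven by clean-sample advantages, which are themselves $K$-rollout averages. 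Applying Lemma~\ref{lem:adv-concentration} to each clean prompt and averaging, its fluctuation is also $O(\sqrt{\log(T/\delta)/K})$.
\end{itemize}
A union bound over $s<T$ then gives, with probability at least $1-\delta$, $|M_s|\le \eta C\sqrt{\log(T/\delta)/K}$ simultaneously for every $s<T$. The $\log T$ is absorbed into the constant, matching the stated $\log(1/\delta)$ form under $K\gtrsim\log(T/\delta)$. Summing with the triangle inequality yields
\[
\Bigl|\sum_{s<t}M_s\Bigr|\le t\eta C\sqrt{\log(T/\delta)/K}\quad\text{for all } t\le T,
\]
which is the claimed (linear-in-$t$) bound.

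An alternative route is Azuma–Hoeffding with a maximal inequality, which gives a tighter $\sqrt t$ rate. The linear-in-$t$ bound is weaker and follows directly from the per-step union bound, so I would use that.

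\textbf{Main obstacle.} The hardest step is rigorously controlling $\Xi_s$. Its concentration depends on the gradient inner products $\Gamma$ being bounded, and on the rollouts for different prompts being conditionally independent given $\mathcal F_{s-1}$. I would state both as explicit boundedness assumptions, namely bounded score norms $\|\nabla\log\pi\|\le G$, so that the coupling term inherits the advantage concentration up to a factor $G^2$.

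A secondary subtlety is that $\hat\sigma$ can be near $0$. Here $\epsilon>0$ guarantees the Lipschitz constant in Lemma~\ref{lem:adv-concentration} stays finite, so no further care is needed.
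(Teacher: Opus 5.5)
Your argument is correct under the assumptions you make explicit, but it takes a different route from the paper. Both proofs start from the same drift-plus-martingale decomposition. The paper then does three things: it asserts that $\hat A_t$ is conditionally unbiased for $A_t$; it takes the per-step bound $|M_t|\le\eta C\sqrt{\log(1/\delta)/K}$ directly from Lemma~\ref{lem:adv-concentration}, never treating $\Xi_t$ separately; and it applies Azuma--Hoeffding to $\sum_{s<t}M_s$. You instead put all increments on one high-probability event via a union bound over $s<T$ and sum with the triangle inequality, which is all a bound linear in $t$ needs.

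Your version is more careful in two respects. First, you do not rely on unbiasedness, which fails for group-normalized advantages because $\hat\mu$ and $\hat\sigma$ enter nonlinearly. You bound the conditional bias $\mathbb E[\hat A_s-A_s\mid\mathcal F_{s-1}]$ directly; integrating the tail in Lemma~\ref{lem:adv-concentration} gives $O(1/\sqrt K)$. Second, you isolate what controlling $\Xi_s$ actually requires. One refinement there: $\Xi_s$ fluctuates not only through the clean-sample advantages but also through the sampled score terms $\nabla_\theta\log\pi(y^{(k)}\mid x_c)$. With bounded scores, Hoeffding over the $K$ conditionally i.i.d.\ rollouts still gives the same rate. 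You also rightly caught the duplicated factor $\eta$ in the displayed drift.

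The cost of your route is the $\log T$. A union bound over $T$ steps inherently produces $\sqrt{\log(T/\delta)/K}$. Absorbing $\log T$ into $C$ makes $C$ depend on $T$, which the statement does not allow, since there $C$ depends only on reward bounds. The condition $K\gtrsim\log(T/\delta)$ only keeps the bound nontrivial; it does not remove $\log T$. Your form is nonetheless exactly what the paper later uses in Theorem~\ref{thm:appendix-early}.

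The martingale route is what can eliminate $T$. Suppose the $M_s$ are conditionally sub-Gaussian at scale $\eta/\sqrt K$; this follows from the tail form of Lemma~\ref{lem:adv-concentration} together with your assumptions on $\Xi_s$. Then for each $t$ the event $|\sum_{s<t}M_s|>t\eta C\sqrt{\log(1/\delta)/K}$ has probability at most $2\delta^{ct}$ with $c\propto C^2$, because the threshold grows like $t$ while the fluctuations grow like $\sqrt t$. Summing this geometric series over $t$ gives at most $\delta$, for $C$ a large enough multiple of the sub-Gaussian constant and, say, $\delta\le 1/2$. The result holds uniformly in $t$ with no dependence on $T$.

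The paper's own execution does not reach this. Azuma--Hoeffding needs almost-sure increment bounds, while the bound it feeds in holds only with probability $1-\delta$ per step. Moreover, substituting $\epsilon=t\eta C\sqrt{\log(1/\delta)/K}$ into its exponent yields $2e^{-t/2}$ rather than $\delta$. So either state your conclusion with $\log(T/\delta)$, or replace your final union-bound step with the sub-Gaussian martingale argument to obtain the lemma as written.
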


\begin{proof}
Recall the log-ratio for a active-noise sample $x_n$:
\[
L_t(x_n) = \log \frac{p_t(y^\star(x_n)|x_n)}{p_t(\tilde y(x_n)|x_n)}.
\]

\paragraph{Step 1: One-step stochastic update.}
Under GRPO, the parameter update is
\[
\theta_{t+1} = \theta_t + \eta \sum_{y\in \mathcal{Y}_t(x_n)} \hat A_t(x_n,y) \nabla_\theta \log \pi_\theta(y|x_n),
\]
where $\hat A_t$ is computed from $K$ rollouts.  
Using first-order approximation for small $\eta$, the log-ratio change is
\begin{equation}
\Delta L_t(x_n) := L_{t+1}(x_n) - L_t(x_n) 
\approx 
\eta \Big(\hat A_t(x_n, y^\star) - \hat A_t(x_n, \tilde y)\Big) + \eta \Xi_t(x_n),
\end{equation}
where $\Xi_t$ is the cross-sample coupling term arising from shared parameters across prompts.

\paragraph{Step 2: Conditional expectation and martingale decomposition.}
Define the filtration $\mathcal F_{t-1}$ as the sigma-algebra generated by all previous rollouts and policy parameters up to step $t-1$.  
Conditional on $\mathcal F_{t-1}$, the stochastic gradient $\hat A_t(x_n, y)$ is unbiased:
\[
\mathbb E[\hat A_t(x_n, y) \mid \mathcal F_{t-1}] = A_t(x_n, y),
\]
since the rollouts $y^{(k)} \sim \pi_{\theta_t}(\cdot|x_n)$ are independent given $\theta_t$.  

Define the martingale difference sequence:
\[
M_t(x_n) := \Delta L_t(x_n) - \mathbb E[\Delta L_t(x_n) \mid \mathcal F_{t-1}].
\]

By construction:

1. $\mathbb E[M_t(x_n) \mid \mathcal F_{t-1}] = 0$, i.e., $\{M_t(x_n), \mathcal F_t\}$ is a martingale difference sequence.
2. The magnitude is bounded due to bounded rewards and Lemma~\ref{lem:adv-concentration}:
\begin{equation}
|M_t(x_n)| = |\Delta L_t(x_n) - \mathbb E[\Delta L_t(x_n)\mid \mathcal F_{t-1}]| 
\le \eta C \sqrt{\frac{\log(1/\delta)}{K}}
\quad \text{with probability } 1-\delta,
\end{equation}
where the constant $C$ comes from the Lipschitz dependence of the log-ratio on normalized advantage.

\paragraph{Step 3: Martingale decomposition.}
With this definition, we can write
\[
L_{t+1}(x_n) = L_0(x_n) + \eta \sum_{s=0}^{t} \Delta_s(x_n) + \sum_{s=0}^{t} M_s(x_n),
\]
where $\Delta_s(x_n) = \mathbb E[\Delta L_s(x_n)\mid \mathcal F_{s-1}]$ is the deterministic drift term.  
This decomposition separates the deterministic drift from the stochastic fluctuation (martingale), allowing application of Azuma-Hoeffding inequality to bound deviations of $L_t(x_n)$ around its mean trajectory.

\paragraph{Step 4: High-probability bound.}
Applying Azuma-Hoeffding to $\sum_{s=0}^{t} M_s(x_n)$, since each $|M_s(x_n)|\le \eta C \sqrt{\log(1/\delta)/K}$, we obtain
\begin{equation}
\Pr\Big( \Big|\sum_{s=0}^{t} M_s(x_n)\Big| \ge \epsilon \Big)
\le 2 \exp\Big(-\frac{\epsilon^2}{2 t \eta^2 C^2 \log(1/\delta)/K}\Big).
\end{equation}
Setting $\epsilon = t \eta C \sqrt{\log(1/\delta)/K}$ gives probability at most $\delta$. 
\end{proof}


\subsection{Cross-Sample Coupling}

Define cross-sample coupling:
\begin{equation}
\Gamma(x_i,x_j) 
= \nabla_\theta \log \pi(y^\star(x_j)|x_j)\cdot \nabla_\theta \log \pi(y^\star(x_i)|x_i).
\end{equation}

Assume:
\[
\mathbb E_{x_c\sim \mathcal D_{\text{clean}}, x_n\sim \mathcal D_{\text{noise}}}[\Gamma(x_c,x_n)] \ge \gamma > 0.
\]

Then the deterministic drift for noise sample correct log-probability is:
\[
\Delta_t = \gamma (1-\rho) G_c - \rho G_n,
\]
where $G_c$ and $G_n$ denote average advantage magnitudes over clean and noisy samples.


\subsection{High-Probability Early Correctness Coherence}

\begin{theorem}[High-Probability Early Correctness Coherence]
\label{thm:appendix-early}
Suppose $p_0(y^\star)>p_0(\tilde y)$, $\gamma>0$, and $K \gtrsim \log(T/\delta)$. Define
\[
\rho_c = \frac{\gamma G_c}{G_n+\gamma G_c}.
\]
If $\rho<\rho_c$, then with probability at least $1-\delta$,
\begin{equation}
L_t(x_n) \ge L_0(x_n)
+ \eta t \Big(\gamma (1-\rho) G_c - \rho G_n - O(\sqrt{\frac{\log(T/\delta)}{K}})\Big)
\end{equation}
for all $t\le T$. In particular, $L_t(x_n)$ increases monotonically in early phase, implying
\[
p_t(y^\star(x_n)|x_n) \gg p_t(\tilde y(x_n)|x_n).
\]
\end{theorem}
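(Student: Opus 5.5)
The plan is to combine the martingale decomposition of Lemma~\ref{lem:logratio} with a lower bound on the conditional drift coming from the cross-sample coupling assumption. Then we union-bound over the $T$ early steps. This requires running Lemma~\ref{lem:adv-concentration} and Lemma~\ref{lem:logratio} at confidence level $\delta/T$, which replaces $\log(1/\delta)$ by $\log(T/\delta)$. The condition $K\gtrsim\log(T/\delta)$ is what keeps the resulting fluctuation term $O(\sqrt{\log(T/\delta)/K})$ bounded by a constant. This in turn keeps the first-order (small-$\eta$) linearization and the bounded-increment requirement of Azuma--Hoeffding valid throughout.

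\textbf{Step 1: the one-step drift.} Fix a noisy prompt $x_n$ and write the expected one-step change $\mathbb E[\Delta L_s(x_n)\mid\mathcal F_{s-1}]$ by expanding the GRPO update to first order in $\eta$ over a mixed batch. The update is a fraction $1-\rho$ of clean prompts and a fraction $\rho$ of noisy prompts.
\begin{itemize}
\item \emph{Clean prompts.} Each $x_c$ contributes $\eta\,A(x_c,y^\star(x_c))\,\nabla_\theta\log\pi(y^\star(x_c)|x_c)\cdot\nabla_\theta L(x_n)$. Averaging over $x_c$ and using assumption (ii), the term along $\nabla\log\pi(y^\star(x_n)|x_n)$ is at least $\gamma G_c$ per unit clean mass, giving $\gamma(1-\rho)G_c$.
\item \emph{Noisy prompts.} These reinforce $\tilde y$. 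Their worst-case contribution to the log-ratio is bounded in magnitude by the noisy advantage, giving $-\rho G_n$. Here we absorb the coupling of noisy gradients with $L(x_n)$ into the definition of $G_n$ as an effective magnitude.
\end{itemize}
Together these give $\mathbb E[\Delta L_s\mid\mathcal F_{s-1}]\ge\eta\Delta_s$ with $\Delta_s=\gamma(1-\rho)G_c-\rho G_n$. This is exactly the identity recorded in the cross-sample coupling subsection, which I would cite rather than rederive.

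\textbf{Step 2: sign of the drift.} The condition $\rho<\rho_c$ is equivalent, by elementary algebra, to $\gamma G_c(1-\rho)>\rho G_n$, i.e.\ $\Delta_s>0$.

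\textbf{Step 3: the fluctuation term.} Plug the drift bound into the decomposition of Lemma~\ref{lem:logratio}. The martingale part $\sum_{s<t}M_s(x_n)$ is controlled by Azuma--Hoeffding with increments $\eta C\sqrt{\log(T/\delta)/K}$. A union bound over $t\le T$ then gives, simultaneously for all $t\le T$ with probability $1-\delta$, a deviation of at most $t\eta C\sqrt{\log(T/\delta)/K}$. Azuma in fact gives the sharper $\sqrt t$ scaling, but the linear bound suffices. We also need the gap between empirical advantages $\hat A$ and population advantages $A$ in the drift. By Lemma~\ref{lem:adv-concentration} at level $\delta/T$, this gap is of the same order per step and is absorbed into the $O(\cdot)$ term. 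Summing yields
\[
L_t(x_n)\ge L_0(x_n)+\eta t\bigl(\Delta_s-O(\sqrt{\log(T/\delta)/K})\bigr).
\]

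\textbf{Step 4: conclusion.} With $K$ large enough that the $O(\cdot)$ term is below $\Delta_s/2$, $L_t$ increases at a linear rate. Combined with $L_0>0$ from assumption (i), this makes $L_t$ positive and growing, which is the claimed dominance $p_t(y^\star)\gg p_t(\tilde y)$.

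\textbf{Main obstacle.} The hard step is Step~1. Assumption (ii) controls only the correlation of clean gradients with $\nabla\log\pi(y^\star(x_n)|x_n)$, not with $\nabla\log\pi(\tilde y(x_n)|x_n)$. So clean updates could in principle also raise $\tilde y$, and a lower bound on the log-ratio drift does not follow directly. My first attempt would be an additional (implicit) assumption: the clean-to-$\tilde y$ coupling is nonpositive or dominated, which is natural since clean updates push toward correct reasoning. Alternatively, I would read $\gamma$ as the coupling with $\nabla L(x_n)$ itself, and state this interpretation explicitly.
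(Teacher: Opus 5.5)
Your proposal follows the paper's proof. You write $L_t = L_0 + \eta\sum_{s<t}\Delta_s + \sum_{s<t}M_s$ via Lemma~\ref{lem:logratio}, note that $\rho<\rho_c$ is equivalent to $\Delta_s>0$, and bound the martingale part by $t\eta C\sqrt{\log(T/\delta)/K}$ with probability $1-\delta$; your union bound at level $\delta/T$ is the step the paper leaves implicit in producing $\log(T/\delta)$.

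The obstacle you flag in Step~1 is genuine: the coupling subsection states the formula as the drift of the correct log-probability and then uses it for $L_t$. The paper resolves it by fiat, positing $\Delta_s=\gamma(1-\rho)G_c-\rho G_n$ as the drift of the log-ratio in assumption (iii) of Theorem~\ref{thm:early-main-final-formal}, so citing that identity, as you do, is exactly the paper's route.
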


\begin{proof}
Sum over martingale differences:
\[
L_t = L_0 + \eta \sum_{s<t} \Delta_s + \sum_{s<t} M_s.
\]
By Azuma-Hoeffding inequality, with probability $1-\delta$, $|\sum M_s|\le t \eta C \sqrt{\log(T/\delta)/K}$. The deterministic drift term is positive under $\rho<\rho_c$, yielding the stated high-probability bound.
\end{proof}


\subsection{Active Noise Collapse Condition}

\begin{theorem}[Active-Noise Collapse]
\label{thm:collapse}
If $\rho>\rho_c + O(\sqrt{\log(T/\delta)/K})$, then with high probability $L_t(x_n)\to -\infty$, i.e., active-noise solutions dominate.
\end{theorem}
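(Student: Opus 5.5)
The argument mirrors the proof of Theorem~\ref{thm:appendix-early}, with the sign of the drift reversed. Start from the martingale decomposition of Lemma~\ref{lem:logratio}:
\[
L_t(x_n)=L_0(x_n)+\eta\sum_{s<t}\Delta_s(x_n)+\sum_{s<t}M_s(x_n),
\]
and use the same deterministic drift $\Delta_s=\gamma(1-\rho)G_c-\rho G_n$ obtained in the cross-sample coupling subsection. Since $\Delta_s$ is affine and decreasing in $\rho$, with slope $-(\gamma G_c+G_n)$, and vanishes exactly at $\rho=\rho_c$, we can write
\[
\Delta_s=-(\gamma G_c+G_n)(\rho-\rho_c).
\]
Set $\kappa:=\sqrt{\log(T/\delta)/K}$. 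The hypothesis $\rho>\rho_c+C_1\kappa$, with $C_1$ to be fixed, then gives
\[
\Delta_s\le -(\gamma G_c+G_n)C_1\kappa .
\]
Strictly, the drift modeled in the paper is an average that also involves the coupling lower bound $\gamma$. To get an upper bound on the drift I would add a matching upper bound on the coupling, or interpret $\gamma$ as the effective coupling, so that the expression for $\Delta_s$ holds with equality up to $O(\kappa)$. I would state this explicitly as a standing assumption.

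Next I control the fluctuation term. Apply Azuma--Hoeffding as in Lemma~\ref{lem:logratio}, with a union bound over $t\le T$ that replaces $\log(1/\delta)$ by $\log(T/\delta)$, together with the advantage concentration of Lemma~\ref{lem:adv-concentration}. This gives, with probability at least $1-\delta$ and simultaneously for all $t\le T$,
\[
\Big|\sum_{s<t}M_s\Big|\le C\eta t\kappa .
\]
Choose $C_1$ so that $(\gamma G_c+G_n)C_1>C$. Then
\[
L_t(x_n)\le L_0(x_n)-\eta t\,c\,\kappa,\qquad c:=(\gamma G_c+G_n)C_1-C>0,
\]
so $L_t$ decreases linearly in $t$.

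To pass from linear decrease on $[0,T]$ to $L_t\to-\infty$, I would take $T$ as large as desired and apply the bound on consecutive blocks $[kT,(k+1)T]$, with confidence $\delta_k=\delta 2^{-k}$ on block $k$. The cost is a $\log k$ term inside $\kappa$, which has to be absorbed by the margin.

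\textbf{Main obstacle.} This step is where I expect the main difficulty. The small-$\eta$, no-clipping, early-phase linearization must remain valid, and $G_n$ must stay bounded away from zero, as $L_t$ decreases. I would argue self-reinforcement: once $p_t(\tilde y)>p_t(y^\star)$, the rollouts hitting $\tilde y$ become more frequent, so $G_n$ increases, the drift remains negative, and the inequality persists. I would formalize this by a monotonicity argument on $G_n$ as a function of $p_t(\tilde y)$, and conclude that $L_t\to-\infty$, i.e.\ the active-noise solutions dominate.
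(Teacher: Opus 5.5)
Up to the bound $L_t(x_n)\le L_0(x_n)-\eta t\,c\,\kappa$ for $t\le T$, your argument follows the paper's proof step for step: the martingale decomposition, a negative drift under $\rho>\rho_c+O(\kappa)$, Azuma--Hoeffding, and $K$ large. Your point that assumption (ii) only bounds the drift from \emph{below} is a legitimate sharpening of the paper's Step~2, which simply treats $\Delta_s=\gamma(1-\rho)G_c-\rho G_n$ as an identity.

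The paper then lets $t\to\infty$ directly in a bound it has proved only for $t\le T$. Your attempt to justify that passage is where the plan does not close.

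\textbf{The block argument.} The schedule $\delta_k=\delta2^{-k}$ gives $\log(T/\delta_k)=\log(T/\delta)+k\log2$, which is linear in $k$, not $\log k$. Even with $\delta_k\propto\delta/k^2$, the per-block $\kappa_k$ still diverges. The fluctuation bound you inherit from Lemma~\ref{lem:logratio} is a per-increment bound summed over the block, so it is linear in the block length. Hence the only bound on $L_{(k+1)T}-L_{kT}$ is $-\eta T\big[(\gamma G_c+G_n)(\rho-\rho_c)-C\kappa_k\big]$, and no fixed margin $\rho-\rho_c$ keeps this negative for all $k$.

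What works is Azuma with its genuine $\sqrt t$ scaling:
\begin{itemize}
\item For binary rewards the group-normalized advantages satisfy $|\hat A|\le\sqrt{K-1}$, so with bounded score functions $|M_s|\le\eta B$ almost surely.
\item A union bound with $\delta_t=\delta/(2t^2)$ then gives $\big|\sum_{s<t}M_s\big|\le\eta B\sqrt{2t\log(4t^2/\delta)}=o(t)$ simultaneously for all $t$, with probability at least $1-\delta$.
\item Any uniform drift bound $\Delta_s\le-\epsilon<0$ then forces $L_t(x_n)\to-\infty$.
\end{itemize}

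\textbf{The uniform drift bound.} This bound cannot be obtained from the self-reinforcement you propose. Consider a prompt whose rollouts hit $\tilde y$ at rate $\mu$. Under group normalization its average advantage magnitude per rollout is of order $\sqrt{\mu(1-\mu)}$, and its advantage is exactly zero whenever all $K$ rollouts agree, which happens with probability $\mu^K$. So $G_n$ increases only while $\mu<1/2$, and it tends to $0$ as $p_t(\tilde y)\to1$ — precisely the regime you need.

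If $G_n$ shrinks faster than $G_c$, then $\rho_c=\gamma G_c/(\gamma G_c+G_n)$ rises toward $1$. The hypothesis $\rho>\rho_c$ can then fail along the trajectory, and $L_t$ may stall rather than diverge.

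The conclusion $L_t\to-\infty$ therefore has to rest either on treating $G_c,G_n$ as fixed or on assuming $\sup_s\Delta_s\le-\epsilon$ outright. The paper does this implicitly (``the deterministic drift is negative at every step''). State it as an explicit standing assumption rather than trying to derive it by a monotonicity argument on $G_n$.
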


\begin{proof}
Recall the log-ratio for a active-noise sample $x_n$:
\[
L_t(x_n) = \log \frac{p_t(y^\star(x_n)|x_n)}{p_t(\tilde y(x_n)|x_n)}.
\]

\paragraph{Step 1: Martingale decomposition.}
From Lemma~\ref{lem:logratio}, we have the decomposition
\[
L_t(x_n) = L_0(x_n) + \eta \sum_{s=0}^{t-1} \Delta_s(x_n) + \sum_{s=0}^{t-1} M_s(x_n),
\]
where $\Delta_s(x_n) = \mathbb E[\Delta L_s(x_n)\mid \mathcal F_{s-1}]$ is the deterministic drift and $\{M_s(x_n)\}$ is a martingale difference sequence with bounded increments
\[
|M_s(x_n)| \le \eta C \sqrt{\frac{\log(1/\delta)}{K}} \quad \text{w.p. } 1-\delta.
\]

\paragraph{Step 2: Drift under active-noise dominance.}
By definition, the deterministic drift is
\[
\Delta_s(x_n) = \mathbb E[\Delta L_s(x_n)\mid \mathcal F_{s-1}] 
= \gamma (1-\rho) G_c - \rho G_n,
\]
where $G_c$ and $G_n$ are the average clean and noisy advantage magnitudes, and $\gamma>0$ is the cross-sample coupling coefficient.  

Under the condition
\[
\rho > \rho_c + O\Big(\sqrt{\frac{\log(T/\delta)}{K}}\Big), \quad 
\text{with } \rho_c = \frac{\gamma G_c}{G_n + \gamma G_c},
\]
we have
\[
\Delta_s(x_n) \le - \epsilon < 0
\]
for some positive $\epsilon$ depending on $(\rho - \rho_c)$ and finite-rollout corrections.  
This ensures that the deterministic drift is negative at every step.

\paragraph{Step 3: Finite-rollout martingale corrections.}
The martingale sum $\sum_{s=0}^{t-1} M_s(x_n)$ introduces stochastic fluctuations.  
By Azuma-Hoeffding inequality, for all $t \le T$:
\[
\Pr\Bigg( \Big| \sum_{s=0}^{t-1} M_s(x_n) \Big| \ge t \eta C \sqrt{\frac{\log(1/\delta)}{K}} \Bigg) \le \delta.
\]

For sufficiently large $K$, the martingale fluctuation is smaller in magnitude than the deterministic negative drift:
\[
t \eta C \sqrt{\frac{\log(1/\delta)}{K}} < t \eta \epsilon,
\]
so the stochastic perturbation does not reverse the sign of the drift.

\paragraph{Step 4: Log-ratio monotonic decrease.}
Combining Steps 2 and 3, we have with probability at least $1-\delta$:
\begin{equation}
L_t(x_n) = L_0(x_n) + \underbrace{\sum_{s=0}^{t-1} \Delta_s(x_n)}_{\text{negative drift}} 
+ \underbrace{\sum_{s=0}^{t-1} M_s(x_n)}_{\text{bounded fluctuation}} 
\le L_0(x_n) - t \eta (\epsilon - C \sqrt{\frac{\log(1/\delta)}{K}}).
\end{equation}

As $t \to \infty$, the negative drift dominates and the log-ratio decreases monotonically, hence
\[
L_t(x_n) \to -\infty,
\]
implying that
\[
p_t(y^\star(x_n)|x_n) \ll p_t(\tilde y(x_n)|x_n),
\]
i.e., active-noise solutions dominate the policy with high probability.

\end{proof}


\subsection{KL-Regularized Dynamics}

With KL regularization term $\beta D_{\text{KL}}(\pi_\theta\|\pi_{\text{ref}})$, the replicator drift becomes
\[
\Delta_t^{KL} = \gamma (1-\rho) G_c - \rho G_n - \beta \Delta_{\text{ref}},
\]
where 
\[
\Delta_{\text{ref}} = \log \frac{p_{\text{ref}}(y^\star)}{p_{\text{ref}}(\tilde y)}.
\]

\begin{theorem}[KL-Regularized Phase Boundary]
\label{thm:kl-boundary}
With KL regularization, the critical noise ratio shifts to
\[
\rho_c^{KL} = \frac{\gamma G_c - \beta \Delta_{\text{ref}}}{G_n + \gamma G_c}.
\]
The high-probability Early Correctness Coherence and collapse conditions are modified accordingly.
\end{theorem}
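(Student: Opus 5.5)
The proof is a rerun of the drift-plus-martingale argument behind Theorem~\ref{thm:appendix-early}, with the KL penalty entering as one extra deterministic term. The first step is to redo the one-step expansion of Lemma~\ref{lem:logratio} with the regularized objective. To first order in $\eta$, the gradient of $-\beta\,\mathbb{D}_{\rm KL}[\pi_\theta\|\pi_{\rm ref}]$ adds a pull toward $\pi_{\rm ref}$ in the log-ratio coordinate. On the two-point restriction $\{y^\star,\tilde y\}$, where the KL gradient in log-probabilities is $\log(p_t/p_{\rm ref})$ plus a constant, this gives
\[
\Delta L_t(x_n) \approx \eta\big(\hat A_t(x_n,y^\star)-\hat A_t(x_n,\tilde y)\big) + \eta\,\Xi_t(x_n) - \eta\beta\big(L_t(x_n)-\Delta_{\rm ref}\big).
\]
The KL contribution is deterministic given $\mathcal F_{t-1}$, so it changes the drift but not the martingale part.

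The drift stated in the paper, $\Delta_t^{KL}=\gamma(1-\rho)G_c-\rho G_n-\beta\Delta_{\rm ref}$, has a constant KL term $-\beta\Delta_{\rm ref}$. To reach that form, I will linearize the KL gradient in the early phase. Under the early-phase convention of Theorem~\ref{thm:early-main-final-formal}, the policy stays close to its initialization and is treated as aligned with $\pi_{\rm ref}$, so the state-dependent part is absorbed and the penalty reduces to the constant $-\beta\Delta_{\rm ref}$. I state this as an explicit modeling assumption rather than derive it, since it is what turns the drift into the paper's form.

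Next I reuse the martingale decomposition and the Azuma--Hoeffding step verbatim. The martingale differences $M_t$ are unchanged because the KL term is $\mathcal F_{t-1}$-measurable. Their increments still satisfy the bound of Lemma~\ref{lem:adv-concentration}, with a union bound over $t\le T$ giving the $\log(T/\delta)$ factor. Therefore, with probability at least $1-\delta$, for all $t\le T$,
\[
L_t \ge L_0+\eta t\big(\Delta^{KL}-O(\sqrt{\log(T/\delta)/K})\big),
\]
and the analogous upper bound holds for collapse.

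The phase boundary then comes from solving $\Delta^{KL}=0$ for $\rho$. Writing $\gamma G_c-\rho(\gamma G_c+G_n)-\beta\Delta_{\rm ref}=0$ gives
\[
\rho_c^{KL}=\frac{\gamma G_c-\beta\Delta_{\rm ref}}{G_n+\gamma G_c}.
\]
If $\rho<\rho_c^{KL}$, the drift is positive and coherence follows exactly as in Theorem~\ref{thm:appendix-early}. If $\rho>\rho_c^{KL}+O(\sqrt{\log(T/\delta)/K})$, the drift is negative and the argument of Theorem~\ref{thm:collapse} goes through with the new threshold.

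The main obstacle is the KL linearization. In truth the KL term is state-dependent, $-\beta(L_t-\Delta_{\rm ref})$, which makes the dynamics mean-reverting toward $\Delta_{\rm ref}$, not a constant shift. A rigorous version would need to bound $|L_t|$ over the early window, say by $\eta T$ times the advantage bound, so that the error from freezing it is $O(\beta\eta T)$ and can be folded into the $O(\cdot)$ slack. The sign convention for $\Delta_{\rm ref}$ has to be fixed consistently in that same step.
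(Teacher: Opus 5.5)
Your skeleton is the paper's proof. There too the KL penalty enters only as an extra $\mathcal F_{t-1}$-measurable drift term, the martingale part and the Azuma--Hoeffding step are reused unchanged, $\rho_c^{KL}$ comes from solving $\Delta^{KL}=0$, and coherence and collapse are read off the sign of the drift. The difference is that the paper simply asserts $\Delta_t^{KL}=\gamma(1-\rho)G_c-\rho G_n-\beta\Delta_{\text{ref}}$ ``using the same first-order approximation,'' while you try to justify it, and the justification breaks at the step you flag.

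You correctly find the KL contribution $-\beta(L_t-\Delta_{\text{ref}})$. But your reason for replacing it by the constant $-\beta\Delta_{\text{ref}}$ implies something different. If the early policy stays close to $\pi_0=\pi_{\text{ref}}$, then $L_t\approx\Delta_{\text{ref}}$ and the KL drift is $\approx 0$; any KL penalty has zero gradient at $\pi_\theta=\pi_{\text{ref}}$, whatever estimator is used. More generally, freezing the state at its initial value gives $-\beta(L_0-\Delta_{\text{ref}})$, which equals the stated $-\beta\Delta_{\text{ref}}$ only if $L_0=2\Delta_{\text{ref}}$, and nothing supports that.

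Your proposed repair does not rescue the formula either way. Read as controlling the displacement $|L_t-L_0|=O(\eta T)$, it proves $\rho_c^{KL}=\rho_c+O(\beta\eta T)$, i.e.\ no leading-order shift when $\pi_0=\pi_{\text{ref}}$. Read literally as ``$|L_t|$ small,'' it gives $+\beta\Delta_{\text{ref}}$ and the threshold $(\gamma G_c+\beta\Delta_{\text{ref}})/(G_n+\gamma G_c)$. The sign is not a convention you are free to fix, because $\Delta_{\text{ref}}=\log\frac{p_{\text{ref}}(y^\star)}{p_{\text{ref}}(\tilde y)}$ is part of the statement.

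The collapse half has the same problem. Under the mean-reverting drift $\Delta_s-\beta(L_t-\Delta_{\text{ref}})$ you wrote down, $L_t-L^\ast$ contracts by the factor $1-\eta\beta$ each step, up to bounded martingale noise, where $L^\ast=\Delta_{\text{ref}}+\Delta_s/\beta$. So $L_t$ stays bounded and cannot tend to $-\infty$ for any $\beta>0$. Invoking Theorem~\ref{thm:collapse} ``with the new threshold'' is therefore valid only inside a constant-drift model.

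In your own model, the meaningful boundary is where $L^\ast<0$, which again gives $(\gamma G_c+\beta\Delta_{\text{ref}})/(G_n+\gamma G_c)$. That version says, sensibly, that a reference favoring $y^\star$ raises the tolerable noise; the stated formula lowers it.

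Relative to the paper you are not missing an idea: its Step~1 contains no derivation either. But the statement as written cannot be obtained by linearizing the KL term you computed. The only consistent route is the paper's: take the constant drift $\Delta^{KL}$ as an explicit additional modeling hypothesis, not as a consequence of the early-phase approximation. After that, your remaining steps (unchanged martingale bound, solving a linear equation in $\rho$, sign analysis) go through as written.
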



\begin{proof}
We consider the same log-ratio for a active-noise sample $x_n$:
\[
L_t(x_n) = \log \frac{p_t(y^\star(x_n)|x_n)}{p_t(\tilde y(x_n)|x_n)}.
\]

\paragraph{Step 1: One-step update with KL regularization.}
Under GRPO with KL regularization, the surrogate gradient includes a term:
\begin{equation}
-\beta \nabla_\theta D_{\text{KL}}(\pi_\theta\|\pi_{\text{ref}}) 
= -\beta \mathbb E_{y\sim \pi_\theta}[\nabla_\theta \log \pi_\theta(y|x_n) \log \frac{\pi_\theta(y|x_n)}{\pi_{\text{ref}}(y|x_n)}].
\end{equation}

Using the same first-order approximation as before for small $\eta$, the deterministic drift of the log-ratio becomes
\begin{equation}
\Delta_t^{KL} := \mathbb E[\Delta L_t(x_n) \mid \mathcal F_{t-1}] 
= \gamma (1-\rho) G_c - \rho G_n - \beta \Delta_{\text{ref}},
\end{equation}
where
\[
\Delta_{\text{ref}} = \log \frac{p_{\text{ref}}(y^\star)}{p_{\text{ref}}(\tilde y)}
\]
captures the bias introduced by the reference policy.  

\paragraph{Step 2: Modified critical noise ratio.}
In the non-KL case, the phase boundary is
\[
\rho_c = \frac{\gamma G_c}{\gamma G_c + G_n}.
\]

Including the KL drift, we require the total deterministic drift to be zero at the boundary:
\[
\Delta_t^{KL} = \gamma (1-\rho_c^{KL}) G_c - \rho_c^{KL} G_n - \beta \Delta_{\text{ref}} = 0.
\]

Solving for $\rho_c^{KL}$ gives
\begin{equation}
\gamma (1-\rho_c^{KL}) G_c - \rho_c^{KL} G_n - \beta \Delta_{\text{ref}} = 0
\quad \Rightarrow \quad
\rho_c^{KL} = \frac{\gamma G_c - \beta \Delta_{\text{ref}}}{G_n + \gamma G_c}.
\end{equation}

\paragraph{Step 3: Martingale decomposition and high-probability bound.}
Define the martingale difference as in Lemma~\ref{lem:logratio}:
\[
M_t(x_n) = \Delta L_t(x_n) - \mathbb E[\Delta L_t(x_n) \mid \mathcal F_{t-1}].
\]
Its magnitude is still bounded by $\eta C \sqrt{\log(1/\delta)/K}$ due to Lemma~\ref{lem:adv-concentration}.  

Thus, the log-ratio evolves as
\[
L_t(x_n) = L_0(x_n) + \eta \sum_{s=0}^{t-1} \Delta_s^{KL} + \sum_{s=0}^{t-1} M_s(x_n),
\]
and applying Azuma-Hoeffding gives with probability $1-\delta$:
\begin{equation}
\left| L_t(x_n) - L_0(x_n) - \eta \sum_{s=0}^{t-1} \Delta_s^{KL} \right|
\le t \eta C \sqrt{\frac{\log(1/\delta)}{K}}.
\end{equation}

\paragraph{Step 4: Implications for Early Correctness Coherence  and collapse.}
- If $\rho < \rho_c^{KL}$, the deterministic drift $\Delta_t^{KL} > 0$, so $L_t(x_n)$ increases, and Early Correctness Coherence occurs.  
- If $\rho > \rho_c^{KL}$, the deterministic drift $\Delta_t^{KL} < 0$, so $L_t(x_n) \to -\infty$ with high probability, i.e., active-noise collapse occurs.  

Finite-rollout stochastic fluctuations do not change the sign of the drift if $K$ is sufficiently large, preserving the phase boundary.  

This completes the proof.
\end{proof}

\subsection{OLR Improves Label Noise Tolerance} \label{proof_thm_noise_tolerance}
\begin{proof}
We prove the theorem \ref{thm:olr-tolerance} in three steps: correctness of the selected label, reduction of the effective noise ratio, and improvement of the tolerable noise threshold.

\paragraph{Step 1: Correctness of the selected label.}
For prompt $x$ with ground-truth $y^\star(x)$, let $\pi_\theta$ generate $K$ rollouts $\mathcal{Y}_t(x)$. Denote $p^\star = \pi_\theta(y^\star(x)\mid x)$ and $p_c = \pi_\theta(c\mid x)$ for $c\neq y^\star(x)$, with probability gap $\Delta_p = p^\star - \max_{c\neq y^\star(x)} p_c >0$ under early correctness dominance (Theorem~\ref{thm:appendix-early}). Let $N_c\sim \mathrm{Binomial}(K,p_c)$, and define the majority answer $y^{\text{maj}}_t(x)=\arg\max_c N_c$.  

An incorrect selection occurs only if $N_c \ge N_{y^\star}$ for some $c\neq y^\star$, which by Chernoff and union bound gives
\[
\Pr(y^{\text{maj}}_t(x)\neq y^\star(x)) \le \epsilon, \quad \epsilon=O(\exp(-K\Delta_p^2)).
\]
Hence, whenever OLR replaces a label,
\[
\Pr(\hat y_t(x)=y^\star(x)) \ge 1-\epsilon.
\]

\begin{figure*}[!t]
  \centering
  \includegraphics[width=1.0\textwidth]{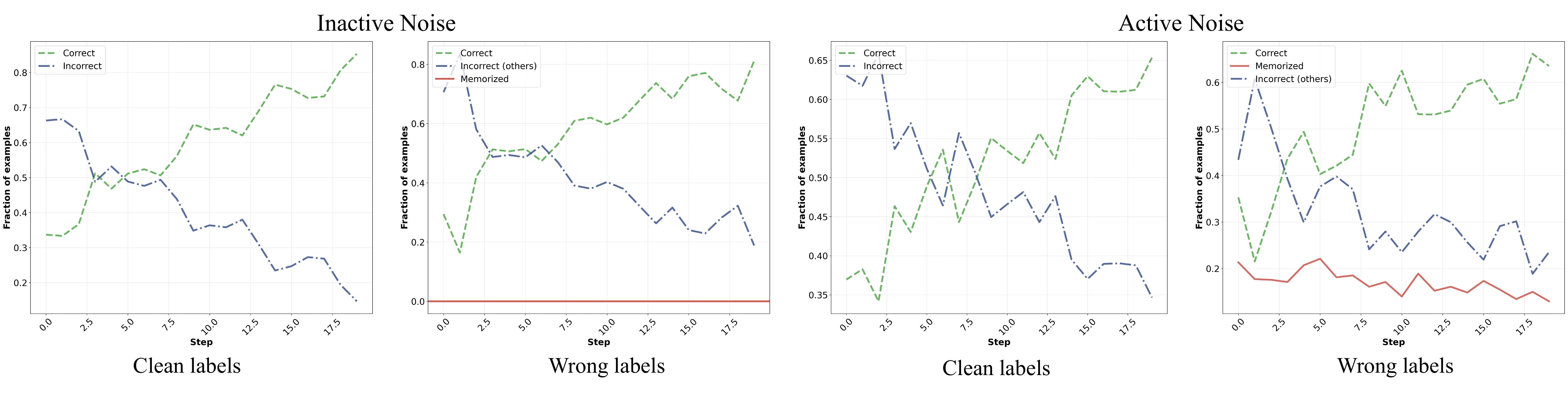}
\caption{
Results obtained with 50\% inactive or active noisy labels (800 samples) using the Qwen-3-4B-Base model \citep{yang2025qwen3}. \textbf{For each sample, we take the majority vote across multiple model rollouts as the model's prediction.}
The left and right pairs of columns illustrate the model's predictions on clean and noisy samples, respectively, under both inactive and active noise. For clean samples, we display the proportion of correct (green) and incorrect (blue) predictions. For noisy samples, we show the proportion of predictions that are correct (green), that match the noisy label (red), or that match neither the true nor the noisy label (blue). The early training phase reveals that the model learns to predict true labels even on noisy examples, indicating a preference for fitting correctly labeled samples and an increasing likelihood of producing correct answers on noisy ones over time.
}
  \label{fig:assumption_verify}
\end{figure*}

\begin{figure*}[!t]
  \centering

    \begin{subfigure}{0.32\textwidth}
    \centering
    \includegraphics[width=\linewidth]{Figure/weak_0.5_True.pdf}
    \caption{Training Accuracy vs. Steps}
  \end{subfigure}
  \hfill
    \begin{subfigure}{0.32\textwidth}
    \centering
    \includegraphics[width=\linewidth]{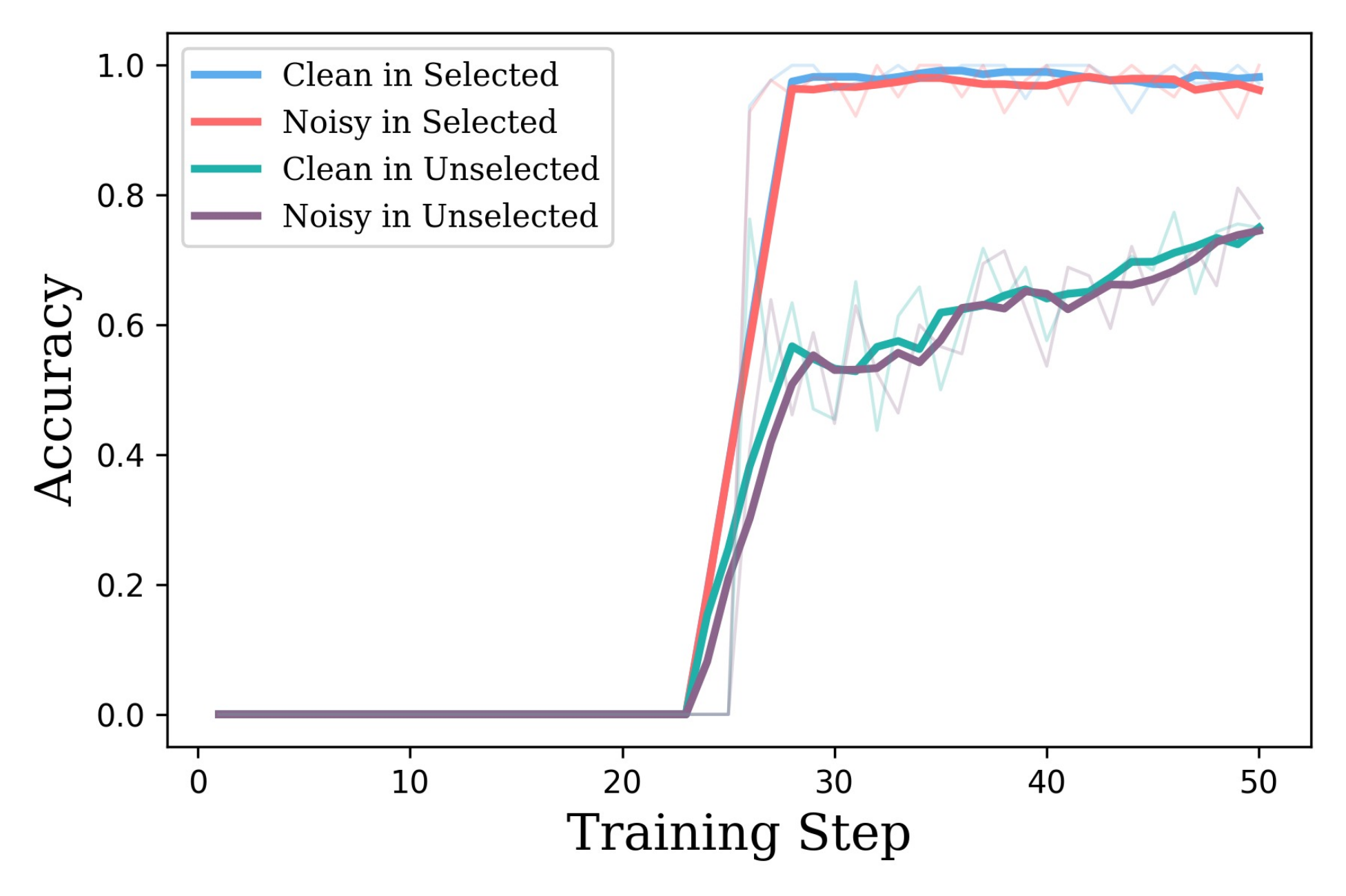}
    \caption{Selected vs. Unselected Accuracy}
  \end{subfigure}
  \hfill
  \begin{subfigure}{0.32\textwidth}
    \centering
    \includegraphics[width=\linewidth]{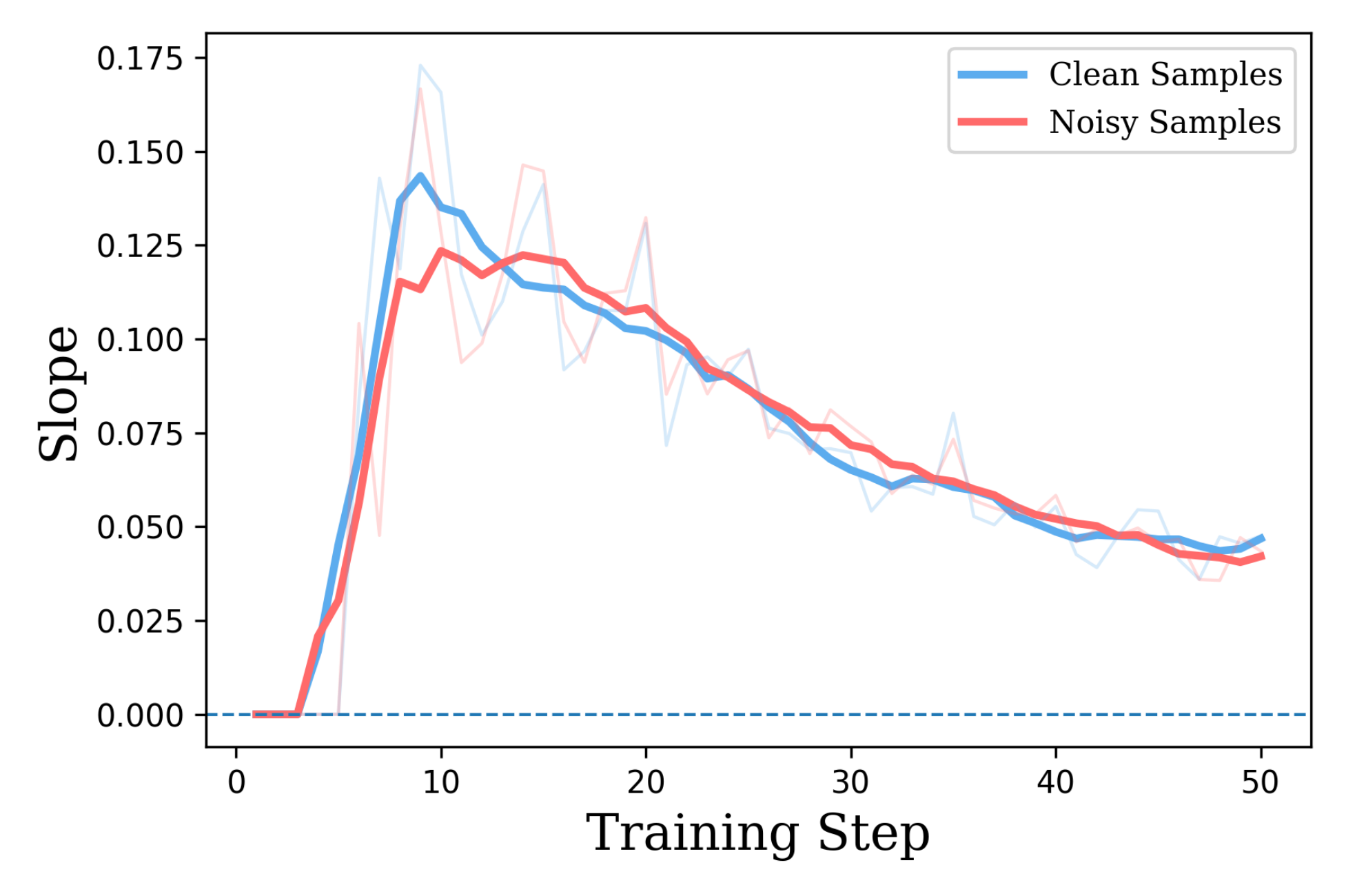}
    \caption{Passrate Slope}
  \end{subfigure}

  \vspace{2mm}

  \begin{subfigure}{0.32\textwidth}
    \centering
    \includegraphics[width=\linewidth]{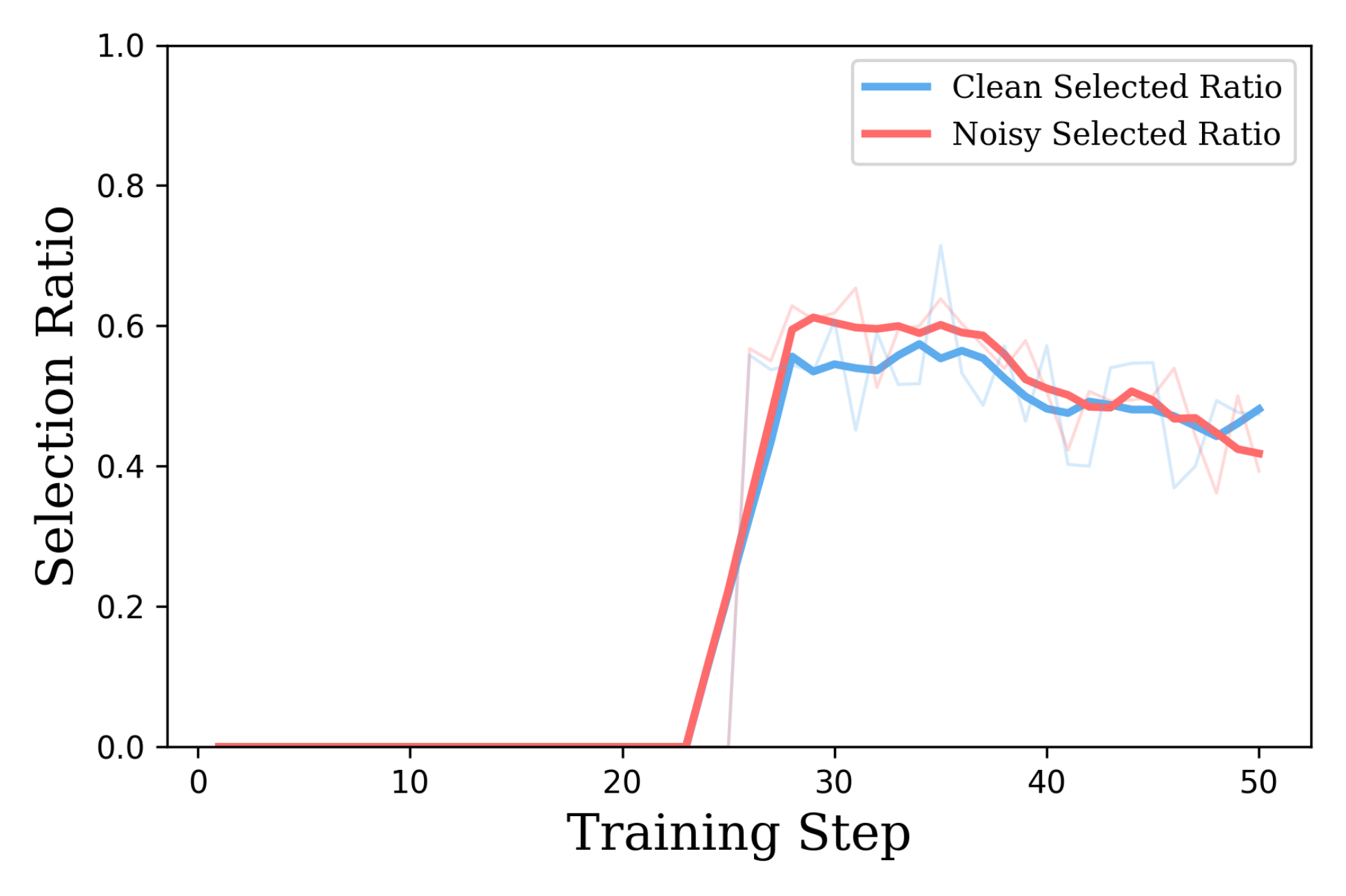}
    \caption{Selection Ratio}
  \end{subfigure}
  \hfill
  \begin{subfigure}{0.32\textwidth}
    \centering
    \includegraphics[width=\linewidth]{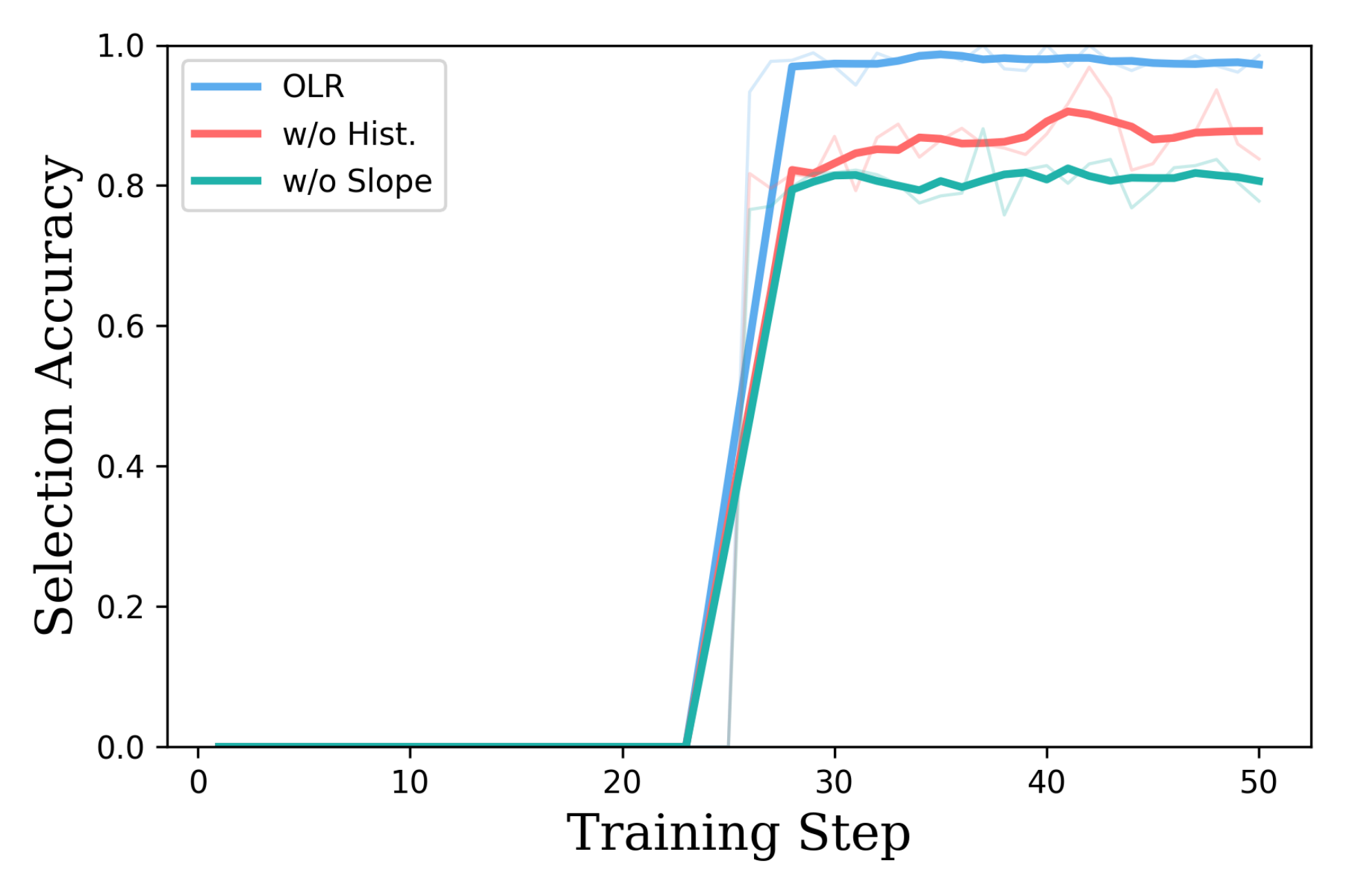}
    \caption{Ablation Accuracy}
  \end{subfigure}
  \hfill
  \begin{subfigure}{0.32\textwidth}
    \centering
    \includegraphics[width=\linewidth]{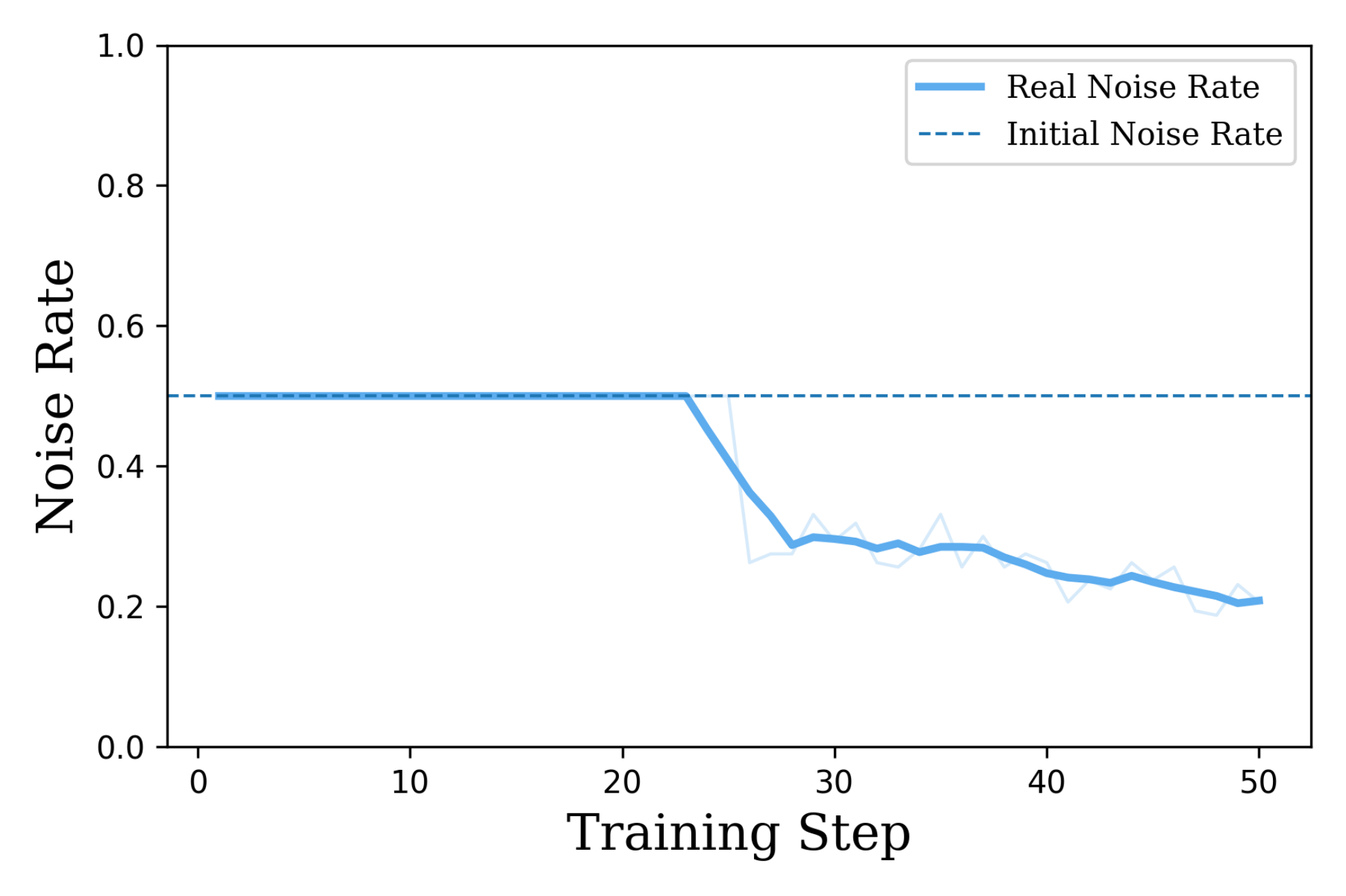}
    \caption{Real vs. Initial Noise}
  \end{subfigure}

  \caption{Training dynamic of OLR with Qwen3-4B-Base under the inactive noisy (0.5 ratio) scenario.}
  \label{fig:weak_train_dynamic}
\end{figure*}

\paragraph{Step 2: Effective noise ratio reduction.}
Let $\rho$ be the original noise ratio and $\Delta$ the probability that a noisy sample satisfies the OLR replacement criteria. Then the probability that a noisy sample remains incorrect is approximately $1-\Delta$, ignoring the exponentially small $\epsilon$, so
\[
\rho_{\text{eff}} \approx \rho(1-\Delta) < \rho.
\]

\paragraph{Step 3: Increased tolerable noise threshold.}
Let $\rho_c$ be the critical noise ratio in Theorem~\ref{thm:appendix-early}. Stability now requires
\[
\rho_{\text{eff}} = \rho(1-\Delta) < \rho_c \quad \Rightarrow\quad \rho < \frac{\rho_c}{1-\Delta} =: \rho_c^{\text{OLR}}.
\]
Since $\Delta>0$, we have $\rho_c^{\text{OLR}} > \rho_c$.

\paragraph{Conclusion.}  
Combining the above shows that OLR (i) selects correct labels with high probability, (ii) reduces the effective noise ratio, and (iii) increases the maximum tolerable noise level, completing the proof.
\end{proof}

\begin{table*}[!t]
\caption{Results on Qwen3-8B-Base under 50\% noise ratio. \textbf{Bold} indicates the better.}
\label{tab:8B_800}
\centering
\small
\resizebox{\textwidth}{!}{%
\begin{tabular}{lccccccccccc}
\toprule
\multirow{2.5}{*}{\textbf{Method}} 
& \multicolumn{7}{c}{\textbf{In-Distribution}} 
& \multicolumn{4}{c}{\textbf{Out-of-Distribution}} \\
\cmidrule(lr){2-8}
\cmidrule(lr){9-12}
& AIME24 & AIME25 & AMC & MATH-500 & Minerva & Olympiad 
& \phantom{0}\phantom{0}\textbf{Avg.}\phantom{0}\phantom{0}
& ARC-c & GPQA$^{\star}$ & MMLU-Pro &  \phantom{0}\phantom{0}\textbf{Avg.}\phantom{0}\phantom{0}\\
\midrule
\makecell[l]{Qwen3-8B-Base} 
&10.0	&7.5	&38.0	&61.2	&27.2	&33.3	&{29.5}		&28.8	&13.6	&46.9	&{29.8}\\
\midrule
\multicolumn{11}{l}{\emph{\quad \textbf{Active Noisy Label}}} \\

w/o OLR	&14.6	&12.5	&52.3	&80.2	&39.7	&43.9	&{40.5}	&59.6	&19.2	&60.3	&{46.4} \\
w/ OLR	&16.3	&15.0	&50.3	&80.4	&42.6	&45.5	&\textbf{41.7}		&75.5	&25.3	&60.1	&\textbf{53.6} \\

\midrule
\multicolumn{11}{l}{\emph{\quad \textbf{Inactive Noisy Label}}} \\
w/o OLR	&24.6	&20.4	&58.3	&85.0	&39.7	&52.7	&{46.8}		&67.2	&34.3	&62.9	&{54.8} \\
w/ OLR	&24.6	&18.8	&56.8	&86.6	&44.1	&51.6	&\textbf{47.1}		&75.5	&31.3	&62.2	&\textbf{56.3} \\
\bottomrule
\end{tabular}
}
\end{table*}

\begin{table*}[!t]
\caption{Results on Deepseek-R1-Distill-Llama-8B under 50\% noise ratio. \textbf{Bold} indicates the better.}
\label{tab:llama_8B_800}
\centering
\small
\resizebox{\textwidth}{!}{%
\begin{tabular}{lccccccccccc}
\toprule
\multirow{2.5}{*}{\textbf{Method}} 
& \multicolumn{7}{c}{\textbf{In-Distribution}} 
& \multicolumn{4}{c}{\textbf{Out-of-Distribution}} \\
\cmidrule(lr){2-8}
\cmidrule(lr){9-12}
& AIME24 & AIME25 & AMC & MATH-500 & Minerva & Olympiad 
& \phantom{0}\phantom{0}\textbf{Avg.}\phantom{0}\phantom{0}
& ARC-c & GPQA$^{\star}$ & MMLU-Pro &  \phantom{0}\phantom{0}\textbf{Avg.}\phantom{0}\phantom{0}\\
\midrule
\makecell[l]{Deepseek-R1-Distill-Llama-8B} 
&15.8	&17.9	&48.2	&72.2	&18.8	&35.7	&34.8		&24.9	&10.1	&38.7	&24.6\\
\midrule
\multicolumn{11}{l}{\emph{\quad \textbf{Active Noisy Label}}} \\

w/o OLR	&21.7	&56.2	&19.6	&72.2	&18.4	&40.4	&{38.1}		&20.6	&13.1	&41.7	&{25.1} \\
w/ OLR	&25.8	&17.5	&60.5	&73.8	&18.4	&39.3	&\textbf{39.2}		&27.2	&14.6	&41.5	&\textbf{27.8} \\

\midrule
\multicolumn{11}{l}{\emph{\quad \textbf{Inactive Noisy Label}}} \\
w/o OLR	&28.9	&22.9	&65.8	&80.0	&22.8	&45.2	&{44.2}		&31.0	&18.7	&45.0	&{31.6} \\
w/ OLR	&34.2	&20.4	&65.1	&80.0	&24.3	&45.6	&\textbf{44.9}	&33.8	&19.2	&45.0	&\textbf{32.7} \\
\bottomrule
\end{tabular}
}
\end{table*}

\section{Experiment Details}

\subsection{Detailed Setup}
All experiments were conducted on 8 × NVIDIA A100 (80G) GPUs. The validation and test sets followed those used in prior work \citep{luffyyan2025learning}. We set the hyperparameters to $\delta_{\text{slope}}=0.05$ and $T=5$, with a maximum response length of 4096 and 8 rollouts. During training, we used a learning rate of 1e-6 and a KL loss weight of 1e-3. The temperature coefficient was set to 1.0 during training, and to 0.6 for both validation and testing. Our implementation is built upon veRL\footnote{https://github.com/volcengine/verl}, which uses vLLM\footnote{https://github.com/vllm-project/vllm} as the rollout generator. We thank the authors for making these open-source repositories available.

\subsection{Baseline Description} \label{subsec:baseline_details}
We compare two categories of baseline methods.
The first category consists of unsupervised approaches that do not rely on ground-truth labels:
    \begin{itemize}
        \item \textbf{TTRL} \citep{zuo2025ttrl}: uses majority-voted outputs as pseudo-labels;
        \item \textbf{Co-Reward} \citep{zhang2025co}: leverages contrastive consistency to alleviate collapse in unsupervised RLVR training;
        \item \textbf{Self-Certainty} \citep{zhao2025learning}: promotes high-confidence predictions by maximizing KL divergence;
        \item \textbf{Token-Level Entropy} \citep{agarwal2025unreasonable}: enforces consistency by minimizing token-level entropy;
        \item \textbf{Sentence-Level Entropy} \citep{agarwal2025unreasonable}: encourages coherent predictions by maximizing sentence-level likelihood.
    \end{itemize}
    
The second category includes noise-robust learning methods and transfer-friendly regularization techniques:
    \begin{itemize}
        \item \textbf{Confidence Penalty} \citep{pereyra2017regularizing}: discourages overconfident predictions to reduce overfitting to noisy labels (implemented at the token level in our setting);
        \item \textbf{Label Smoothing} \citep{lukasik2020does}: replaces one-hot targets with smoothed distributions to mitigate overconfidence and noise sensitivity (also applied at the token level);
        \item \textbf{Small-loss Selection} \citep{gui2021towards}: treats low-loss samples as clean and prioritizes them for training;
        \item \textbf{Random Selection}: randomly samples training instances as a control variant.
    \end{itemize}

\section{More Experiments}\label{apdix:more_experiment}
\paragraph{OLR performance on models of different sizes and types.}
We run OLR on Qwen3-8B-Base. The experiments in Table \ref{tab:8B_800} show that OLR also delivers performance gains, particularly on OOD benchmarks, where it achieves an average improvement of \textbf{4.4\%} over models trained without OLR across both noise scenarios. At the same time, the results shown in Table \ref{tab:llama_8B_800} demonstrate that OLR improves performance on Deepseek-R1-Distill-Llama-8B \citep{grattafiori2024llama} across both ID and OOD benchmarks. This clearly demonstrates the versatility of OLR.

\begin{table*}[!t]
\caption{Results on Qwen3-4B-Base under 50\% noise ratios with different slope threshold $\delta_{\text{slope}}$. \textbf{Bold} indicates the best.}
\label{tab:sen_slope}
\centering
\small
\resizebox{\textwidth}{!}{%
\begin{tabular}{lccccccccccc}
\toprule
\multirow{2.5}{*}{\textbf{Method}} 
& \multicolumn{7}{c}{\textbf{In-Distribution}} 
& \multicolumn{4}{c}{\textbf{Out-of-Distribution}} \\
\cmidrule(lr){2-8}
\cmidrule(lr){9-12}
& AIME24 & AIME25 & AMC & MATH-500 & Minerva & Olympiad 
& \phantom{0}\phantom{0}\textbf{Avg.}\phantom{0}\phantom{0}
& ARC-c & GPQA$^{\star}$ & MMLU-Pro &  \phantom{0}\phantom{0}\textbf{Avg.}\phantom{0}\phantom{0}\\
\midrule

\makecell[l]{Qwen3-4B-Base} 
& 9.6 & 4.2 & 34.2 & 52.6 & 19.5 & 28.4 & 24.7 
& 35.8 & 14.1 & 33.3 & 27.7\\

\midrule
\multicolumn{11}{l}{\emph{\quad \textbf{Active Noisy Label}}} \\

GRPO	&10.4	&8.8	&45.8	&74.6	&33.1	&40.4	&35.5		&73.9	&24.7	&48.6	&49.1 \\
OLR w/ $\delta_{\text{slope}} = 0.0$	&13.3	&9.2	&46.8	&76.4	&38.2	&40.7	&{37.4}		&85.4	&30.8	&56.8	&\textbf{57.7} \\
OLR w/ $\delta_{\text{slope}} = 0.025$ 	&10.0	&10.0	&43.1	&76.0	&34.2	&38.8	&{35.4}		&73.5	&27.3	&50.2	&{50.3} \\
OLR w/ $\delta_{\text{slope}} = 0.050$	&20.4	&15.4	&49.7	&81.4	&36.4	&48.1	&\textbf{41.9}		&78.5	&28.3	&54.8	&{53.9} \\
OLR w/ $\delta_{\text{slope}} = 0.075$	&12.5	&11.3	&45.8	&75.4	&36.2	&39.4	&{36.8}		&82.6	&31.3	&56.4	&{56.8} \\
OLR w/ $\delta_{\text{slope}} = 0.1$	&10.8	&8.8	&44.3	&76.4	&37.5	&41.8	&{36.6}		&80.0	&26.8	&54.1	&{53.6} \\
OLR w/ $\delta_{\text{slope}} = 0.2$	&9.6	&8.8	&46.1	&76.2	&40.1	&38.8	&{36.6}		&73.0	&26.8	&54.2	&{51.3}\\

\midrule
\multicolumn{11}{l}{\emph{\quad \textbf{Inactive Noisy Label}}} \\
GRPO	&12.5	&5.0	&45.3	&76.0	&34.9	&42.7	&36.1		&86.7	&28.8	&56.0	&57.2 \\
OLR w/ $\delta_{\text{slope}} = 0.0$	&12.9	&9.2	&48.0	&78.2	&40.1	&41.8	&{38.4}		&88.0	&28.3	&56.1	&{57.5} \\
OLR w/ $\delta_{\text{slope}} = 0.025$ &12.9	&15.0	&47.1	&79.8	&43.4	&43.1	&{40.2}		&85.6	&30.8	&56.6	&{57.7} \\
OLR w/ $\delta_{\text{slope}} = 0.050$ 	&21.7	&18.3	&53.9	&84.2	&42.3	&48.9	&\textbf{44.9}		&86.8	&38.4	&57.8	&\textbf{61.0} \\
OLR w/ $\delta_{\text{slope}} = 0.075$  &14.2	&9.6	&48.5	&79.2	&40.4	&44.4	&{39.4}		&82.6	&31.3	&56.4	&{56.8} \\
OLR w/ $\delta_{\text{slope}} = 0.1$ 	&10.8	&12.1	&48.3	&78.0	&39.7	&41.5	&{38.4}		&85.9	&29.3	&56.2	&{57.1} \\
OLR w/ $\delta_{\text{slope}} = 0.2$ 	&15.8	&5.4	&47.1	&77.8	&39.3	&40.7	&{37.7}		&85.8	&27.8	&54.9	&{56.2} \\

\bottomrule
\end{tabular}
}
\end{table*}

\begin{table*}[!t]
\caption{Results on Qwen3-4B-Base under 50\% noise ratios with different lengths of the early learning stage $T$. \textbf{Bold} indicates the best.}
\label{tab:sen_start}
\centering
\small
\resizebox{\textwidth}{!}{%
\begin{tabular}{lccccccccccc}
\toprule
\multirow{2.5}{*}{\textbf{Method}} 
& \multicolumn{7}{c}{\textbf{In-Distribution}} 
& \multicolumn{4}{c}{\textbf{Out-of-Distribution}} \\
\cmidrule(lr){2-8}
\cmidrule(lr){9-12}
& AIME24 & AIME25 & AMC & MATH-500 & Minerva & Olympiad 
& \phantom{0}\phantom{0}\textbf{Avg.}\phantom{0}\phantom{0}
& ARC-c & GPQA$^{\star}$ & MMLU-Pro &  \phantom{0}\phantom{0}\textbf{Avg.}\phantom{0}\phantom{0}\\
\midrule

\makecell[l]{Qwen3-4B-Base} 
& 9.6 & 4.2 & 34.2 & 52.6 & 19.5 & 28.4 & 24.7 
& 35.8 & 14.1 & 33.3 & 27.7\\

\midrule
\multicolumn{11}{l}{\emph{\quad \textbf{Active Noisy Label}}} \\

GRPO	&10.4	&8.8	&45.8	&74.6	&33.1	&40.4	&35.5		&73.9	&24.7	&48.6	&49.1 \\
OLR w/ $T=2$  &11.3	&9.6	&46.0	&75.8	&37.1	&42.5	&{37.1}		&83.4	&28.3	&55.4	&{55.7} \\
OLR w/ $T=4$	&20.4	&15.4	&49.7	&81.4	&36.4	&48.1	&\textbf{41.9}		&78.5	&28.3	&54.8	&{53.9}	\\
OLR w/ $T=6$	&9.6	&10.0	&45.9	&77.2	&39.7	&40.0	&{37.1}		&79.7	&26.8	&53.8	&{53.4}	\\
OLR w/ $T=8$	&12.1	&7.1	&43.7	&76.2	&37.5	&41.2	&{36.3}		&84.7	&31.3	&53.2	&\textbf{56.4}	\\

\midrule
\multicolumn{11}{l}{\emph{\quad \textbf{Inactive Noisy Label}}} \\
GRPO	&12.5	&5.0	&45.3	&76.0	&34.9	&42.7	&36.1		&86.7	&28.8	&56.0	&57.2 \\
OLR w/ $T=2$	&11.7	&13.3	&47.1	&79.4	&40.1	&42.1	&{39.0}		&87.7	&31.3	&57.3	&{58.8} \\
OLR w/ $T=4$	&21.7	&18.3	&53.9	&84.2	&42.3	&48.9	&\textbf{44.9}		&86.8	&38.4	&57.8	&\textbf{61.0} \\
OLR w/ $T=6$	&19.2	&17.9	&52.9	&85.0	&40.1	&49.0	&{44.0}		&82.3	&33.8	&57.9	&{58.0} \\
OLR w/ $T=8$	&12.1	&7.5	&47.1	&79.6	&41.5	&42.5	&{38.4}		&85.2	&33.8	&55.9	&{58.3} \\

\bottomrule
\end{tabular}
}
\end{table*}

\begin{table*}[!t]
\caption{Ablation results on Qwen3-4B-Base under 50\% noise ratio. \textbf{Bold} indicates the best.}
\label{tab:ablation}
\centering
\small
\resizebox{\textwidth}{!}{%
\begin{tabular}{lccccccccccc}
\toprule
\multirow{2.5}{*}{\textbf{Method}} 
& \multicolumn{7}{c}{\textbf{In-Distribution}} 
& \multicolumn{4}{c}{\textbf{Out-of-Distribution}} \\
\cmidrule(lr){2-8}
\cmidrule(lr){9-12}
& AIME24 & AIME25 & AMC & MATH-500 & Minerva & Olympiad 
& \phantom{0}\phantom{0}\textbf{Avg.}\phantom{0}\phantom{0}
& ARC-c & GPQA$^{\star}$ & MMLU-Pro &  \phantom{0}\phantom{0}\textbf{Avg.}\phantom{0}\phantom{0}\\
\midrule
\multicolumn{11}{l}{\emph{\quad \textbf{Active Noisy Label}}} \\

OLR	&20.4	&15.4	&49.7	&81.4	&36.4	&48.1	&\textbf{41.9}		&78.5	&28.3	&54.8	&\textbf{53.9} \\
- Positive Convergence Slope	&10.8	&7.9	&46.7	&79.0	&35.7	&42.1	&{37.0}		&73.9	&24.7	&55.5	&{51.4} \\
- Historical Consistency	&13.8	&10.4	&46.2	&77.2	&35.3	&40.4	&{37.2}	&78.5	&26.8	&54.5	&{53.3} \\

\midrule
\multicolumn{11}{l}{\emph{\quad \textbf{Inactive Noisy Label}}} \\
OLR	&21.7	&18.3	&53.9	&84.2	&42.3	&48.9	&\textbf{44.9}		&86.8	&38.4	&57.8	&\textbf{61.0} \\
- Positive Convergence Slope	&10.0	&10.8	&47.4	&79.6	&39.7	&42.1	&{38.3}		&86.4	&25.3	&56.2	&{56.0} \\
- Historical Consistency	&10.0	&6.7	&47.1	&78.0	&38.6	&41.0	&{36.9}		&85.1	&31.3	&54.0	&{56.8} \\

\bottomrule
\end{tabular}
}
\end{table*}

\paragraph{Detailed analysis on Early Correctness Coherence.} To verify the Early Correctness Coherence phenomenon, we conduct a detailed analysis of the model's predictions on clean and noisy samples under both noise scenarios (see Figure \ref{fig:assumption_verify}). The results substantiate the two key assumptions of Theorem \ref{thm:early-main-final}: (i) correct labeling probability is higher for clean samples than for noisy ones, reflecting the quality of pre-training; and (ii) even under noisy supervision, the probability of correct answers for noisy samples increases during early training. These empirical observations lay a solid foundation for the proposed OLR method.

\paragraph{How does OLR behave during the RLVR with inactive noisy labels?}
Figure \ref{fig:weak_train_dynamic} shows training dynamics under 50\% inactive noise. In (a), early learning lifts majority accuracy for both clean/noisy samples from 50\% to over 70\%. Accuracy then plateaus until OLR boosts it further to over 80\%, a \textbf{10\%} gain showing OLR builds on early learning. In (b), OLR-selected samples exceed \textbf{90\%} accuracy (approaching 100\%), while unselected ones stay below 80\%. (c) shows positive pass rate slopes throughout training for both sample types, indicating correct answers emerge with increasing probability on noisy samples via cross-sample coupling. (d) shows OLR selects over 40\% of both clean/noisy sets, avoiding bias toward simple samples. (e) shows both criteria together achieve near 100\% selection accuracy; removing either reduces accuracy by \textbf{20\%}, highlighting their indispensability. Finally, (f) shows OLR reduces inactive noise by \textbf{25\%} during refinement, demonstrating robustness.

\begin{table}[t]
\centering
\small
\renewcommand{\arraystretch}{1.2}
\resizebox{\columnwidth}{!}{%
\begin{tabular}{clll}
\toprule
\textbf{Phase} & \textbf{Operation} & \textbf{Compute Complexity} & \textbf{Time}\\
\midrule
RLVR Training & Policy rollout and GRPO update & $\mathcal{O}(N K L)$ & $\sim$3.8 h \\
OLR Update & Majority estimation + statistics update & $\mathcal{O}(N K)$ & negligible \\
\midrule
Total & RLVR + OLR & $\mathcal{O}(N K L)$ & $\sim$3.8 h \\
\bottomrule
\end{tabular}%
}
\caption{Training time and compute complexity of OLR with 800 samples. The additional overhead introduced by OLR is negligible compared with rollout generation.}
\label{tab:olr_time}
\end{table}

\paragraph{Time Complexity.}
Let $N$ denote the number of prompts, $K$ the number of rollouts per prompt, and $L$ the average rollout length.
The dominant cost in RLVR training is rollout generation, which requires $O(NKL)$ time per epoch due to autoregressive decoding.
OLR introduces two additional operations: majority-answer counting over $K$ rollouts and maintaining statistics for the convergence slope and historical consistency checks. Majority counting costs $O(K)$ per prompt, while the statistics can be updated incrementally in $O(1)$ time.
Therefore, the additional overhead of OLR is $O(NK)$ per epoch, which is negligible compared to the rollout cost $O(NKL)$. As shown in Table~\ref{tab:olr_time}, the empirical runtime is consistent with this analysis: training on 800 prompts takes about 3.8 hours, with negligible overhead from OLR.

\section{More Related Work}

\textbf{Noisy Label Learning} is a mature field, but its techniques rarely accommodate RLVR's open-ended generation space and on-policy dynamics.
In traditional classification scenarios, methods for mitigating noisy labels are abundant and typically fall into the following categories:

\textit{Noise Transition Matrix Estimation} \citep{xiao2015learning,chen2015webly,srivastava2014dropout,sukhbaatar2014training,patrini2017making,goldberger2017training,lee2019robust,yao2018deep,han2018masking,hendrycks2018using,collier2021correlated,bucarelli2023leveraging}: These approaches generally involve learning the flipping probabilities between true and noisy labels, mimicking the noise process. However, this is unsuitable for RLVR scenarios because noisy labels in such contexts often form an open set without inherent class boundaries, making it difficult to derive a noise transition matrix.

\textit{Loss Correction} \citep{goodfellow2014explaining,pereyra2017regularizing,zhang2017mixup,menon2020can,hendrycks2019using,tanno2019learning,jenni2018deep,zhang2021delving,wei2021open,lukasik2020does,xia2020robust,qu2021dat,cheng2021mitigating}: These methods typically employ regularized algorithms to reduce model overfitting to noisy labels. While primarily designed for classification tasks, RLVR involves generative tasks, which limits the direct applicability of most existing methods.

\textit{Small-loss-based Sample Selection} \citep{malach2017decoupling,wang2018iterative,han2018co,jiang2018mentornet,huang2019o2u,chen2019understanding,shen2019learning,yu2019does,wu2020topological,wei2020combating,wu2021ngc,song2021robust}: These approaches identify samples with low training loss as potentially clean, exploiting the phenomenon of early memorization. However, in RLVR, small-loss samples are not necessarily beneficial. For instance, when a model's rollout is either entirely correct or entirely incorrect for a sample, its loss is zero. Yet such samples offer no training value and should be avoided \citep{zhan2025exgrpo}.

\textit{Semi-supervised Learning-based Methods} \citep{berthelot2019mixmatch,song2019selfie,nguyen2019self,zhou2020robust,li2020dividemix,kim2021joint,zhang2021learning,bai2021understanding,gui2021towards,kim2021fine,ortego2021multi,yao2021jo,wang2022scalable,karim2022unicon,li2022selective,chen2023softmatch,li2023disc,feng2023ot,huang2023twin}: While abundant in traditional classification, semi-supervised methods are task-specific and scarce in RLVR, where pre-selecting clean samples from noisy data remains an unsolved challenge that prevents direct transfer.

\section{Pseudo Code}
We provide the pseudo code \ref{alg:olr}.

\begin{algorithm*}[t]
\caption{Online Label Refinement (OLR)}
\label{alg:olr}
\begin{algorithmic}[1]
\Require Dataset $D = \{(x, \tilde{y}(x))\}$, policy $\pi_\theta$, 
rollouts per prompt $K$, early learning epoch $T$, slope threshold $\delta_{\text{slope}}$
\State Initialize rollout history $ \mathcal{H}[x] \gets \emptyset $ for all $x \in D$
\For{training step $t = 1$ to MaxSteps}
    \For{each prompt $x \in D$}
        \State Generate $K$ rollouts: $\mathcal{Y}_t(x) \sim \pi_\theta(\cdot \mid x)$
        \State Compute majority answer:
        \[
            y^{\text{maj}}_t(x) = \arg\max_c |\{y \in \mathcal{Y}_t(x) : y = c\}|
        \]
        \State Compute pass rate:
        \[
            p^{\text{maj}}_t(x) = \frac{1}{K} |\{y \in \mathcal{Y}_t(x) : y = y^{\text{maj}}_t(x)\}|
        \]
        \State Update history: $\mathcal{H}[x] \gets \mathcal{H}[x] \cup \{(t, y^{\text{maj}}_t(x), p^{\text{maj}}_t(x))\}$
        \If{$t \le T$}
            \State $ \hat{y}_t(x) \gets \tilde{y}(x)$
            \State \textbf{continue}
        \EndIf
        \State Compute slope $S_t(x)$ of pass-rate trajectory via linear regression
        \State Determine historical majority:
        \[
            y^{\text{hist}}_t(x) = \arg\max_y |\{y^{\text{maj}}_{t'}(x) \in \mathcal{H}[x] : y^{\text{maj}}_{t'}(x) = y \}|
        \]
        \If{$S_t(x) > \delta_{\text{slope}}$ \textbf{and} $y^{\text{maj}}_t(x) = y^{\text{hist}}_t(x)$}
            \State $\hat{y}_t(x) \gets y^{\text{maj}}_t(x)$
        \Else
            \State $\hat{y}_t(x) \gets \tilde{y}(x)$
        \EndIf
    \EndFor
    \State Update policy $\pi_\theta$ with GRPO using $\{(x, \hat{y}_t(x))\}$
\EndFor
\end{algorithmic}
\end{algorithm*}